\theoremstyle{plain}
\newtheorem{theorem}{Theorem}[section]
\newtheorem*{theorem*}{Theorem}
\newtheorem{lemma}[theorem]{Lemma}
\newtheorem*{lemma*}{Lemma}
\newtheorem{corollary}[theorem]{Corollary}
\newtheorem*{corollary*}{Corollary}
\theoremstyle{definition}
\newtheorem{definition}[theorem]{Definition}
\newtheorem{assumption}[theorem]{Assumption}
\newtheorem*{assumption*}{Assumption}
\newtheorem{example}{Example}
\theoremstyle{remark}
\definecolor{darkgreen}{RGB}{0,150,0}
\newcommand{\bx}{\mathbf{x}}
\newcommand{\hbx}{\hat{\mathbf{x}}}
\newcommand{\bh}{\mathbf{h}}
\newcommand{\upd}{a}
\newcommand{\nf}{b}
\newcommand{\bg}{\mathbf{g}}
\newcommand{\bK}{\mathbf{K}}
\newcommand{\bF}{\mathbf{F}}
\newcommand{\mbf}{\mathbf{f}}
\newcommand{\vecX}{\mathbf{x}}
\newcommand{\vecA}{\mathbf{a}}
\newcommand{\vecD}{\mathbf{d}}
\newcommand{\cH}{\mathcal{H}}
\newcommand{\EE}{\mathbb{E}}
\newcommand{\RR}{\mathbb{R}}
\newcommand{\boldminus}{\boldsymbol{\textcolor{red}{-}}}
\newcommand{\step}{\delta}
\newcommand{\dev}{\xi}
\newcommand{\upsigma}{\bar{\sigma}}
\newcommand{\norm}[1]{\left\lVert#1\right\rVert}
\newcommand{\xmark}{\ding{55}}%
\title{DeNOTS: Stable Deep Neural ODEs for Time Series}
\author{%
  Ilya Kuleshov\thanks{Corresponding author.} \\
  Applied AI Institute \\
  Moscow, Russia \\
  \And
  Evgenia Romanenkova \\
  Applied AI Institute \\
  Moscow, Russia \\
  \And
  Vladislav Zhuzhel \\
  Applied AI Institute \\
  Moscow, Russia \\
  \And
  Galina Boeva \\
  Applied AI Institute \\
  Moscow, Russia \\
  \And
  Evgeni Vorsin \\
  Innotech \\
  Moscow, Russia \\
  \And
  Alexey Zaytsev \\
  Applied AI Institute \\
  Moscow, Russia \\
}
\begin{document}

\maketitle

\begin{abstract}

Neural CDEs provide a natural way to process the temporal evolution of irregular time series.
The number of function evaluations (NFE) is these systems' natural analog of depth (the number of layers in traditional neural networks).
It is usually regulated via solver error tolerance: lower tolerance means higher numerical precision, requiring more integration steps.
However, lowering tolerances does not adequately increase the models' expressiveness.
We propose a simple yet effective alternative: scaling the integration time horizon to increase NFEs and "deepen`` the model. 
Increasing the integration interval causes uncontrollable growth in conventional vector fields, so we also propose a way to stabilize the dynamics via Negative Feedback (NF).
It ensures provable stability without constraining flexibility.
It also implies robustness: we provide theoretical bounds for Neural ODE risk using Gaussian process theory. 
Experiments on four open datasets demonstrate that our method, DeNOTS, outperforms existing approaches~---~including recent Neural RDEs and state space models,~---~achieving up to $20\%$ improvement in metrics. 
DeNOTS combines expressiveness, stability, and robustness, enabling reliable modelling in continuous-time domains.


\end{abstract}

\section{Introduction}



Neural Controlled Differential Equations (CDEs)~\cite{kidger2020neural} provide a natural way to process irregular time series. 
CDEs are Ordinary Differential Equations (ODEs), where the derivative depends on an external input signal.
Neural CDEs utilize a Neural Network (NN) as the ODE's dynamics function.
It is well-known that increasing NN depth (number of layers) leads to higher expressiveness~\cite{gripenberg2003approximation,lu2017expressive,yarotsky2017error}, i.e., widens the class of functions the NN may represent~\cite{guhring2020expressivity}. 
According to the original Neural ODE paper~\cite{chen2018neural}, the natural analogue of NN depth is the number of function evaluations (NFE). Naturally, we hypothesise that a larger NFE results in a better Neural CDE model. 

NFE is primarily controlled by the solver tolerance, which defines the acceptable error level during numerical integration~\cite{dormand1980family}. 
Lowering this tolerance increases the required number of integration steps and NFE.
However, prior work mostly sidesteps this topic: the expressiveness gains from higher precision are often minimal in practice. 
The problem here is that, as we will show, boosting expressiveness on a fixed integration interval necessitates larger $l_2$ weight norms, which harms training stability. 
Instead, we propose scaling the integration time. This method improves expressiveness while reducing the required weight norms. We refer to it as Scaled Neural CDE (SNCDE).

Upon investigating the proposed time-scaling procedure, we found that longer integration intervals can introduce uncontrollable trajectory growth, which must also be addressed.
A very intuitive approach to constraining the trajectory is adding Negative Feedback (NF).
Prior work implemented it by subtracting the current hidden state from the dynamics function~\cite{de2019gru}.
However, as we demonstrate, such a technique causes "forgetfulness``: the influence of older states constantly decays, and the model cannot retain important knowledge throughout the sequence. 
This effect is akin to the one experienced by classic Recurrent Neural Networks (RNN), remedied by Long Short-Term Memory (LSTM)~\cite{hochreiter1997long} and Gated Recurrent Units (GRU)~\cite{chung2014empirical}.
We demonstrate that our novel NF does not suffer from "forgetfulness``.

Neural differential equation-based models possess another crucial property: we cannot calculate the solution perfectly, forcing us to approximate the true continuous process via discrete steps.
We have to discretise the time series, so the models cannot analyse the system during the resulting inter-observation gaps, accumulating epistemic uncertainty in the final representation~\cite{BLASCO2024127339}.
Estimating such uncertainty has been a topic of interest for decades~\cite{golubev2013interpolation}.
Differential equation solvers also introduce discretisation errors, since they simulate a difference equation instead of the continuous differential one~\cite{hairer1993solving}.
Without additional modifications, the error piles up with time, and the theoretical variance of the final prediction becomes proportional to sequence length.

To overcome the aforementioned challenges, this work introduces DeNOTS --- Stable Deep Neural ODEs for Time Series.
Instead of lowering tolerances, we scale the time interval.
Larger time frames destabilise integration, so we include a novel Negative Feedback (NF) mechanism. 
Our NF enables stability and better expressiveness while maintaining long-term memory.
Moreover, we prove that the discretisation uncertainty does not accumulate in DeNOTS's final hidden state: it is~$\sim \mathcal{O}(1)$ w.r.t. the number of observations.
Figure~\ref{fig:pendulum-nfecor-demo} compares various approaches to increasing NFEs for expressivity on a synthetic dataset.

Overall, our main contributions are as follows:
\begin{itemize}
    \item The idea of deliberately scaling integration time to improve the model's representation power. 
    We show that lowering tolerances for expressiveness does not adequately address theoretical limitations, providing minimal gains. 
    SNCDE naturally circumvents these limitations, thus enhancing the representation power of the model and significantly boosting metrics.
    \item The antisynchronous NF mechanism (updates and NF are activated in anti-phase), which stabilises hidden trajectories on larger time scales but keeps the model flexible in practice.
    All alternatives have crucial flaws, as summarized in Table~\ref{tab:compare_methods}: non-stable dynamics are challenging to train, and synchronous NF is to restricitve, and tends to "forget`` important information. 
    \item Theoretical analysis of the effect of discretisation on our model, certifying that epistemic uncertainty does not accumulate in our prediction.
    We also provide a tight analytical bound for the interpolation error of natural cubic splines.
    \item A quantitative experimental study, comparing DeNOTS to modern baselines on four open datasets.
    We demonstrate that our method excels in all settings.
\end{itemize}

In Section~\ref{sec:method} we introduce the general framework of our method.
Sections~\ref{sec:time},~\ref{sec:negfbk} discuss the main contributions of our approach in detail.
Section~\ref{sec:exps} provides our experimental results.
Section~\ref{sec:conclusion} sums up our paper.
We moved the Related Works section to Appendix~\ref{ap:related_works} to save space.

\begin{table}[t]
    \centering
    \begin{tabular}{cccc}
    \toprule
        & Stability & Error bounds & Long-term memory \\
    
    SNCDE Vector Field & (Th.~\ref{th:stability}/Fig.~\ref{fig:trajectory-norms}) & (Th.~\ref{th:gperror}/Tab.~\ref{tab:attack}) & (Th.~\ref{th:forgetfulness}/Tab.~\ref{tab:sinemix}) \\\midrule
    Unstable (No NF/Tanh/ReLU)     & No & No & \textcolor{darkgreen}{Yes}\\ 
    Sync-NF (orig. GRU-ODE) & \textcolor{darkgreen}{Yes} & \textcolor{darkgreen}{Yes} & No \\
    DeNOTS (Anti-NF, ours)   & \textcolor{darkgreen}{Yes} & \textcolor{darkgreen}{Yes} & \textcolor{darkgreen}{Yes} \\
    \hline
    \end{tabular}
    \caption{Properties comparison for SNCDEs with different vector fields.}
    \label{tab:compare_methods}
\end{table}

\begin{figure}
    \centering
    \fcapside[\FBwidth]{
    \includegraphics[width=250pt]{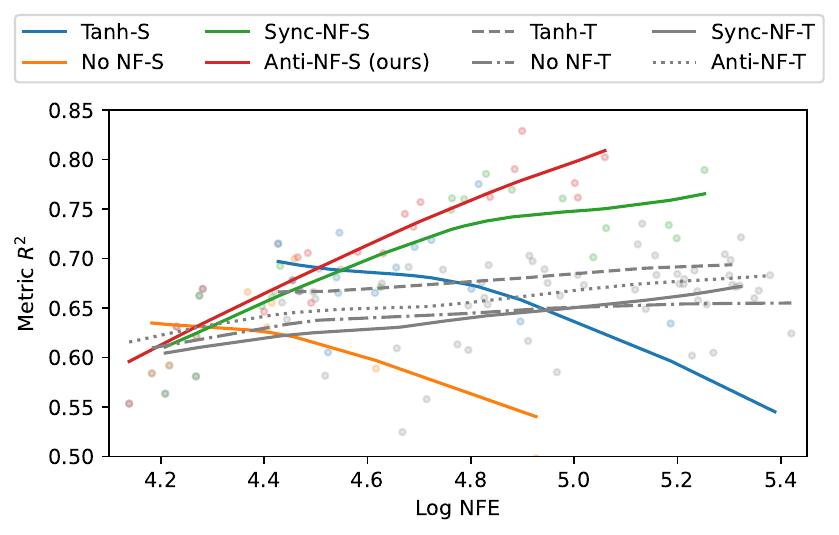}}
    {\caption{$R^2$ vs Log-NFEs on the Pendulum dataset for various methods of increasing NFEs: -T for lowering tolerance, -S for increasing time scale; for various vector fields (VF): Tanh~---~MLP with $\tanh$ activation; No NF~---~vanilla GRU VF, Sync NF~---~GRU-ODE VF, Anti NF~---~our version. The curves were drawn via Radial Basis Function interpolation.}
    \label{fig:pendulum-nfecor-demo}}
\end{figure}

\section{Method}
\label{sec:method}

\paragraph{Representation learning for non-uniform time series.}
We focus on solving downstream tasks for time series with global, sequence-wise targets (binary/multiclass/regression).
Let~\mbox{$S = \{(t_k, \bx_k)\}_{k = 1}^n$} be the analysed sequence, where 
$t_1 < t_2 < \ldots < t_n \in \RR$ are the time stamps and $\bx_k \in \mathbb{R}^u$ are the feature-vectors of the corresponding observations.
The task is to predict the correct target from~$ S$.

For convenience, we assume~$t_1 = 0$, and denote~$t_n \triangleq T$.
The sequence~$S$ is passed through the backbone to obtain an embedding~$\bh \in \RR^v$, which, supposedly, characterises~$S$ as a whole.
Finally, a linear head is used to transform~$\bh$ into the prediction~$\hat{y}$.


\paragraph{Neural CDEs.} 
Neural CDEs provide an elegant way to deal with non-uniform time series.
Formally, these methods integrate the following Cauchy problem for specific initial conditions $\bh_0$, vector field~(VF)~$\bg_\theta$, and an interpolation of the input sequence~$\hbx(t)$:
\begin{equation} \label{eq:ncde_cauchy}
\text{Dynamics:} 
    \begin{cases}
        \bh(0) = \bh_0;                                          \\
        \frac{d\bh(t)}{dt} = \bg_\theta(\hbx(t), \bh(t)). \\
    \end{cases}\hfill
\text{Solution:} \,\, \bh(t) : [0, T] \rightarrow \RR^v.
\end{equation}
The output of our backbone is~$\bh \triangleq \bh(T)$.
We use a modified GRU cell~\cite{chung2014empirical} for $\bg_\theta$, and, following~\cite{kidger2020neural}, interpolate the input data using natural cubic splines.

\begin{wrapfigure}[12]{r}{0.5\linewidth}
    \centering
    \includegraphics[width=\linewidth]{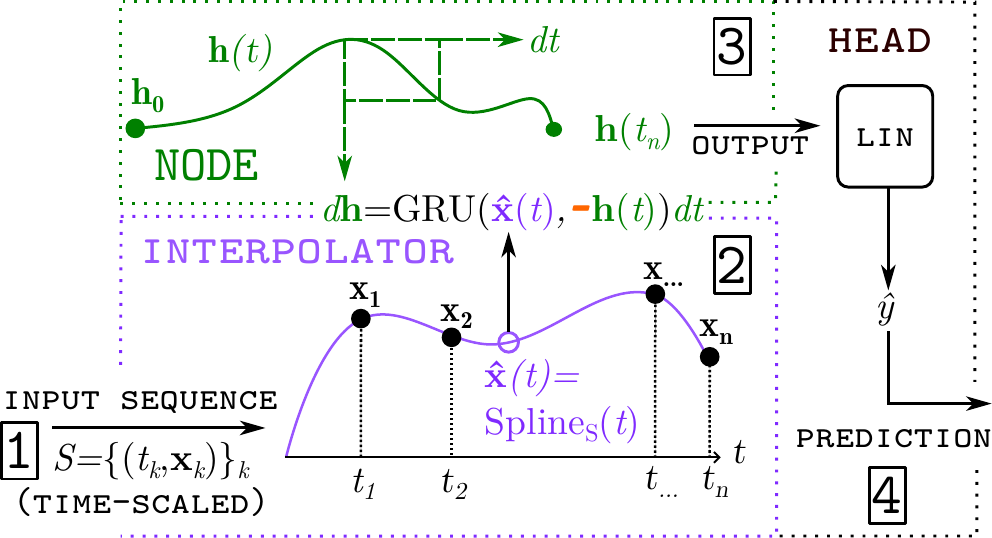}
    \caption{Scheme of the proposed DeNOTS method.
    The large red minus sign~($\boldminus$) represents our NF.
    }
    \label{fig:ncde_interp_scheme}
\end{wrapfigure}

\paragraph{Our DeNOTS approach.}
The scheme of the DeNOTS method is presented in Figure~\ref{fig:ncde_interp_scheme}.
It consists of the following steps:
\begin{enumerate}
    \item Preprocess the input sequence with \textbf{time scaling}: $t_k \gets \frac{D}{M} t_k$, where $D, M$ are hyperparameters.
    \item Interpolate the multivariate features $\{\bx_k\}_k$ to get $\hat{\bx}(t), t \in [0, T]$ using cubic splines~\cite{kidger2020neural}.
    \item Integrate the Neural ODE with \textbf{Negative Feedback} (Anti-NF) starting from $\mathbf{h}_0$ over $\hat{\bx}(t)$ to get $\bh \triangleq \bh(T)$.
    \item Pass the final hidden state~$\bh$ through the linear layer to get the prediction~$\hat{y}$.
\end{enumerate}
Our contributions revolve around two key ideas: time-scaling and a novel anti-phase negative feedback (Anti-NF) mechanism, which jointly address the limitations of existing Neural CDE approaches. In the following sections, we present a detailed description of the DeNOTS framework and the corresponding theoretical analysis supplemented with proofs in Appendix~\ref{ap:theory}. To aid clarity, we illustrate each theoretical result with numerical experiments.

\section{Time Scaling}
\label{sec:time}
As mentioned in the introduction, we argue that scaling time benefits expressiveness.
By expressiveness, we mean the broadness of the class of functions that our network can represent.
Now, we introduce several definitions.

\paragraph{Differentially definable mappings from trajectories $\mathcal{F}$.} We say that $F_ \mathbf{g} \in \mathcal{F}$, if there exists an ODE with the vector field $\mathbf{g}(\mathbf{x}(t), \mathbf{h}(t)): \mathbb{R}^u \times \mathbb{R}^v \rightarrow \mathbb{R}^v$, with the starting condition $\mathbf{h}(0) = \mathbf{h}_ 0$ and the solution $\mathbf{h}(t): \mathbb{R} \rightarrow \mathbb{R}^v$: $
\frac{d}{dt} \mathbf{h}(t) = \mathbf{g}(\mathbf{h}(t), \mathbf{x}(t)),
$
such that $F_ \mathbf{g}(\mathbf{x}(\cdot), \mathbf{h}_ 0; t) = \mathbf{h}(t)$. Not all functions on trajectories can be represented this way.
 
\paragraph{Start-corrected differentially-definable mappings,} i.e. the elements of $\mathcal{F}$, minus their initial state $\bh_0$: 
$
\mathring{\mathcal{F}} = \{\mathring{F}_ \mathbf{g}(\mathbf{x}(\cdot), \mathbf{h}_ 0; t) = F_\mathbf{g}(\mathbf{x}(\cdot), \mathbf{h}_ 0; t) - \mathbf{h}_ 0 | F_\mathbf{g} \in \mathcal{F} \}.
$
Since we set $\mathbf{h}_0=0$ in most practical cases, the distinction is rather technical. This is mostly to account for Lipschitzness w.r.t. $\mathbf{h}$, as we will see below.

\paragraph{Lipschitz constraints.} We introduce two ways to constrain this class with Lipschitz constants: with $\mathbf{g}$ being $M_x, M_h$-Lipschitz
$$
\mathring{\mathcal{F}}_ g(M_ x, M_ h) = \{\mathring{F}_ \mathbf{g} \in \mathring{\mathcal{F}} | \mathbf{g}-M_ x,M_ h \text{-lipschitz w.r.t. } \mathbf{x}, \mathbf{h}\},
$$
and with $\mathring{F}_ \mathbf{g}(\ldots; t)$ being $L_ x(t), L_ h(t)$-lipschitz, where $L_ x(t), L_ h(t):\mathbb{R} \rightarrow \mathbb{R}_ +$:
$$
    \mathring{\mathcal{F}}_ F(L_x(\cdot), L_h(\cdot)) = \{\mathring{F}_ \mathbf{g} \in \mathring{\mathcal{F}} | \mathring{F}_ \mathbf{g}-L_x(t), L_h(t)\text{-lipschitz w.r.t. } \mathbf{x}, \mathbf{h}_ 0\}.
$$
Note, that $\mathring{\mathcal{F}}_g(M_x, M_h)$ represents all mappings that our ODE can learn, given a Lipschitz-constrained Neural Network $\mathbf{g}$, while $\mathring{\mathcal{F}}_F(L_x(\cdot), L_h(\cdot))$ represents all the possible Lipschitz-mappings that we may want to learn.

Equipped with the above definitions, we can formulate the following:

\begin{theorem}[Expressiveness] \label{th:time_constraints} The classes $\mathring{\mathcal{F}}_g(M_x, M_h)$ and $\mathring{\mathcal{F}}_F(L_x(\cdot), L_h(\cdot))$ are equal, given the following relations between their arguments:
$$
L_x(t) = M_x \sqrt{\frac{1}{2 M_h}\left(e^{2M_h t} - 1\right)}; L_h(t) = e^{M_h t} - 1.
$$
\end{theorem}

As a corollary, given an $M_x,M_h$-Lipschitz NN $\mathbf{g}$, we can represent all the $L_x,L_h$-Lipschitz (differentially-definable, start-corrected) $F$, and only them. 
Consequently, one way to boost expressivity is to increase~$ M_h$, corresponding to the weights' norms.
Larger weight norms induce saturation for bounded activations such as tanh or sigmoid~\cite{ven2021regularization} or "dying`` for unbounded activations such as ReLU~\cite{lu2019dying}, destabilising training in both cases.
Alternatively, scaling $T$ makes the model exponentially more expressive without increasing the magnitude of weights.
We refer to Neural CDEs with time scaling as \textit{"Scaled Neural CDEs``} (\textbf{SNCDEs}).
Formally, time scaling means we set $t_k \gets \frac{D}{M}t_k$, where~$D > 0$ is a hyperparameter and~$M > 0$ is a dataset-specific normalising constant, introduced to make values of~$D$ comparable across benchmarks. 

According to Theorem~\ref{th:time_constraints}, if we fix~$L_F$ to the desired value, a larger~$D$ allows for a lower~$M_h$.
Figure~\ref{fig:pend_l2} illustrates that increasing~$D$ indeed lowers the $l_2$ norms of weights. 
Furthermore, we empirically validate the stabilising benefits of time scaling for a synthetic task of classifying 1D functions into bumps or constant-zero ones. 
As demonstrated by Table~\ref{tab:bump}, models with larger time scales perform better than ones with lower tolerances.
Further experiments on this topic are in Section~\ref{sec:exps} and Appendix~\ref{ap:nfe-cor}.

\begin{figure}
\begin{floatrow}[2]
    \CenterFloatBoxes
    \ttabbox[\FBwidth]{
    \caption{
    AUROC for Bump synthetic dataset results for two vector fields: Tanh-activated and ReLU-activated MLPs.
    The columns represent different ways to increase NFE: "Default`` -- with no modifications, "Tolerance`` -- for lowered tolerances, and "Scale`` -- for increased time scale.
    }
    \label{tab:bump}
    }{

\begin{tabular}{lccc}
\toprule
VF   & Default & Tolerance  & Scale     \\ 
\midrule\addlinespace[2.5pt]
Tanh & $0.77 \pm 0.02$  & $0.90 \pm 0.01$  & $\mathbf{0.99 \pm 0.00}$    \\
ReLU & $0.89 \pm 0.01$  & $0.85 \pm 0.01$ & $\mathbf{0.91 \pm 0.05}$ \\
\bottomrule
\end{tabular}
}
    \ffigbox[\Xhsize]{\includegraphics[width=0.8\linewidth]{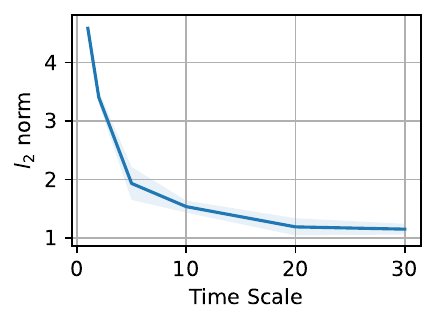}}{
    \caption{$l_2$ norms of weights vs time scale $D$ for the $\tanh$ vector field on the Pendulum dataset.}
    \label{fig:pend_l2}
    }
\end{floatrow}
\end{figure}

\section{Negative Feedback}
\label{sec:negfbk}

DeNOTS also modifies the form of the dynamics function: it carefully pushes the hidden trajectory towards zero by including \emph{negative feedback} in the derivative.
NF provides numerous benefits, which we motivate in theory and with practical experiments on synthetic data, as summarized in Table~\ref{tab:compare_methods}.

We model NF by the following simplified differential equation, a specific case within the dynamics~\eqref{eq:ncde_cauchy}:
\begin{equation} \label{eq:theory_ode_model}
    \frac{d\bh(t)}{dt} = \mathbf{g}_\theta(\hbx(t), \bh(t)) = \upd \mbf_\theta (\hbx(t), \bh(t)) - \nf \bh(t),
    \upd, \nf \in \RR.
\end{equation}
The values of~$\upd$ and~$\nf$ determine the relative magnitude of the NF term and the overall scale of the derivative. For clarity, our theoretical analysis focuses on scalar-valued $\upd$ and $\nf$, which suffice to demonstrate the key properties. 
Extending the analysis to vector-valued parameters introduces substantial technical complexity without offering additional insight.


To conduct our analysis, we need to constrain the function~$\mbf_\theta$ and the values~$\upd,\nf, L_h$:
\begin{assumption} \label{as:ode-function}
    (1) The function $\mbf_\theta$ from~\eqref{eq:theory_ode_model} is Lipschitz w.r.t. both~$\hbx$ and~$\bh$ with constants~$L_x, L_h$ correspondingly, and (2) it is equal to zero for zero vectors $\mbf_\theta(\mathbf{0}_u, \mathbf{0}_v) = \mathbf{0}_v$.
\end{assumption}

\begin{assumption} \label{as:nf-strength}
    The values of $\upd, \nf, L_h$ are constrained by the following:
    \begin{equation}\label{eq:strong_nf}
        \upd \in (0, 1), \nf \in (0, 1), L_h \in (0, 1), \upd L_h < \nf.
    \end{equation}
\end{assumption}

In practice, the right-hand side of~\ref{eq:theory_ode_model} is a slightly modified version of the classic GRU architecture~\cite{chung2014empirical}.
Specifically, we work with the last step in the GRU block, which is:
\begin{equation} \label{eq:gru}
    \tilde{\mathbf{h}} = \mathrm{GRU}(\hbx, \bh(t)) = (1 - \mathbf{z}) \odot \mathbf{n} + \mathbf{z} \odot \bh,
\end{equation}
where~$\mathbf{z}$ is the output of a sigmoid-RNN layer, $\mathbf{n}$ is the output of a tanh-RNN layer from previous steps of the GRU, and $\tilde{\mathbf{h}}$ is the final output of the GRU layer.
The full specification of the GRU architecture is located in Appendix~\ref{ap:backbones}.

We consider three NF options, with their names, derivation from the GRU~\eqref{eq:gru}, and the corresponding vector fields provided in columns 1-3 of Table~\ref{tab:nf-options}.
Rewriting each of these variants in terms of~\eqref{eq:theory_ode_model}, we get three individual sets of constraints for~$\upd,\nf$, presented in the fourth column of Table~\ref{tab:nf-options}.
Next, we substitute these constraints into~\eqref{eq:strong_nf} in the last column of Table~\ref{tab:nf-options}.
We analyse whether Assumption~\ref{as:nf-strength} stands in practice for Sync-NF and Anti-NF in Appendix~\ref{ap:nf-strength}.
Each considered variant of SNCDE with NF is described in detail below.

\paragraph{No NF.}
The GRU model is used as-is for the vector field of Neural CDE\footnote{From here on: GRU refers to the recurrent network; the SNCDE with GRU as the dynamics is "No NF``.}.
The combination of~\eqref{eq:gru} with~\eqref{eq:theory_ode_model} implies~$\nf < 0$, i.e. \textit{positive feedback}.
Assumption~\ref{as:nf-strength} never stands for No-NF, since $b < 0$.
We later show that such a vector field severely destabilises the trajectory on larger values of~$D$ (see Section~\ref{sec:theory_stability}).

\paragraph{Sync-NF.} The GRU-ODE version, proposed in~\cite{de2019gru}, which subtracts~$\bh$ from~\eqref{eq:gru}.
Here, both the update term~$\mathbf{n}$ and the NF term~$-\bh$ are activated simultaneously, with~$\mathbf{z}$ affecting only the overall scale of the derivative, so we refer to this version as \textit{Sync-NF}. 
The $l_2$ norms of the weights are smaller on longer integration intervals (as we concluded in the previous section), so Assumption~\ref{as:nf-strength}  always stands for Sync-NF. 
The Sync-NF version without time scaling ($D = 1$) is equivalent to GRU-ODE\footnote{From here on: GRU-ODE refers to the corresponding NCDE without time scaling ($D=1$).}~\cite{de2019gru}.
As its authors prove, the corresponding trajectory is constrained to~$[-1, +1]$.
Moreover, it does not regulate the scale of NF relative to the update, implying that, intuitively, the model may "forget`` prior knowledge (see Section~\ref{sec:forgetting}).
\paragraph{Anti-NF (DeNOTS, ours).} We pass~$-\bh$ instead of~$\bh$ to the GRU layer.
In our version, the NF and update terms are activated \emph{separately}, with~$\mathbf{z}$ regulating the scale of NF relative to the update, so we refer to this version as \textit{Anti-NF}.
The validity of~\eqref{eq:strong_nf} for Anti-NF depends on the value of~$a$.
Consequently, our model can disable the "forgetting`` effect by increasing~$a$, which is learnable and even adaptive, since it corresponds to~$\mathbf{z}$ from the GRU~\eqref{eq:gru}.
Setting $\upd = 1,\nf=0$ lifts the NF constraint altogether, turning~\eqref{eq:theory_ode_model} into a generic Neural CDE: $\frac{d\bh}{dt} = \mbf_\theta$.


\begin{table}[tb]
    \centering
    \caption{Various options of GRU-based NF, with their derivations from~\eqref{eq:gru} (col. 2), the corresponding vector fields (col.3), $\upd,\nf$ constraints in terms of~\eqref{eq:theory_ode_model} (col. 4), and the resulting form of Assumption~\ref{as:nf-strength}.}
    \label{tab:nf-options}
    \begin{tabular}{lllll}
        \toprule
        Name    & Derivation from~\eqref{eq:gru} & Vector Field, $\bg_\theta$ & $\upd, \nf$ constr. & Assumption~\ref{as:nf-strength}\\\midrule
        No NF   & $\mathrm{GRU}(\hbx(t), \bh(t))$ & $(1 - \mathbf{z}) \odot \mathbf{n} + \mathbf{z} \odot \bh$ & $a-b=1$ & (\xmark) \\
        Sync-NF & $\mathrm{GRU}(\hbx(t), \bh(t)) - \bh(t)$ & $(1 - \mathbf{z}) \odot (\mathbf{n} - \mathbf{h})$ & $a - b = 0$ & $ (\checkmark), L_h < 1$ \\
        Anti-NF & $\mathrm{GRU}(\hbx(t), -\bh(t))$ & $(1 - \mathbf{z}) \odot \mathbf{n} - \mathbf{z} \odot \bh$ & $a+b = 1$ & ($\sim$), $a L_h < 1 - a$ \\\bottomrule
    \end{tabular}
\end{table}

\subsection{Stability} \label{sec:theory_stability}
A crucial issue to consider when working with ODEs is stability analysis~\cite{boyce1969elementary,oh2024stable}.
On longer time frames, unstable ODEs may have unbounded trajectories even with reasonable weight norms, which is detrimental to training due to saturation~\cite{ioffe2015batch} or "dying``~\cite{lu2019dying} of activations.
The equation~\eqref{eq:ncde_cauchy} depends on an input signal~$\hbx(t)$, so we approach stability analysis in terms of control systems theory~\cite{sontag1995characterizations}:
\begin{definition}
    We call the ODE~\eqref{eq:ncde_cauchy} \textit{input-to-state stable}, if there exist two continuous functions: an increasing $\gamma(\cdot)$ with $\gamma(0) = 0$, and $\beta(\cdot, \cdot)$, increasing in the first argument, $\beta(0, \cdot) = 0$ and  strictly decreasing in the second argument, $\beta(\cdot, t) \underset{t \rightarrow \infty}{\rightarrow} 0$, such that:
        \begin{equation} \label{eq:iss_def}
        \|\bh(t)\|_2 \le \beta(\|\bh_0\|_2, t) + \gamma(\|\hbx\|_\infty). 
        \end{equation}
\end{definition}

Intuitively, equation~\eqref{eq:iss_def} means that for any bounded~$\hbx$ the trajectory remains bounded and stable.

\begin{theorem} \label{th:stability}
        The ODE~\eqref{eq:theory_ode_model} under assumptions~\ref{as:ode-function},~\ref{as:nf-strength} is input-to-state stable.
\end{theorem}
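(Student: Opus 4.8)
The plan is to run a standard Lyapunov/Gr\"onwall argument using the squared Euclidean norm as the storage function and to read off the comparison functions $\beta,\gamma$ required by the ISS definition~\eqref{eq:iss_def} from the resulting linear differential inequality. Since $\bg_\theta$ is globally Lipschitz in $\bh$ (Assumption~\ref{as:ode-function} plus the linear $-\nf\bh$ term) and $\hbx(\cdot)$ is continuous (a cubic spline), Picard--Lindel\"of gives a unique solution $\bh(\cdot)$ on all of $[0,T]$, so $V(t)\triangleq\tfrac12\norm{\bh(t)}_2^2$ is well-defined and absolutely continuous, and we may differentiate along trajectories.

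Differentiating, $\dot V = \langle \bh,\, \upd\mbf_\theta(\hbx(t),\bh) - \nf\bh\rangle = \upd\langle \bh, \mbf_\theta(\hbx(t),\bh)\rangle - \nf\norm{\bh}_2^2$. I would bound the cross term by Cauchy--Schwarz together with an ``anchored'' Lipschitz estimate: splitting through $\mbf_\theta(\mathbf{0}_u,\bh)$ and invoking part (2) of Assumption~\ref{as:ode-function}, $\norm{\mbf_\theta(\hbx(t),\bh)}_2 \le L_x\norm{\hbx(t)}_2 + L_h\norm{\bh}_2 \le L_x\norm{\hbx}_\infty + L_h\norm{\bh}_2$. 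This yields
\[
    \dot V \;\le\; \upd L_x\norm{\hbx}_\infty\,\norm{\bh}_2 \;-\; (\nf - \upd L_h)\,\norm{\bh}_2^2 .
\]
Here Assumption~\ref{as:nf-strength} is exactly what we need: the coefficient $\lambda\triangleq \nf - \upd L_h$ is strictly positive (this is where the feedback strength relative to the update strength is used).

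To finish, set $r(t)=\norm{\bh(t)}_2=\sqrt{2V(t)}$; the inequality becomes $r\dot r \le \upd L_x\norm{\hbx}_\infty\, r - \lambda r^2$, i.e. $\dot r \le \upd L_x\norm{\hbx}_\infty - \lambda r$ wherever $r>0$, and the comparison lemma against the linear ODE $\dot w = \upd L_x\norm{\hbx}_\infty - \lambda w$ gives
\[
    \norm{\bh(t)}_2 \;\le\; \norm{\bh_0}_2\, e^{-\lambda t} \;+\; \frac{\upd L_x}{\lambda}\bigl(1 - e^{-\lambda t}\bigr)\norm{\hbx}_\infty \;\le\; \norm{\bh_0}_2\, e^{-\lambda t} + \frac{\upd L_x}{\lambda}\,\norm{\hbx}_\infty .
\]
Taking $\beta(s,t)=s\,e^{-\lambda t}$ and $\gamma(s)=(\upd L_x/\lambda)\,s$ then verifies~\eqref{eq:iss_def}: $\beta$ is increasing in $s$, $\beta(0,\cdot)=0$, strictly decreasing in $t$ with $\beta(\cdot,t)\to 0$; $\gamma$ is increasing with $\gamma(0)=0$.

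The only genuinely delicate point is that $t\mapsto\norm{\bh(t)}_2$ need not be differentiable at zeros of $\bh$, so the step ``$\dot r \le \upd L_x\norm{\hbx}_\infty - \lambda r$'' must be justified carefully — via a Dini-derivative version of the comparison lemma, or by applying the comparison argument to the smooth surrogate $\sqrt{\norm{\bh}_2^2+\varepsilon}$ and letting $\varepsilon\to0$. A cleaner route that avoids this altogether is to stay with $V$: from $\dot V \le -2\lambda V + \upd L_x\norm{\hbx}_\infty\sqrt{2V}$, apply Young's inequality $\upd L_x\norm{\hbx}_\infty\sqrt{2V}\le \lambda V + (\upd L_x\norm{\hbx}_\infty)^2/(2\lambda)$ to obtain $\dot V \le -\lambda V + (\upd L_x\norm{\hbx}_\infty)^2/(2\lambda)$, apply Gr\"onwall to this smooth scalar inequality, and use $\sqrt{x+y}\le\sqrt x+\sqrt y$; this gives a bound of the same exponential-plus-constant shape (with decay rate $\lambda/2$), which is all the ISS definition requires. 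Everything besides this technicality is routine.
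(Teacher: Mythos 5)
Your proof is correct, and it shares its core computation with the paper's: both take $V(\bh)=\tfrac12\norm{\bh}_2^2$ and derive the same dissipation inequality $\dot V \le \upd L_x\norm{\hbx}_\infty\norm{\bh}_2-(\nf-\upd L_h)\norm{\bh}_2^2$ via the identical anchored Lipschitz bound $\norm{\mbf_\theta(\hbx,\bh)}_2\le L_x\norm{\hbx}_2+L_h\norm{\bh}_2$. Where you diverge is in how this inequality is converted into the ISS conclusion. The paper stops at the Lyapunov level: it exhibits $\chi(r)=\frac{\upd L_x}{\nf-\upd L_h}r(1-\varepsilon)^{-1}$ and $\alpha(r)=\varepsilon(\nf-\upd L_h)r^2$ so that $V$ is an ISS-Lyapunov function, and then invokes the Sontag--Wang characterization (their Lemma on the equivalence of ISS and the existence of an ISS-Lyapunov function) as a black box. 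You instead integrate the inequality directly via a comparison/Gr\"onwall argument and read off explicit comparison functions $\beta(s,t)=se^{-\lambda t}$ and $\gamma(s)=(\upd L_x/\lambda)s$ with $\lambda=\nf-\upd L_h$. Your route is more self-contained and more quantitative (it produces an explicit exponential decay rate and gain, which in fact mirror the constants appearing in Theorem~\ref{th:gperror}), at the cost of having to handle the non-differentiability of $t\mapsto\norm{\bh(t)}_2$ at zeros of $\bh$ --- a genuine technical point you correctly identify and resolve via the Young's-inequality surrogate. The paper's route is shorter precisely because staying with the smooth $V$ and citing the equivalence theorem sidesteps that issue entirely, but it leans on a nontrivial external result and yields no explicit $\beta,\gamma$. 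Both arguments are sound.
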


Thus, the Sync-NF and Anti-NF models are stable.
We test the benefits of Theorem~\ref{th:stability} in practice, presenting the trajectory norms $\|\bh(t)\|_2$ for $D = 20$ along with the metrics in Figure~\ref{fig:trajectory-norms}.
Non-stable vector fields, such as No-NF and MLP-based ones with ReLU and Tanh activations, induce uncontrolled growth and achieve poor~$R^2$ due to unstable training.
The $\tanh$-activated vector field is less prone to instability, since the corresponding derivative is bounded.
The SNCDE remains stable for both Sync-NF and Anti-NF, and the corresponding metrics are significantly better.
Notably, Anti-NF slightly outperforms Sync-NF: we argue this is due to a more flexible trajectory.


\begin{figure}
    \centering
    \includegraphics[width=\linewidth]{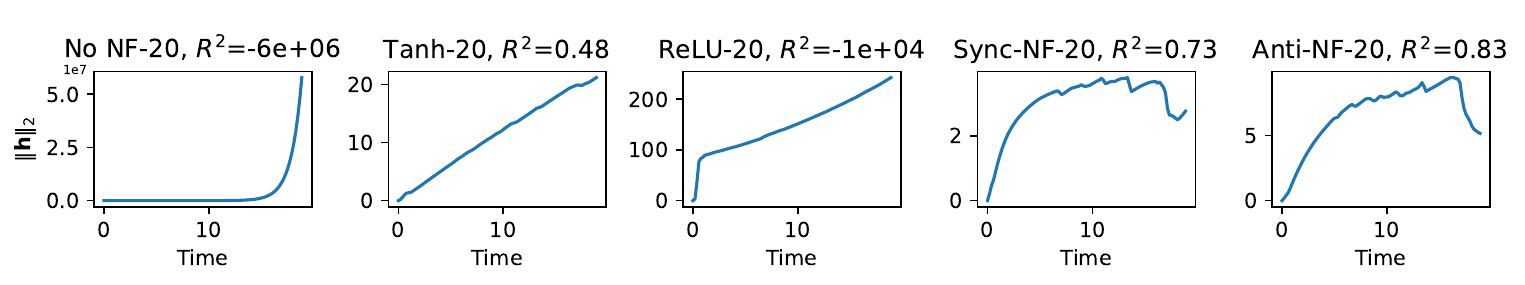}
    \caption{Trajectories for various vector fields on the Pendulum dataset, with $D = 20$. The corresponding $R^2$ values on test are provided in the title above each graph.}
    \label{fig:trajectory-norms}
\end{figure}


\subsection{Error Bounds} 
\label{sec:gp_theory}
Until now, we have assumed that the Neural Differential Equation~\eqref{eq:ncde_cauchy} which our method models can be solved perfectly.
Unfortunately, due to physical restrictions of our computational devices, obtaining the perfect solution is impossible. 
We are forced to discretise the continuous processes that we work with, which introduces certain errors into our calculations:
\begin{itemize}
    \item Discretisation of the input process~$\hbx(t)$ introduces the \textit{interpolation error}. The interpolation error is a property of the interpolation routine.
    \item Discretisation of the numerical integration process introduces the \textit{integration error}. The integration error is a property of the differential equation solver.
\end{itemize}

It is a well-known result~\cite{hairer1993solving}, that the global numerical integration error is exponential w.r.t. time.
In a sense,  we observe precisely that in Figure~\ref{fig:pend_l2}, where the hidden norms explode for non-stable vector fields.
The exponential accumulation of error significantly hampers the time-scaling routine that we propose, so training such models becomes close to impossible.
Fortunately, we were able to prove that our stable Anti-NF and Sync-NF models, adhering to Assumption~\ref{as:nf-strength}, do not accumulate the numerical error in their final hidden state.

To devise the theoretical results of this section, we need to introduce the "perfect`` Neural CDE, calculated without any errors:
\begin{equation} \label{eq:perfect_cde}
    \bh^*(0) = \bh_0;\quad \frac{d \bh^*(t)}{dt} = \upd \mbf_\theta(\bx(t), \bh^*(t)) - \nf \bh^*(t).
\end{equation}
Then, both the integration and the interpolation errors may be modelled via the following "imperfect`` ODE, often referred to as the "Modified Vector Field``~\cite{reich1999backward}:
\begin{equation} \label{eq:approx_cde}
    \bh(0) = \bh_0;\quad \frac{d \bh(t)}{dt} = \upd \mbf_\theta(\bx(t), \bh(t)) - \nf \bh(t) + \dev(t),
\end{equation}
where $\dev(t)$ is a random vector, which represents the momentary deviation of our numerical ODE from the true one.
For further theoretical considerations, we introduce two types of bounds on~$\dev(t)$:
\begin{align}
\text{Pointwise: }\EE \norm{\dev(t)}_2^2 \le \dev_\mathrm{PW}^2, \forall 0 \le t \le T;
&&
\text{Interval: }
\frac{1}{T} \int_{0}^{T} \EE \norm{\dev(t)}_2^2 dt \le \dev_\mathrm{INT}^2. \label{eq:dev_bound}
\end{align}
For each of the introduced bounds, we can devise an estimate for the error in the final hidden state:
\begin{theorem}\label{th:gperror}\textbf{Error Bounds} (\textit{Robustness}).
    Consider the differential equations~\eqref{eq:perfect_cde},\eqref{eq:approx_cde}, under Assumptions~\ref{as:ode-function},~\ref{as:nf-strength}
    Then, depending on whether we are working with the pointwise or the interval errors~\eqref{eq:dev_bound}, we can write different bounds on the variance, accumulating in~$\bh$.
    \begin{align}
    \textrm{Pointwise bound:}&\;&
        \EE \lVert \bh(T) - \bh^*(T) \rVert_2^2 \leq \left(\frac{\upd}{\nf - \upd L_h}\right)^2 \dev_\mathrm{PW}^2.
        \label{eq:h_pw_error}\\
    \textrm{Interval bound:}&\;&
        \frac{1}{T} \int_{0}^{T} \EE\norm{\bh(t) - \bh^*(t)}_2^2 dt \le \left( \frac{\upd}{\nf - \upd L_h} \right)^2 \dev^2_\mathrm{INT}.
        \label{eq:h_int_error}
    \end{align}
\end{theorem}

The errors~\eqref{eq:h_pw_error} and~\eqref{eq:h_int_error} are independent of the total integration time~$T$.
Consequently, the error \textit{does not accumulate in our final hidden state}.
Additional analysis of tightness of~\eqref{eq:h_pw_error},~\eqref{eq:h_int_error} is in Appendix~\ref{ap:tightness}.

Numerical integration error is directly connected to our momentary deviation~$\dev(t)$, and,
as we mentioned above, the benefits of NF for solution stability are already validated by Figure~\ref{fig:pend_l2}.
However, the relation of interpolation error to~$\dev$ is less straightforward, we discuss it in the following section.

\subsubsection{Interpolation error}
Say, $\hbx(t)$ is the cubic-spline interpolation of the input time series.
For analytic purposes, we also introduce the true (unobserved) value of the input process, $\bx(t)$.
Similar to~\eqref{eq:dev_bound}, we consider two types of interpolation errors:
\begin{align}
\norm{\hbx(t) - \bx(t)}_2^2 \le \sigma_\mathrm{PW}^2,
&&
\int_{t_k}^{t_{k+1}} \norm{\hbx(t) - \bx(t)}_2^2 dt \le \sigma_\mathrm{INT}^2. \label{eq:pwint_interp_bound}
\end{align}
Given the absence of other errors, the momentary deviation~$\dev(t)$ is connected to $\norm{\bx - \hbx}$ via the $L_x$ Lipschitz coefficient, with $\dev_{\mathrm{PW/INT}} = L_x \sigma_{\mathrm{PW/INT}}$, so now we estimate the $\sigma$-errors.
The bound for the pointwise error of cubic spline interpolation is a well-known result~\cite{de1978practical}:
\begin{equation} \label{eq:pw_natcub_bound}
    \| \bx(t) - \hbx(t) \| \le \frac{5}{384} \step_{\max}^4 \| \bx^{(4)}(t) \|,
\end{equation}
where $\bx^{(4)}$ is the fourth derivative of the input trajectory, and $\step_{\max}$ is the maximum step between observations.
Assuming that $\|\bx^{(4)}\|$ is bounded, we can plug~\eqref{eq:pw_natcub_bound} directly into~\eqref{eq:dev_bound} (remebering to include $L_x$).

However, the bound~\eqref{eq:pw_natcub_bound} is not tight due to the nature of cubic splines.
The bound may correctly approximate the error in the middle of inter-observation gaps, but the error decreases to zero as we move towards each observation.
For a more precise point of view, we can estimate the interval error from~\eqref{eq:pwint_interp_bound}.
We base our interval error estimation on Gaussian Process (GP) theory: 
the authors of~\cite{golubev2013interpolation} proved that, under certain conditions, cubic splines can be seen as a limit case of Gaussian Process regression.
Using an approach, similar to that of~\cite{zaytsev2017minimax}, we were able to devise a tight bound for such a Gaussian Process limit, which we provide in the following lemma:

\begin{lemma} \label{lemma:spline-kernel-error}
    For some~$\xi > 0, Q > 0$, say that~$\hbx(t)$ is a Gaussian Process with the spectral density~\footnote{At the limit $\xi \rightarrow 0$, the mean of the resulting GP tends to a natural cubic spline~\cite{golubev2013interpolation}. The parameter~$Q$ here represents our a priori beliefs about the data variance.}:
    $
        F_i(\omega) = \frac{Q}{\omega^{4} + \xi^4}.
    $
    Also, assume that the considered sequence $S$ is infinite, and the GP covariance function is such that increasing interval sizes increases the expectation of the interval error from~\eqref{eq:pwint_interp_bound}. Then, the interval error $\EE \sigma_\textrm{INT}^2(\xi)$ has the following asymptotic:
    $$
        \frac{4 \pi^{4} \sqrt{3}}{63} u Q \step_{\min}^4 \le
        \underset{\xi \rightarrow 0}{\lim} \EE \sigma_\textrm{INT}^2(\xi) \le  
        \frac{4 \pi^{4} \sqrt{3}}{63} u Q \step_{\max}^4.
    $$
\end{lemma}
A full version of the reasoning behind the above Lemma is provided in the Appendix.
\begin{corollary}
Plugging the above into~\eqref{eq:h_int_error}, we get the tight upper bound for cubic splines~$\hbx(t)$:
$$
    \frac{1}{n} \int_{0}^{T} \EE\norm{\bh(t) - \bh^*(t)}_2^2 dt \le \left( \frac{\upd L_x}{\nf - \upd L_h} \right)^2 \frac{4 \pi^{4} \sqrt{3}}{63} u Q \step_{\max}^4.
$$
\end{corollary}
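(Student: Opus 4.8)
The plan is to treat this as a direct substitution: Theorem~\ref{th:gperror} already provides the interval bound~\eqref{eq:h_int_error} in terms of an arbitrary uniform interval-error constant~$\upsigma_\mathrm{INT}$, and Lemma~\ref{lemma:spline-kernel-error} supplies an admissible value of that constant for the specific covariance structure that produces cubic splines. So the entire argument is to identify the interpolant, invoke the lemma, and plug in.

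First I would fix the spectral density $F_i(\omega) = Q/(\omega^4 + \xi^4)$ for each coordinate $i$ and recall (the footnote attached to Lemma~\ref{lemma:spline-kernel-error}, citing~\cite{golubev2013interpolation}) that as $\xi \to 0$ the MAP/posterior-mean predictor $\hbx(t)$ of the corresponding stationary GP converges to the natural cubic spline interpolant used in Step~2 of DeNOTS and in~\cite{kidger2020neural}. Hence the constant $\upsigma_\mathrm{INT}^2$ that is valid for our cubic-spline $\hbx(t)$ is $\lim_{\xi \to 0}\sigma_\mathrm{INT}^2(\xi)$, and Lemma~\ref{lemma:spline-kernel-error} bounds this limit from above by $\tfrac{4\pi^4\sqrt{3}}{63}\, u\, Q\, \step_{\max}^4$. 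I should note that this upper bound is uniform in $k$ (it uses the largest step $\step_{\max}$), which is exactly the form required so that it qualifies as the constant $\upsigma_\mathrm{INT}$ appearing in~\eqref{eq:pwint_bound} and thus in Theorem~\ref{th:gperror}.

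Then I would simply substitute $\upsigma_\mathrm{INT}^2 = \tfrac{4\pi^4\sqrt{3}}{63}\, u\, Q\, \step_{\max}^4$ into~\eqref{eq:h_int_error}, which gives
$$
\frac{1}{n}\int_0^T \EE\norm{\bh(t) - \bh^*(t)}_2^2\, dt
\;\le\; \left(\frac{\upd L_x}{\nf - \upd L_h}\right)^2 \upsigma_\mathrm{INT}^2
\;=\; \left(\frac{\upd L_x}{\nf - \upd L_h}\right)^2 \frac{4\pi^4\sqrt{3}}{63}\, u\, Q\, \step_{\max}^4,
$$
which is the claimed inequality. Under Assumptions~\ref{as:ode-function} and~\ref{as:nf-strength} the denominator $\nf - \upd L_h$ is strictly positive, so the right-hand side is finite and well-defined.

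There is essentially no hard step here; the only point requiring a word of care is the interchange of the $\xi \to 0$ limit with the constants in the bound — i.e.\ that the asymptotic in Lemma~\ref{lemma:spline-kernel-error} legitimately yields a finite, $k$-uniform $\upsigma_\mathrm{INT}^2$ that can be fed into Theorem~\ref{th:gperror} — and the identification of the limiting GP predictor with the natural cubic spline, both of which are already established (the latter via~\cite{golubev2013interpolation}) and hence can be cited rather than re-derived. I would also remark that tightness of the resulting bound follows because Lemma~\ref{lemma:spline-kernel-error} also gives a matching lower bound $\tfrac{4\pi^4\sqrt{3}}{63}\, u\, Q\, \step_{\min}^4$, so on (near-)uniform grids the estimate is sharp up to the ratio $(\step_{\max}/\step_{\min})^4$.
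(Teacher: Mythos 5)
Your proposal is correct and matches the paper's treatment exactly: the corollary is obtained by taking the $\xi\to 0$ limit from Lemma~\ref{lemma:spline-kernel-error} as the admissible $k$-uniform constant $\upsigma_\mathrm{INT}^2$ and substituting it into the interval bound~\eqref{eq:h_int_error} of Theorem~\ref{th:gperror}. Your added remarks on the uniformity in $k$, the identification of the limiting GP posterior mean with the natural cubic spline via~\cite{golubev2013interpolation}, and the sharpness on uniform grids are consistent with the paper's own discussion in the appendix.
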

To test the effect of interpolation error on our models, we perform drop attacks (dropping a fraction of input tokens and replacing them with NaNs) on the Pendulum dataset.
According to Table~\ref{tab:attack} (first row), all SNCDEs are robust to drop attacks, with NF versions outperforming No NF.
The Sync-NF model is slightly better than Anti-NF, likely due to a larger $\nf - \upd L_h$, but that difference is insignificant compared to other baselines.

\subsection{Long-term memory}
\label{sec:forgetting}
Despite the implications of Theorems~\ref{th:stability},\ref{th:gperror}, strong negative feedback might not always be beneficial.
We found that its usage may also induce "\textit{forgetting}`` -- the model may lose important prior knowledge.

Unfortunately, the concept of “importance`` may vary dataset-to-dataset, or even sample-to-sample, a good model should adaptively pick out such knowledge on its own, so defining forgetfulness is a difficult task.
That said, we opt for a simpler definition, which generally aligns with our understanding: a model is "forgetful``, if the influence of older trajectory parts constantly decays. 
Such decay leads to limited long-term memory of previous states.

Equipped with the above definition, we can prove that a strict negative feedback induces such forgetfulness:
\begin{theorem}\label{th:forgetfulness}
    For the ODE~\eqref{eq:theory_ode_model} under Assumptions~\ref{as:ode-function},~\ref{as:nf-strength} it holds that:
    $
        \frac{d}{d\tau} \left\| \frac{d \mathbf{h}(t+\tau)}{d h_i(t)} \right\|_2 < 0.
    $
\end{theorem}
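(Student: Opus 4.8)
The plan is to regard the vector $w_i(\tau) \triangleq \frac{d\mathbf{h}(t+\tau)}{d h_i(t)}$ as the $i$-th column of the sensitivity matrix $J(\tau) \triangleq \frac{\partial \mathbf{h}(t+\tau)}{\partial \mathbf{h}(t)}$ and to control $\|w_i(\tau)\|_2$ through the variational equation. Differentiating the dynamics~\eqref{eq:theory_ode_model} with respect to the state $\mathbf{h}(t)$ yields the linear matrix ODE $\dot J(\tau) = A(\tau)\,J(\tau)$ with $J(0) = I$, where $A(\tau) \triangleq \upd\,\partial_{\mathbf{h}}\mbf_\theta\big(\hbx(t+\tau),\mathbf{h}(t+\tau)\big) - \nf I$; in particular every column satisfies $\dot w_i(\tau) = A(\tau)\,w_i(\tau)$ with $w_i(0) = e_i$. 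Since $J$ is a fundamental matrix of a linear system, Liouville's formula gives $\det J(\tau) = \exp\!\big(\int_0^\tau \operatorname{tr} A(s)\,ds\big) \neq 0$, so $w_i(\tau)$ never vanishes and $\tau \mapsto \|w_i(\tau)\|_2$ is differentiable everywhere.

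Then I would compute
$$
\frac{d}{d\tau}\|w_i(\tau)\|_2 = \frac{\langle w_i,\dot w_i\rangle}{\|w_i\|_2} = \frac{\langle w_i, A(\tau) w_i\rangle}{\|w_i\|_2} = \frac{\upd\,\big\langle w_i,\ \partial_{\mathbf{h}}\mbf_\theta\, w_i\big\rangle - \nf\,\|w_i\|_2^2}{\|w_i\|_2},
$$
and bound the quadratic form. By Assumption~\ref{as:ode-function}, $\mbf_\theta$ is $L_h$-Lipschitz in $\mathbf{h}$, hence its Jacobian has spectral norm at most $L_h$, so $\big\langle w_i, \partial_{\mathbf{h}}\mbf_\theta\, w_i\big\rangle \le L_h \|w_i\|_2^2$; together with $\upd > 0$ this gives $\frac{d}{d\tau}\|w_i(\tau)\|_2 \le (\upd L_h - \nf)\,\|w_i(\tau)\|_2$. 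By Assumption~\ref{as:nf-strength}, $\upd L_h < \nf$, so the factor $\upd L_h - \nf$ is strictly negative, and since $\|w_i(\tau)\|_2 > 0$ the right-hand side is strictly negative, which proves the claim. As a by-product, Gronwall's inequality turns this into the quantitative statement $\|w_i(\tau)\|_2 \le e^{(\upd L_h - \nf)\tau}$, i.e. geometric forgetting with rate $\nf - \upd L_h$.

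The main obstacle I anticipate is not the estimate itself but the regularity and non-degeneracy bookkeeping: one must justify that the flow is differentiable in its initial condition (so that $J$ and the variational equation are legitimate) and that $w_i$ stays bounded away from zero on every finite interval, so that $\|w_i\|_2$ is differentiable and the resulting inequality is strict rather than merely $\le 0$. If one does not want to assume $\mbf_\theta \in C^1$, the cleanest route is to bypass the Jacobian entirely: apply the one-sided (logarithmic-norm) estimate $\langle u,\ \mathbf{g}_\theta(\hbx,\mathbf{h}_1) - \mathbf{g}_\theta(\hbx,\mathbf{h}_2)\rangle \le (\upd L_h - \nf)\|u\|_2^2$ with $u = \mathbf{h}_1 - \mathbf{h}_2$ to a pair of trajectories started from $\mathbf{h}(t)$ and $\mathbf{h}(t) + \varepsilon e_i$ and let $\varepsilon \to 0$, which recovers the same bound using only the Lipschitz hypothesis of Assumption~\ref{as:ode-function}.
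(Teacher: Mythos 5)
Your proposal is correct and follows essentially the same route as the paper: differentiate the norm of the sensitivity vector, use the variational equation to write its derivative as $\upd J\chi - \nf\chi$, bound the quadratic form by $L_h\|\chi\|_2^2$ via the spectral norm of the Jacobian, and conclude from $\upd L_h < \nf$. Your additional bookkeeping (Liouville's formula to guarantee $w_i(\tau)\neq 0$, hence differentiability of the norm and strictness of the inequality) is a genuine refinement the paper's proof silently skips, but it does not change the underlying argument.
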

The above implies that a shift in the beginning of the trajectory will not significantly influence the end of the trajectory.
Fortunately, our Anti-NF model can adapt the relative values of $\upd, \nf$ to reduce or disable forgetfulness, making it more flexible.
On the other hand, the Sync-NF model does not have this liberty: for~$ L_h < 1$, the Assumption~\ref{as:nf-strength} always holds, so it will always forget prior values.

\begin{table}[tbh]
    \centering
    \begin{tabular}{cc|ccccc}\toprule
RNN &GRU &Tanh & ReLU &No-NF & Sync-NF &Anti-NF  \\\midrule
-0.3 & 0.8 &1 & 1 &1 & 0.3 & 1 \\
\bottomrule
\end{tabular}
    \caption{$R^2$ on the SineMix dataset, averaged over five runs.}
    \label{tab:sinemix}
\end{table}
To illustrate the forgetting effect, we conduct experiments on a synthetic Sine-Mix dataset to compare Anti-NF's forgetfulness to Sync-NF's and classic RNNs.
Each sequence consists of two equal-length sine waves with different frequencies, joined continuously at the mid-point.
The target is to predict the frequency of the first wave, which is trivial unless the model forgets the beginning of the sequence.
After 5 training epochs, the Sync-NF version solves the task with a mere $R^2=0.3$, while the versions with all other vector fields, including Anti-NF, completed the task perfectly each time.
Simultaneously, the vanilla RNN cell failed the task completely, while the GRU architecture was able to solve it rather well, indicating that Sync-NF's observed forgetfulness is akin to that of classic recurrent networks.

On the other hand, forgetfulness could make the model more robust: time series often evolve quickly, so forgetting irrelevant observations may be beneficial~\cite{cao2019online}.
To gauge this effect, we measure our method's performance on perturbed sequences, with a small fraction replaced by standard Gaussian noise. 
The results are provided in Table~\ref{tab:attack} (row 2).
Indeed, all the possibly forgetful models, i.e., the SNCDE versions and the GRU, are in the lead.
Adding NF boosts robustness: Sync-NF and Anti-NF significantly outperform No-NF.
Sync-NF performs slightly worse than Anti-NF. 
We hypothesise this is because the attack is uniformly distributed across the sequence: the corrupted tokens may appear closer to the end, in which case the Sync-NF model performs worse due to forgetfulness.

\begin{table}[tb]
    \centering
    \caption{The models' $R^2$ for C (change) and D (drop) attacks on Pendulum dataset. The suffix indicates the fraction of altered tokens: $0.85$ for Drop and $0.01$ for change. The columns represent the models: TF, RF for Temp- and RoFormer, NCDE/NRDE for the original Neural CDE/RDE models, and the last three columns present SNCDEs (Scaled Neural CDEs, $D = 5$) with three different vector fields. The full picture, given by the set of fractions vs metric curves, is provided in Appendix~\ref{ap:attack}.}
    \label{tab:attack}
    \begin{tabular}{lccccccccc}
\toprule
 Attack & TF & RF & Mamba & NCDE & NRDE & GRU & No NF & Sync-NF & Anti-NF \\ 
\midrule\addlinespace[2.5pt]
D-0.85 & 0.16 & 0.31 & 0.36 & 0.02 & 0.20 & 0.27 & 0.40 & 0.64 & 0.61 \\
C-0.01 & -0.29 & -0.29 & -0.46 & -0.97 & -0.11 & 0.37 & 0.51 & 0.61 & 0.64 \\
\bottomrule
\end{tabular}

\end{table}


\section{Main Experiments}
\label{sec:exps}
\begin{wraptable}[13]{r}{0.5\linewidth}
    \caption{Pearson correlation between log-NFE and the corresponding metric ($R^2$ for Pendulum and Sine-2, and AUROC for Sepsis), for various vector fields. In a pair $X/Y$, $X$ indicates the statistic for reducing tolerance, $Y$~---~the statistic for increasing time scale. Values $\ge 0.7$ are highlighted in bold.}
    \label{tab:p-corr}    
    \centering
    \begin{tabular}{lrrrr}\toprule
&Pendulum &Sepsis &Sine-2 \\\midrule
Tanh &0.3/-0.6 &0.3/\textbf{0.8} &0.4/\textbf{0.7} \\
ReLU &0.1/-0.6 &0.3/-0.09 &0.07/0.6 \\
No NF &0.4/-0.5 &0.3/-0.1 &0.09/-0.5 \\
Sync-NF &0.4/\textbf{0.8} &0.5/-0.4 &0.5/\textbf{1.0} \\
Anti-NF &0.5/\textbf{0.9} &0.5/\textbf{0.8} &0.4/\textbf{1.0} \\
\bottomrule
\end{tabular}
\end{wraptable}

We tested the individual properties of DeNOTS in Sections~\ref{sec:time},~\ref{sec:negfbk}.
Here, we present a general picture, measuring the quality of our method on downstream tasks, and extensively testing our main expressiveness claim. 
Due to space limitations, we moved the datasets' and models' descriptions, further details, and some additional experiments to Appendix~\ref{ap:experiments}.


\paragraph{Classification/Regression.}
To test our model's overall performance, we compare it to other baseline models (including the Sync-NF vector field) on various tasks.
The results are provided in Table~\ref{tab:ranks}.
Our model is ranked first across all the considered datasets.

\begin{table}[!ht]
    \centering
    \caption{Table with performance ranks (lower is better), built on our results. The rank of each backbone is one plus the number of other backbones, the score of which is better according to a one-sided T-test on the corresponding mean and deviations, with the p-value cutoff set to 0.1. Therefore, lower is better, a rank of 1.0 means that there are no better baselines, and the one in question is the best.}
    \label{tab:ranks}
    \begin{tabular}{lccccl}
    \hline
        Backbone & UWGL & InsectSound & Pendulum & Sepsis & Mean \\ \hline
        DeNOTS (ours) & 1 & 1 & 1 & 1 & 1.0 \\ 
        Sync-NF SNCDE (ours) & 1 & 1 & 2 & 1 & 1.25 \\ 
        TempFormer & 4 & 1 & 6 & 5 & 4.0 \\ 
        Neural CDE & 1 & 8 & 2 & 5 & 4.0 \\ 
        Neural RDE & 1 & 6 & 1 & 9 & 4.25 \\ 
        RoFormer & 5 & 5 & 6 & 3 & 4.75 \\ 
        GRU & 8 & 1 & 4 & 7 & 5.0 \\ 
        Mamba & 6 & 1 & 6 & 8 & 5.25 \\ 
        GRU-ODE & 8 & 7 & 6 & 3 & 6.0 \\ \hline
    \end{tabular}
\end{table}




\paragraph{Scale expressiveness.} 
Next, we extensively test our main claim: combining SNCDE with Anti-NF (DeNOTS) is the best way to increase the model's expressiveness.
However, measuring expressiveness seems non-trivial at first glance. Fortunately, in our setup, expressiveness is quantified by model performance, as we explain in Appendix~\ref{ap:expressivity}. 

We measure the Pearson correlation coefficient between log-NFE and the metric value for increasing precision or scaling time with various vector fields~---~different approaches to increasing expressiveness. 
The results are presented in Table~\ref{tab:p-corr}.
Indeed, DeNOTS is the only model that reliably displays a strong correlation on all three datasets.
Further demonstrations can be found in Appendix~\ref{ap:nfe-cor}.

\section{Conclusions}
\label{sec:conclusion}

We propose a novel approach to increasing Neural ODE expressiveness in the time series domain, backed by theoretical and empirical evidence.
Instead of lowering tolerances, we scale the integration interval, boosting metrics while keeping the $l_2$ norms of weights low.
However, time scaling destabilises conventional vector fields, so we modify the dynamics function to include Negative Feedback.
NF brings provable stability and robustness benefits, which hold in practice.
Although prior versions of NF also possess these qualities, they have issues with long-term memory due to less flexible dynamics and strict trajectory constraints.
In contrast, our Anti-NF preserves expressiveness and avoids forgetting. Combined with time scaling, this results in superior performance across four open time series benchmarks, outperforming recent Neural RDEs and state space models.

\paragraph{Limitations and Future Work.}
DeNOTS, like all Neural ODE methods, has a long execution time due to a high-level implementation; future work could develop faster differentiable ODE libraries. 
Additionally, it would be interesting to adapt the novel concept of scaling integration time to other domains, such as event sequences, text, images, or tabular data.

\paragraph{Reproducibility statement.}
Our code is available at~\url{https://anonymous.4open.science/r/denots_iclr2026-400E/}. 
It contains automated scripts to download all our datasets, and uses popular frameworks to make it recognisable to a wide audience of ML researchers, reproducing all our baselines in these frameworks.
A detailed description of the datasets can be found in Appendix~\ref{ap:data}.
The baselines are described in Appendix~\ref{ap:backbones}.
More information about our pipeline can be found in Appendix~\ref{ap:pipeline}.


\bibliography{ref}
\bibliographystyle{iclr2026_conference}

\newpage
\appendix
\onecolumn

\section*{Appendices}
\startcontents[sections]
\printcontents[sections]{}{1}{}
\newpage

\section{Related works}
\label{ap:related_works}

\paragraph{Neural ODEs for time-series.}
Neural ODEs provide an elegant way to take the sequence irregularity into account~\cite{oh2025comprehensive}. 
The representation evolves simultaneously with the observed system~\cite{chen2018neural}.
For example, GRU-ODE-Bayes~\cite{de2019gru} used Bayesian methods to account for missing observations optimally.
To constrain the hidden trajectory to~$[-1,1]$, its authors introduced a strict NF, subtracting the current hidden state from the derivative.

As an improvement, the authors of~\cite{kidger2020neural} proposed to combine Neural ODEs with the theory of CDEs~\cite{friz2010multidimensional} for time series classification.
Instead of embedding the input data into the starting point, Neural CDEs interpolate it, using the resulting function to guide the trajectory.
On the other hand, the original Neural CDEs from~\cite{kidger2020neural} are not sensitive to the irregularity of time intervals. 
So, the authors append the time difference as a separate feature.
Moreover, these models have high memory usage. 
The controlling NN has to generate a matrix, which implies a weight dimension of~$\sim d^3$, where $d$ is the dimensionality of the input features, instead of $d^2$ for conventional models, such as RNNs.
DeNOTS employs a GRU cell as its dynamics, making it more memory efficient than Neural CDE, and sensitive to input time intervals, which is crucial for scaling time.

\paragraph{Stability.} Neural ODE stability is a well-researched topic.
Prior work has approached it through adding specialized loss terms~\cite{kang2021stable,rodriguez2022lyanet}, constraining the form of the vector field~\cite{massaroli2020stable} or the eigenvalues of the NN weights~\cite{tuor2020constrained}.
However, very little work has been done for Neural CDEs, since the problem is significantly complicated by the presence of an external "control signal``.
Most notably,~\cite{morrill2021neural} processes the input sequence in a windowed fashion, aggregating each window into a certain set of features to reduce length, while~\cite{morrill2021neuralonline} highlights the importance of selecting a continuous interpolant for the well-posedness of the initial-value problem.
We prove that our CDE-type model is input-to-state stable, providing a significantly more general and formal result.

\paragraph{Time.} To escape the instability, introduced by possibly large time frames, most prior works enforce time to be roughly [0, 1] by setting it directly~\cite{chen2018neural,rubanova2019latent,dupont2019augmented}, re-parametrizing it~\cite{chen2020neural} or predicting it within tight bounds~\cite{massaroli2020dissecting}. 
Others devise models invariant to time transforms~\cite{kidger2020neural}. 
To the best of our knowledge, all existing works do not consider time to be an important hyperparameter. 

\paragraph{Gaussian process interpolation errors.}
Our theoretical analysis investigates discretisation uncertainty, which manifests as GP variance.
This involves estimating the squared error of an integral over a linear function of a GP with multiple outputs.
For a specific assumption, we have a reasonable estimate of the quadratic risk that follows from~\cite{golubev2013interpolation}.
While alternatives exist~\cite{stein2012interpolation, van2008rates, zaytsev2018interpolation}, they are unsuitable for our setting, with an additional ODE layer on top.

\section{Theory}
\label{ap:theory}
To enhance the reader experience, we duplicate all relevant statements and discussions from the main body in the Appendix, preserving reference numbering.


\subsection{Time scaling}
As mentioned in the introduction, we argue that scaling time benefits expressiveness.
By expressiveness, we mean the broadness of the class of functions that our network can represent.
Now, we introduce several definitions.

\paragraph{Differentially definable mappings from trajectories $\mathcal{F}$.} We say that $F_ \mathbf{g} \in \mathcal{F}$, if there exists an ODE with the vector field $\mathbf{g}(\mathbf{x}(t), \mathbf{h}(t)): \mathbb{R}^u \times \mathbb{R}^v \rightarrow \mathbb{R}^v$, with the starting condition $\mathbf{h}(0) = \mathbf{h}_ 0$ and the solution $\mathbf{h}(t): \mathbb{R} \rightarrow \mathbb{R}^v$: $
\frac{d}{dt} \mathbf{h}(t) = \mathbf{g}(\mathbf{h}(t), \mathbf{x}(t)),
$
such that $F_ \mathbf{g}(\mathbf{x}(\cdot), \mathbf{h}_ 0; t) = \mathbf{h}(t)$. Not all functions on trajectories can be represented this way.
 
\paragraph{Start-corrected differentially-definable mappings,} i.e. the elements of $\mathcal{F}$, minus their initial state $\bh_0$: 
$
\mathring{\mathcal{F}} = \{\mathring{F}_ \mathbf{g}(\mathbf{x}(\cdot), \mathbf{h}_ 0; t) = F_\mathbf{g}(\mathbf{x}(\cdot), \mathbf{h}_ 0; t) - \mathbf{h}_ 0 | F_\mathbf{g} \in \mathcal{F} \}.
$
Since we set $\mathbf{h}_0=0$ in most practical cases, the distinction is rather technical. This is mostly to account for Lipschitzness w.r.t. $\mathbf{h}$, as we will see below.

\paragraph{Lipschitz constraints.} We introduce two ways to constrain this class with Lipschitz constants: with $\mathbf{g}$ being $M_x, M_h$-Lipschitz
$$
\mathring{\mathcal{F}}_ g(M_ x, M_ h) = \{\mathring{F}_ \mathbf{g} \in \mathring{\mathcal{F}} | \mathbf{g}-M_ x,M_ h \text{-lipschitz w.r.t. } \mathbf{x}, \mathbf{h}\},
$$
and with $\mathring{F}_ \mathbf{g}(\ldots; t)$ being $L_ x(t), L_ h(t)$-lipschitz, where $L_ x(t), L_ h(t):\mathbb{R} \rightarrow \mathbb{R}_ +$:
$$
    \mathring{\mathcal{F}}_ F(L_x(\cdot), L_h(\cdot)) = \{\mathring{F}_ \mathbf{g} \in \mathring{\mathcal{F}} | \mathring{F}_ \mathbf{g}-L_x(t), L_h(t)\text{-lipschitz w.r.t. } \mathbf{x}, \mathbf{h}_ 0\}.
$$
Note, that $\mathring{\mathcal{F}}_g(M_x, M_h)$ represents all mappings that our ODE can learn, given a Lipschitz-constrained Neural Network $\mathbf{g}$, while $\mathring{\mathcal{F}}_F(L_x(\cdot), L_h(\cdot))$ represents all the possible Lipschitz-mappings that we may want to learn.

Equipped with the above definitions, we can formulate the following:

\begin{theorem*}[\ref{th:time_constraints}] The classes $\mathring{\mathcal{F}}_g(M_x, M_h)$ and $\mathring{\mathcal{F}}_F(L_x(\cdot), L_h(\cdot))$ are equal, given the following relations between their arguments:
$$
L_x(t) = M_x \sqrt{\frac{1}{2 M_h}\left(e^{2M_h t} - 1\right)}; L_h(t) = e^{M_h t} - 1.
$$
\end{theorem*}
\begin{proof}
We conduct the proof in two steps: first proving the subset relation $\mathring{\mathcal{F}}_g(M_x, M_h) \subseteq \mathring{\mathcal{F}}_F(L_x(\cdot), L_h(\cdot))$, and then the converse $\mathring{\mathcal{F}}_F(L_x(\cdot), L_h(\cdot)) \subseteq \mathring{\mathcal{F}}_g(M_x, M_h)$.
Together, these relations will imply equality of the two considered classes.

\paragraph{Subset relation.}
First, we prove the relation $\mathring{\mathcal{F}}_g(M_x, M_h) \subseteq \mathring{\mathcal{F}}_F(L_x(\cdot), L_h(\cdot))$. Let's consider an $M_x,M_h$-Lipschitz $\mathbf{g}$, and prove that the corresponding $\mathring{F}$ is $L_x,L_h$-Lipschitz.

Say, we have two trajectories, $\mathbf{x}_1(t),\mathbf{x}_2(t)$, each producing their repsective $\mathbf{h} _{1,2}(t)$, with $\mathbf{h} _{1}(0) \ne \mathbf{h} _2(0)$. The distance between the corresponding $\mathring{F}$-s will be given by:
$$
    \| (F(\mathbf{x}_1, \mathbf{h}_0, t) - F(\mathbf{x}_1, \mathbf{h}_0, 0)) - (F(\mathbf{x}_2, \mathbf{h}_0, t) - F(\mathbf{x}_2, \mathbf{h}_0, 0)) \| \triangleq \| \Delta(t) \|,
$$
where we denote the expression under the norm by $\Delta$ for brevity.
The absolute derivative of the norm is always less or equal to the norm of the derivative, which can be proven using the Cauchy-Schwartz inequality:
$$
\frac{d}{dt}
\| \Delta \|_2
\le
\left|
\frac{d}{dt}
\| \Delta \|_2
\right|
=
\left|
\frac{d}{dt} \sqrt{\Delta \cdot \Delta}
\right|
=
\frac{ \Delta \cdot  \Delta' }{\| \Delta \|_2}
\le
\frac{\| \Delta \|_2 \cdot \| \Delta' \|_2}{\| \Delta \|_2}
= \| \Delta' \|_2 =
$$
Here $\Delta'$ is the derivative of $\Delta$ w.r.t. $t$. We can write it using $\mathbf{g}$, the starting points cancel out:
$$
= \|\Delta'(t)\|_2 = \|\mathbf{g}(\mathbf{x}_1(t), \mathbf{h}_1(t)) - \mathbf{g}(\mathbf{x}_2(t), \mathbf{h}_2(t))\|_2 =
$$
We add and subtract a "mixed`` term $\mathbf{g}(\mathbf{x}_1(t), \mathbf{h}_2(t))$:
$$
= \|\mathbf{g}(\mathbf{x}_1(t), \mathbf{h}_1(t)) - \mathbf{g}(\mathbf{x}_1(t), \mathbf{h}_2(t)) + \mathbf{g}(\mathbf{x}_1(t), \mathbf{h}_2(t)) - \mathbf{g}(\mathbf{x}_2(t), \mathbf{h}_2(t))\|_2 \le
$$
The norm of that sum may be bounded using the triangle rule:
$$
\le
\|\mathbf{g}(\mathbf{x}_1(t), \mathbf{h}_1(t)) - \mathbf{g}(\mathbf{x}_1(t), \mathbf{h}_2(t)) \|_2
+
\|\mathbf{g}(\mathbf{x}_1(t), \mathbf{h}_2(t)) - \mathbf{g}(\mathbf{x}_2(t), \mathbf{h}_2(t))\|_2 \le
$$
And both of these two norm-terms can now be bounded by the Lipschitz property:
$$
\le M_x \|\mathbf{x}_1 - \mathbf{x}_2\|_2 + M_h \|\mathbf{h}_1 - \mathbf{h}_2\|_2 =
$$
We add and subtract $\mathbf{h}_1(0) - \mathbf{h}_2(0) \triangleq \Delta_0$ in the second term:
$$
M_x \|\mathbf{x}_1 - \mathbf{x}_2\|_2 + M_h \|\mathbf{h}_1(t) - \mathbf{h}_2(t) - (\mathbf{h}_1(0) - \mathbf{h}_2(0)) +
(\mathbf{h}_1(0) - \mathbf{h}_2(0))\|_2 \le
$$
We now notice that we can substitute $\Delta$ in the second term, and upper-bound it with the triangle inequality
$$
\le M_x \|\mathbf{x}_1 - \mathbf{x}_2\|_2 + M_h \|\Delta\|_2 + M_h \|
\Delta_0 \|_2,
$$
where we denote $\Delta_0 \triangleq \mathbf{h}_1(0) - \mathbf{h}_2(0)$.

Putting the above reasoning together, we are left with the differential inequality:
$$
\frac{d}{dt} \| \Delta \| \le M_x \|\mathbf{x}_1 - \mathbf{x}_2\| + M_h \|\Delta\| + M_h \|\Delta_0\|.
$$
Using the standard ODE trick, we assume that $\Delta(t) = C(t) e^{M_h t}$, and substitute this into the inequality:
$$
\frac{d}{dt} \| \Delta \| = \frac{dC}{dt} e^{M_h t} + C(t) M_h e^{M_h t} \le M_x \|\mathbf{x}_1 - \mathbf{x}_2\| + M_h C(t) e^{M_h t} + M_h \|\Delta_0\|.
$$
The terms $M_h C(t) e^{M_h t}$ cancel out:

$$
\frac{dC}{dt} e^{M_h t} \le M_x \|\mathbf{x}_1 - \mathbf{x}_2\| + M_h \|\Delta_0\|.
$$

Now, we separate the variables, and integrate from $\tau=0$ to $t$ : 

$$
\int_{\tau=0}^t dC = C(t) - C(0) \le \left( M_x\int_{\tau=0}^t \| \mathbf{x}_1(\tau) - \mathbf{x}_2(\tau) \| e^{-M_h \tau} d\tau \right) + 
$$

$$
\ + M_h \| \Delta_0 \| \int_{\tau=0}^t e^{-M_h \tau} d\tau.
$$

The integral without $\mathbf{x}_{1,2}(t)$ can be integrated analytically:

$$
\int_{\tau=0}^t e^{-M_h \tau} d\tau = \frac{1}{M_h} \left( 1 - e^{-M_h t} \right).
$$

The integral with $\mathbf{x}_{1,2}$ can be bounded using Cauchy-Schwartz:

$$
\int_{\tau=0}^t \|\mathbf{x}_1(\tau) - \mathbf{x}_2(\tau)\| e^{-M_h \tau} d\tau \le
$$

$$
\le \sqrt{ \int_{\tau=0}^t \|\mathbf{x}_ 1(\tau) - \mathbf{x}_ 2(\tau)\|^2 d\tau } \sqrt{\int_{\tau=0}^t e^{-2M_h \tau} d\tau}
$$

The first term is the $L_2$-distance we introduced earlier, $\rho$, the second one can be integrated analytically:

$$
\int_{\tau=0}^t e^{-2M_h \tau} d\tau = \frac{1}{2 M_h} \left( 1 - e^{-2 M_h t} \right).
$$

As a starting condition, we know that $\Delta(0) = 0$ (since it is start-corrected), so $C(0) = 0$. Now we have an inequality for $C(t)$:

$$
C(t) \le \| \Delta_0 \|_2 \left( 1 - e^{-M_h t}\right) + M_x \rho (\mathbf{x}_1, \mathbf{x}_2) \sqrt{ \frac{1}{2 M_h} \left( 1 - e^{-2 M_h t} \right)}.
$$

Multiplying it by $e^{M_h t}$ gives us the inequality for $\Delta(t)$:
$$
\|\Delta(t)\| \le \| \Delta_0 \|_2 \left(e^{M_h t} - 1\right) + M_x \rho (\mathbf{x}_1, \mathbf{x}_2) \sqrt{\frac{1}{2 M_h} \left( e^{2 M_h t} - 1 \right)}.
$$

To get the individual expression for $L_x$, we simply set $\Delta_0$ to zero. To get the expression for $L_h$, we set $\rho$ to zero.

\paragraph{Superset relation.} Now we prove the converse relationship:
$\mathring{\mathcal{F}}_F(L_x(\cdot), L_h(\cdot)) \subseteq \mathring{\mathcal{F}}_g(M_x, M_h)$.

Assume that we have $F\in \mathring{\mathcal{F}}_F(L_x(\cdot), L_h(\cdot))$, where for some $M_x,M_h$:
$$
 L_x(t) = M_x \sqrt{\frac{1}{2 M_h}\left(e^{2M_h t} - 1\right)}; L_h(t) = e^{M_h t} - 1.
$$
We need to prove that the corresponding $\mathbf{g}$ is $M_x, M_h$-Lipschitz.

\paragraph{We start with the $x$ component.} Consider two input trajectories $\mathbf{x}_1 \ne \mathbf{x}_2$, and equal starting points $\mathbf{h}_1 = \mathbf{h}_2 = \mathbf{h}_0$. Since $F$ is Lipschitz, we can write the following inequality:
$$
\| \mathring{F}(\mathbf{x}_1, \mathbf{h}_0) - \mathring{F}(\mathbf{x}_2, \mathbf{h}_0) \| \triangleq \| \Delta \| \le \rho(\mathbf{x}_1, \mathbf{x}_2) M_x \sqrt{\frac{1}{2 M_h}\left(e^{2M_h t} - 1\right)}
$$

Here, we will also denote the difference between these trajectories by $\Delta$:
$$
    \mathring{F}(\mathbf{x}_1, \mathbf{h}_0) - \mathring{F}(\mathbf{x}_2, \mathbf{h}_0) = \Delta.
$$

We square the Lipschitz inequality, and take the derivative w.r.t. $t$:
$$
    2 (\Delta \mathbf{g}, \Delta) \le \frac{M_x^2}{2M_h} \left(\frac{d\rho^2}{dt} \left(e^{2M_h t} - 1\right) + \rho^2 \frac{d}{dt}\left(e^{2M_h t} - 1\right) \right).
$$
Here, we use the notation $\Delta \mathbf{g} = \mathbf{g}(\mathbf{x}_1(t), \mathbf{h}_1(t)) - \mathbf{g}(\mathbf{x}_2(t), \mathbf{h}_2(t)).$
The derivatives of $\rho^2$ and $\left(e^{2M_h t} - 1\right)$ are easily calculated analytically:
$$
\frac{d\rho^2}{dt} = \| \mathbf{x}_1(t) - \mathbf{x}_2(t)\|^2.
$$
$$
\frac{d}{dt} \left(e^{2M_h t} - 1\right) = 2M_h e^{2M_h t}.
$$
We substitute these expressions:
$$
    2 (\Delta \mathbf{g}, \Delta) \le \frac{M_x^2}{2M_h}
    \left(
        \| \mathbf{x}_1(t) - \mathbf{x}_2(t)\|^2. \left(e^{2M_h t} - 1\right)
        +
        \rho^2 2M_h e^{2M_h t}.
    \right).
$$
Now we divide the above by $t$, and take the limit at $t \rightarrow 0$.
$$
\underset{t \rightarrow 0}{\lim} 2 (\Delta \mathbf{g}, \frac{1}{t} \Delta)
\le
\underset{t \rightarrow 0}{\lim} \frac{M_x^2}{2M_h}
 \left(
    \| \mathbf{x}_1(t) - \mathbf{x}_2(t)\|^2 \frac{1}{t} \left(e^{2M_h t} - 1\right)
    +
    \frac{1}{t} \rho^2 2M_h e^{2M_h t}
\right)
$$
We calculate the limits for terms with $\frac{1}{t}$:
$$
    \underset{t \rightarrow 0}{\lim}\frac{1}{t} \Delta = \Delta \mathbf{g}(0);
$$
$$
    \underset{t \rightarrow 0}{\lim} \frac{1}{t} \left(e^{2M_h t} - 1\right)
    = 2M_h;
$$
$$
    \underset{t \rightarrow 0}{\lim} \frac{1}{t} \rho^2 =
    \| \mathbf{x}_1(0) - \mathbf{x}_2(0) \|^2.
$$
The limit for the other terms is achieved by simply setting $t=0$:
$$
2 (\Delta \mathbf{g}(0), \Delta \mathbf{g}(0))
\le \frac{M_x^2}{2M_h}
\left(
    \| \mathbf{x}_1(0) - \mathbf{x}_2(0)\|^2 2 M_h
    +
    \| \mathbf{x}_1(0) - \mathbf{x}_2(0) \|^2 2M_h
\right).
$$
Simplifying, we get:
$$
\| \mathbf{g}(\mathbf{x}_1(0), \mathbf{h}_0) - \mathbf{g}(\mathbf{x}_2(0), \mathbf{h}_0) \|^2 \le M_x  \| \mathbf{x}_1(0) - \mathbf{x}_2(0)\|^2.
$$
Since the choice of $\mathbf{x}_1(0), \mathbf{x}_2(0), \mathbf{h}_0$ is arbitrary, the above implies that $\mathbf{g}$ is $M_x$-Lipschitz.

\paragraph{Now for the $h$ component}. We follow a very similar approach, but this time the input trajectories are the same $\mathbf{x}_1 = \mathbf{x}_2 = \mathbf{x}$, while the starting points are not $\mathbf{h}_1 \ne \mathbf{h}_2$.
Therefore, since $F$ is Lipschitz, we can write the following inequality:
$$
\| \mathring{F}(\mathbf{x}, \mathbf{h}_1; t) - \mathring{F}(\mathbf{x}, \mathbf{h}_2; t) \| \triangleq \| \Delta(t) \| \le \|\Delta_0\| \left( e^{M_h t} - 1 \right).
$$
Here, $\Delta_0 \triangleq \mathbf{h}_1 - \mathbf{h}_2$. Again, we take the square and differentiate w.r.t. $t$:
$$
2(\Delta, \Delta \mathbf{g}) \le \| \Delta_0 \|^2 2 \left( e^{M_h t} - 1 \right) M_h e^{M_h t}.
$$
Divide by $t$ and take the limit at $t \rightarrow 0$:
$$
2 \| \Delta \mathbf{g} \|^2 \le  \| \Delta_0 \|^2 2 M_h.
$$
Since the choice of $\mathbf{x}, \mathbf{h}_1, \mathbf{h}_2$ is arbitrary, we have proven that $\mathbf{g}$ is $M_h$-Lipschitz.

This concludes the proof of Theorem~\ref{th:time_constraints}.
\end{proof}

As a corollary, given an $M_x,M_h$-Lipschitz NN $\mathbf{g}$, we can represent all the $L_x,L_h$-Lipschitz (differentially-definable, start-corrected) $F$, and only them.

\subsection{Negative Feedback Model}
We model NF by the following simplified differential equation, a specific case within the dynamics~\eqref{eq:ncde_cauchy}:
\begin{equation*} \tag{\ref{eq:theory_ode_model}}
    \frac{d\bh}{dt} = \mathbf{g}_\theta(\hbx(t), \bh(t)) = \upd \mbf_\theta (\hbx(t), \bh(t)) - \nf \bh(t),\;
    \upd,\nf \in \RR.
\end{equation*}
The values of~$\upd$ and~$\nf$ determine the relative magnitude of the NF term and the overall scale of the derivative. For clarity, our theoretical analysis focuses on scalar-valued $\upd$ and $\nf$, which suffice to demonstrate the key properties. 
Extending the analysis to vector-valued parameters introduces substantial technical complexity without offering additional insight.


To conduct our analysis, we need to constrain the function~$\mbf_\theta$ and the values~$\upd,\nf, L_h$:
\begin{assumption*} [\ref{as:ode-function}]
    (1) The function $\mbf_\theta$ from~\eqref{eq:theory_ode_model} is Lipschitz w.r.t. both~$\hbx$ and~$\bh$ with constants~$L_h, L_x$, and (2) it is equal to zero for zero vectors $\mbf_\theta(\mathbf{0}_u, \mathbf{0}_v) = \mathbf{0}_v$.
\end{assumption*}

\begin{assumption*}[\ref{as:nf-strength}]
    The values of $\upd, \nf, L_h$ are constrained by the following:
    \begin{equation*}\tag{\ref{eq:strong_nf}}
        \upd \in (0, 1), \nf \in (0, 1), L_h \in (0, 1), \upd L_h < \nf.
    \end{equation*}
\end{assumption*}

\subsection{Stability}
As mentioned in the main text, we approach stability analysis through control systems theory~\cite{sontag1995characterizations}.
For clarity, we write out the definitions and theorems used in greater detail.
First, we introduce the comparison function classes that are used in the stability definition:
\begin{definition} \label{def:comp-class}
    Let $\mathcal{K},\mathcal{L},\mathcal{KL}$ denote the following classes of functions:
    \begin{itemize}
        \item A continuous function $\gamma: \RR_+ \rightarrow \RR_+$ is said to belong to the comparison class~$\mathcal{K}$, if it is increasing with~$\gamma(0)=0$.
        \item A continuous function $\gamma: \RR_+ \rightarrow \RR_+$ is said to belong to the comparison class~$\mathcal{L}$, if it is strictly decreasing with~$\gamma(t) \rightarrow 0, t \rightarrow \infty$.
        \item A continuous function $\gamma: \RR_+ \rightarrow \RR_+$ is said to belong to the comparison class~$\mathcal{K}_\infty$, if it belongs to $\mathcal{K}$ and is unbounded.
        \item A continuous function $\beta: \RR_+^2 \rightarrow \RR_+$ is said to belong to the comparison class~$\mathcal{KL}$, if $\beta(\cdot, t) \in \mathcal{K}, \forall t > 0$, and $\beta(r, \cdot) \in \mathcal{L}, \forall r > 0$.
    \end{itemize}
\end{definition}

Now, we can formulate the definition of stability in the control-systems sense:
\begin{definition} \label{def:iss}
    A system~\eqref{eq:ncde_cauchy} is said to be input-to-state stable (ISS), if there exist $\gamma \in \mathcal{K}$, and $\beta \in \mathcal{KL}$, s.t.:
    \begin{equation*} \tag{\ref{eq:iss_def}}
        \|\bh(t)\|_2 \le \beta(\|\bh_0\|_2, t) + \gamma(\|\hbx\|_\infty). 
    \end{equation*}
\end{definition}

Just like with traditional ODE stability, instead of directly testing the definition, it is often more convenient to construct an \textit{ISS-Lyapunov function}:
\begin{definition} \label{def:iss-lyap}
    A smooth function~$V: \RR^v \rightarrow \RR_+$ is called an ISS-Lyapunov function, if there exist functions $\psi_1,\psi_2 \in \mathcal{K}_\infty$ and $\alpha,\chi \in \mathcal{K}$, s.t.
    \begin{equation} \label{eq:lyapunov-bounded}
        \psi_1(\| \bh \|_2) \le V(\bh) \le \psi_2 (\| \bh \|_2), \forall \bh \in \RR^v,
    \end{equation}
    and 
    \begin{equation} \label{eq:lyapunov-main}
        \forall \bh: \| \bh \|_2 \ge \chi (\| \bx \|_2) \rightarrow
        \nabla V \cdot \bg_\theta \le -\alpha (\| \bh \|_2).
    \end{equation}
\end{definition}
The connection between Definitions~\ref{def:iss} and~\ref{def:iss-lyap} is given by the following lemma~\cite{sontag1995characterizations}:
\begin{lemma} \label{lemma:iss-lyapunov}
    The ODE~\eqref{eq:ncde_cauchy} is ISS, if and only if a corresponding ISS-Lyapunov function exists.
\end{lemma}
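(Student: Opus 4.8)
The plan is to prove the two implications separately, since they are of very different character. The \textbf{sufficiency} direction --- existence of an ISS-Lyapunov function (Definition~\ref{def:iss-lyap}) implies the system~\eqref{eq:ncde_cauchy} is ISS (Definition~\ref{def:iss}) --- is a direct comparison-principle argument, and it is the only direction actually used elsewhere in the paper (to conclude ISS from a constructed Lyapunov function), so I would present it in full. The \textbf{necessity} direction --- ISS implies an ISS-Lyapunov function exists --- is the classical converse Lyapunov theorem of Sontag--Wang; its proof is a Massera-type smoothing construction, which I would reduce to a standard robust-stability converse theorem and then import, citing~\cite{sontag1995characterizations}.

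For sufficiency, let $V$ be an ISS-Lyapunov function and fix a control $\hbx(\cdot)$ with $r \triangleq \|\hbx\|_\infty$. Along any solution of~\eqref{eq:ncde_cauchy}, $\tfrac{d}{dt}V(\bh(t)) = \nabla V(\bh(t))\cdot \bg_\theta(\hbx(t),\bh(t))$, so by~\eqref{eq:lyapunov-main}, whenever $\|\bh(t)\|_2 \ge \chi(r)$ this derivative is at most $-\alpha(\|\bh(t)\|_2) \le -\alpha(\psi_2^{-1}(V(\bh(t))))$, with $\alpha\circ\psi_2^{-1}\in\mathcal{K}$ by~\eqref{eq:lyapunov-bounded}. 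Hence $V$ strictly decreases outside the forward-invariant sublevel set $\Omega_r \triangleq \{\bh : V(\bh)\le \psi_2(\chi(r))\}$. Applying a standard scalar comparison lemma to $y(t)=V(\bh(t))$ up to the first time the trajectory enters $\Omega_r$ yields $\tilde\beta\in\mathcal{KL}$ with $V(\bh(t)) \le \max\{\tilde\beta(V(\bh_0),t),\,\psi_2(\chi(r))\}$ for all $t\ge0$. Using~\eqref{eq:lyapunov-bounded}, the monotonicity of $\psi_1^{-1}$, the bound $\max\{a,b\}\le a+b$, and $\phi(a+b)\le\phi(2a)+\phi(2b)$ for increasing $\phi$ with $\phi(0)=0$, I obtain $\|\bh(t)\|_2 \le \psi_1^{-1}(2\tilde\beta(\psi_2(\|\bh_0\|_2),t)) + \psi_1^{-1}(2\psi_2(\chi(r)))$, which is precisely~\eqref{eq:iss_def} with $\beta(s,t)\triangleq\psi_1^{-1}(2\tilde\beta(\psi_2(s),t))\in\mathcal{KL}$ and $\gamma(r)\triangleq\psi_1^{-1}(2\psi_2(\chi(r)))\in\mathcal{K}$.

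For necessity, I would use Sontag's scaling trick to convert ISS into a robust-stability statement for an autonomous differential inclusion and then invoke an off-the-shelf converse theorem. Starting from the estimate~\eqref{eq:iss_def}, there is $\rho\in\mathcal{K}_\infty$, chosen small relative to $\gamma^{-1}$, such that the perturbed system $\dot{\bh}\in\{\bg_\theta(w,\bh):\|w\|_2\le\rho(\|\bh\|_2)\}$ is globally asymptotically stable uniformly over all measurable selections $w(\cdot)$ --- the gain term in~\eqref{eq:iss_def} is dominated once the admissible disturbance magnitude is tied to $\rho(\|\bh\|_2)$. The converse Lyapunov theorem for such inclusions then produces a smooth $V$ with $\psi_1(\|\bh\|_2)\le V(\bh)\le\psi_2(\|\bh\|_2)$, $\psi_i\in\mathcal{K}_\infty$, and $\nabla V(\bh)\cdot\bg_\theta(w,\bh)\le-\alpha(\|\bh\|_2)$ for every $\|w\|_2\le\rho(\|\bh\|_2)$, some $\alpha\in\mathcal{K}$. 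Setting $\chi\triangleq\rho^{-1}\in\mathcal{K}$, for any input $\hbx$ the condition $\|\bh\|_2\ge\chi(\|\hbx\|_\infty)$ forces $\|\hbx(t)\|_2\le\rho(\|\bh\|_2)$, so $\hbx(t)$ is an admissible $w$ and~\eqref{eq:lyapunov-main} holds; thus $V$ is an ISS-Lyapunov function in the sense of Definition~\ref{def:iss-lyap}.

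The \textbf{main obstacle} is the converse direction, and inside it the construction of a \emph{smooth} $V$ from a $\mathcal{KL}$-estimate: one defines $V$ by integrating $g(\|\phi(s;\bh)\|_2)\,ds$ along worst-case trajectories of the inclusion for a carefully tuned $g\in\mathcal{K}_\infty$, proves local Lipschitzness, and then mollifies while preserving the strict-decrease inequality --- a genuinely delicate argument I would cite from~\cite{sontag1995characterizations} rather than reproduce. Secondary technical points peculiar to our setting are that $\bg_\theta$ is only Lipschitz (not smooth) in $\bh$ and merely continuous in the control, and that the ``control'' $\hbx(t)$ is a fixed continuous cubic-spline interpolant rather than an arbitrary essentially bounded signal; both are harmless, because the cited characterizations require only continuity and local Lipschitzness ensuring existence and uniqueness of solutions, which hold here, and restricting the admissible input class only strengthens the necessity direction while leaving sufficiency untouched.
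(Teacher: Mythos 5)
The paper does not actually prove Lemma~\ref{lemma:iss-lyapunov}: it is stated as an imported result, with the entire content deferred to the citation~\cite{sontag1995characterizations}. Your proposal is therefore more self-contained than the paper, and it is correct. The sufficiency direction you write out in full is the standard comparison-principle argument, and every step checks: the inequality $-\alpha(\|\bh\|_2)\le-\alpha(\psi_2^{-1}(V(\bh)))$ follows from $\psi_2^{-1}(V)\le\|\bh\|_2$ and monotonicity of $\alpha$; forward invariance of $\Omega_r$ follows because $V=\psi_2(\chi(r))$ on its boundary forces $\|\bh\|_2\ge\chi(r)$ and hence strict decrease of $V$ via~\eqref{eq:lyapunov-main}; and the passage from the condition ``$\|\bh\|_2\ge\chi(\|\hbx(t)\|_2)$'' in Definition~\ref{def:iss-lyap} to the uniform threshold $\chi(r)$ with $r=\|\hbx\|_\infty$ is legitimate since $\chi$ is increasing. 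Notably, sufficiency is the only direction the paper ever uses (in the proof of Theorem~\ref{th:stability}), so your choice to spell it out and to reduce the converse direction to the Sontag--Wang robust-stability/converse-Lyapunov machinery --- which is a genuinely hard Massera-type construction nobody should reproduce inline --- is a sensible division of labour. The only thing each approach ``buys'' differently is economy: the paper's wholesale citation is shorter, while your version makes the part of the lemma that the paper actually relies on verifiable without consulting the reference.
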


Now, using the facts formulated above, we can proceed to proving Theorem~\ref{th:stability}:
\begin{theorem*}[\ref{th:stability}]
        The ODE~\eqref{eq:theory_ode_model} under Assumptions~\ref{as:ode-function},~\ref{as:nf-strength} is input-to-state stable.
\end{theorem*}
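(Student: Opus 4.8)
The plan is to invoke Lemma~\ref{lemma:iss-lyapunov} and simply exhibit an ISS-Lyapunov function for the dynamics~\eqref{eq:theory_ode_model}. The natural candidate is the quadratic $V(\bh) = \tfrac12 \|\bh\|_2^2$, which automatically satisfies the sandwich bound~\eqref{eq:lyapunov-bounded} with $\psi_1(r) = \psi_2(r) = \tfrac12 r^2 \in \mathcal{K}_\infty$. The whole argument then reduces to verifying the dissipativity inequality~\eqref{eq:lyapunov-main}: we compute $\nabla V \cdot \bg_\theta = \bh \cdot \bg_\theta(\hbx, \bh) = \upd\, \bh \cdot \mbf_\theta(\hbx(t), \bh(t)) - \nf \|\bh\|_2^2$.

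The key step is bounding the cross term $\bh \cdot \mbf_\theta$. Using Assumption~\ref{as:ode-function} — both the Lipschitz property and the vanishing-at-zero property — we get $\|\mbf_\theta(\hbx, \bh)\|_2 = \|\mbf_\theta(\hbx, \bh) - \mbf_\theta(\mathbf{0}_u, \mathbf{0}_v)\|_2 \le L_h \|\bh\|_2 + L_x \|\hbx\|_2$, and then Cauchy--Schwarz gives $\bh \cdot \mbf_\theta \le \|\bh\|_2\big(L_h\|\bh\|_2 + L_x\|\hbx\|_2\big)$. Substituting back:
\begin{equation*}
    \nabla V \cdot \bg_\theta \le \upd L_h \|\bh\|_2^2 + \upd L_x \|\bh\|_2 \|\hbx\|_2 - \nf \|\bh\|_2^2 = -(\nf - \upd L_h)\|\bh\|_2^2 + \upd L_x \|\bh\|_2 \|\hbx\|_2.
\end{equation*}
Write $\mu \triangleq \nf - \upd L_h > 0$ by Assumption~\ref{as:nf-strength}. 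The right-hand side is negative whenever $\mu\|\bh\|_2 > \upd L_x \|\hbx\|_2$, i.e. whenever $\|\bh\|_2 > \tfrac{\upd L_x}{\mu}\|\hbx\|_2$. More precisely, I would fix a slack parameter $\theta \in (0,1)$ and argue that if $\|\bh\|_2 \ge \chi(\|\hbx\|_2)$ with $\chi(r) \triangleq \tfrac{\upd L_x}{\theta \mu} r \in \mathcal{K}$, then $\upd L_x \|\bh\|_2\|\hbx\|_2 \le \theta \mu \|\bh\|_2^2$, so $\nabla V \cdot \bg_\theta \le -(1-\theta)\mu\|\bh\|_2^2 =: -\alpha(\|\bh\|_2)$ with $\alpha(r) = (1-\theta)\mu r^2 \in \mathcal{K}$. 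This establishes~\eqref{eq:lyapunov-main}, so $V$ is an ISS-Lyapunov function and Lemma~\ref{lemma:iss-lyapunov} yields ISS.

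I do not anticipate a genuine obstacle here — the result is a textbook-style Lyapunov argument. The only point requiring a little care is making sure the comparison functions land in the right classes ($\chi, \alpha \in \mathcal{K}$, $\psi_1,\psi_2 \in \mathcal{K}_\infty$) and that the condition $\upd L_h < \nf$ from Assumption~\ref{as:nf-strength} is exactly what guarantees $\mu > 0$ so that the coefficient of $\|\bh\|_2^2$ is strictly negative; without it the cross term could dominate and the estimate would fail. Everything else — the differentiation of $V$, the triangle/Cauchy--Schwarz bounds — is routine. One could also note in passing that the conditions $\upd,\nf,L_h \in (0,1)$ beyond $\upd L_h < \nf$ are not strictly needed for this particular theorem (only $\upd L_x \ge 0$ and $\nf - \upd L_h > 0$ matter), but since they are assumed globally there is no harm.
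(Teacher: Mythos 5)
Your proof is correct and follows essentially the same route as the paper's: the quadratic ISS-Lyapunov function $V(\bh)=\tfrac12\|\bh\|_2^2$, the bound $\|\mbf_\theta\|_2\le L_h\|\bh\|_2+L_x\|\hbx\|_2$ via Lipschitzness and $\mbf_\theta(\mathbf{0}_u,\mathbf{0}_v)=\mathbf{0}_v$, and the gain function $\chi$ with a slack parameter (your $\theta$ is the paper's $1-\varepsilon$). Your closing observation that only $\nf-\upd L_h>0$ is really needed for this theorem is accurate.
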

\begin{proof}
Let us build a corresponding ISS-Lyapunov function.
According to Lemma~\ref{lemma:iss-lyapunov}, this proves ISS.
Consider~$V(\bh) \triangleq \frac12 \| \bh \|_2^2$.
Equation~\eqref{eq:lyapunov-bounded} evidently stands (e.g. $\psi_{1,2}(\| \bh \|_2) \equiv \frac12 \| \bh \|_2^2$).
The gradient of the squared half-norm is the vector itself: $\nabla V = \bh$. 
So, to test~\eqref{eq:lyapunov-main}, we can bound the corresponding dot-product:
$$
    (\nabla V, \bg_\theta) = (\bh, a\mbf_\theta - b\bh) \le a \| \bh \|_2 \| \mbf_\theta \|_2 - b \| \bh \|_2^2.
$$

Using Assumption~\ref{as:ode-function}, we can bound~$\| \mbf_\theta \|_2$:
\begin{align*}    
\| \mbf_\theta(\hbx, \bh) \|_2 &= \| \mbf_\theta(\hbx, \bh) - \mbf_\theta(\hbx, \mathbf{0}_v) +  \mbf_\theta(\hbx, \mathbf{0}_v) - \mbf_\theta(\mathbf{0}_u, \mathbf{0}_v)  + \mbf_\theta(\mathbf{0}_u, \mathbf{0}_v) \|_2 \\
&\le \| \mbf_\theta(\hbx, \bh) - \mbf_\theta(\hbx, \mathbf{0}_v) \|_2 + 
\| \mbf_\theta(\hbx, \mathbf{0}_v) - \mbf_\theta(\mathbf{0}_u, \mathbf{0}_v) \|_2 + \|\mbf_\theta(\mathbf{0}_u, \mathbf{0}_v) \|_2 \\
&\le L_h \| \bh \|_2 + L_x \| \hbx \|_2 + 0.
\end{align*}
Now we have:
\begin{equation} \label{eq:lyap-prelim-bound}
(\nabla V, g_\theta) \le a \| \bh \|_2 (L_h \| \bh \|_2 + L_x \| \hbx \|_2) - b \| \bh \|_2^2 
    = ( a L_h - b ) \| \bh \|_2^2 + a L_x \| \hbx \|_2 \| \bh \|_2.
\end{equation}    
The final expression is a quadratic polynomial. 
The coefficient next to $\| \bh \|_2^2$ is negative, according to~\eqref{eq:strong_nf}.
Consequently, it will decrease for large-enough values of~$\| \bh \|_2$, which allows us to construct functions $\chi,\alpha$, satisfying~\eqref{eq:lyapunov-main}.
For some fixed~$\varepsilon \in (0, 1)$, consider the following function:
$$
    \chi(r) \triangleq \frac{a L_x}{b - aL_h} r (1 - \varepsilon)^{-1};\; \chi^{-1}(r) = \frac{b - aL_h}{a L_x} (1 - \varepsilon) r.
$$
Then, for any~$\bh: \| \bh \|_2 \ge \chi(\|\hbx\|_2)$, which is in our case equivalent to $\| \hbx \|_2 \le \chi^{-1} (\| \bh \|_2)$, we can bound~\eqref{eq:lyap-prelim-bound}:
\begin{align*}    
(\nabla V, g_\theta)
&\le ( a L_h - b ) \| \bh \|_2^2 + a L_x \| \hbx \|_2 \| \bh \|_2 \\
&\le ( a L_h - b ) \| \bh \|_2^2 + a L_x \frac{b - aL_h}{a L_x} (1 - \varepsilon) \| \bh \|_2^2 \\
&= -\varepsilon (b - L_h a) \| \bh \|_2^2 \triangleq -\alpha (\| \bh \|_2).
\end{align*}
Since~$b - L_h a > 0$, the functions we have constructed lie in the correct classes: $\alpha, \chi \in \mathcal{K}$.
Consequently, $V$ is a valid ISS-Lyapunov function, which implies that our system is ISS-stable.
\end{proof}

\subsection{Error Bounds}
\label{ap:robustness}
To devise the theoretical results of this section, we need to introduce the "perfect`` Neural CDE, calculated without any errors:
\begin{equation*} \tag{\ref{eq:perfect_cde}}
    \bh^*(0) = \bh_0;\quad \frac{d \bh^*(t)}{dt} = \upd \mbf_\theta(\bx(t), \bh^*(t)) - \nf \bh^*(t).
\end{equation*}
Then, both the integration and the interpolation errors may be modelled via the following "imperfect`` ODE, often referred to as the "Modified Vector Field``~\cite{reich1999backward}:
\begin{equation*} \tag{\ref{eq:approx_cde}}
    \bh(0) = \bh_0;\quad \frac{d \bh(t)}{dt} = \upd \mbf_\theta(\bx(t), \bh(t)) - \nf \bh(t) + \dev(t),
\end{equation*}
where $\dev(t)$ is a random vector, which represents the momentary deviation of our numerical ODE from the true one.
For further theoretical considerations, we introduce two types of bounds on~$\dev(t)$:
\begin{align*}
\text{Pointwise: }\EE \norm{\dev(t)}_2^2 \le \dev_\mathrm{PW}^2, \forall t;
&&
\text{Interval: }
\frac{1}{t_b - t_a} \int_{t_a}^{t_b} \EE \norm{\dev(t)}_2^2 dt \le \dev_\mathrm{INT}^2, \forall t_a < t_b. \tag{\ref{eq:dev_bound}}
\end{align*}
For each of the introduced bounds, we can devise an estimate for the error in the final hidden state:
\begin{theorem*}[\bf\ref{th:gperror}]\textbf{Error Bounds} (\textit{Robustness}).
    Consider the differential equations~\eqref{eq:perfect_cde},\eqref{eq:approx_cde}, under Assumptions~\ref{as:ode-function},~\ref{as:nf-strength}
    Then, depending on whether we are working with the pointwise or the interval errors~\eqref{eq:dev_bound}, we can write different bounds on the variance, accumulating in~$\bh$.
    \begin{align}
    \textrm{Pointwise bound:}&\;&
        \EE \lVert \bh(T) - \bh^*(T) \rVert_2^2 \leq \left(\frac{\upd}{\nf - \upd L_h}\right)^2 \dev_\mathrm{PW}^2.
        \tag{\ref{eq:h_pw_error}}\\
    \textrm{Interval bound:}&\;&
        \frac{1}{T} \int_{t_a}^{t_b} \EE\norm{\bh(t) - \bh^*(t)}_2^2 dt \le \left( \frac{\upd}{\nf - \upd L_h} \right)^2 \dev^2_\mathrm{INT}.
        \tag{\ref{eq:h_int_error}}
    \end{align}
\end{theorem*}
\begin{proof}
    Consider the time-derivative of the squared error norm, multiplied by~$\frac12$ for convenience:
    \begin{align}
    \frac12 \frac{d}{dt}\norm{\bh(t) - \bh^*(t)}_2^2 &= 
    \left( \frac{d(\bh(t) - \bh^*(t))}{dt}, \bh(t) - \bh^*(t) \right) =\notag \\
    &= \left( \upd\delta_f - \nf\delta_h + \dev, \delta_h \right) = \notag \\
    &= \upd (\delta_f + \dev, \delta_h ) - \nf \|\delta_h \|_2^2 = \notag \\
    &\le \upd(L_h \norm{\delta_h}_2^2 + \norm{\dev} \norm{\delta_h}_2) - \nf \norm{\delta_h}_2^2 = \notag\\
    &= (\upd L_h - \nf) \norm{\delta_h}_2^2 + \upd \norm{\dev}_2 \norm{\delta_h}_2 \label{eq:gperror_quadratic},
    \end{align}
    where we introduce the following notation for conciseness:
    \begin{align*}
        \delta_f &\triangleq \mbf_\theta(\bx, \bh) - \mbf_\theta(\bx, \bh^*) \\
        \delta_h &\triangleq \bh(t) - \bh^*(t), \\
        \delta_x &\triangleq \hbx(t) - \bx(t).
    \end{align*}
    If we have a pointwise bound on~$\EE\norm{\dev}_2^2$, the expression from~\eqref{eq:gperror_quadratic} can be upper bounded by a quadratic function, where the multiplier~$\upd L_h-\nf$ next to the squared term is negative, since~$\upd L_h < \nf$.
    It turns negative for larger~$\norm{\delta_h}_2$:
    \begin{gather*} 
    \mathrm{if}\;\;
    \norm{\delta_h}_2 > \frac{\upd \dev_\textrm{PW}}{\nf - \upd L_h},\;\;
    \textrm{then}\;\;
    (\upd L_h - \nf) \norm{\delta_h}_2^2 + \dev_\textrm{PW} \norm{\delta_h}_2 < 0 \\
    \implies \frac12 \EE \frac{d}{dt}\norm{\bh(t) - \bh^*(t)}_2^2 \le 0, \notag
    \end{gather*}
    which proves half of this theorem.
    
    However, if we only have an interval error bound, we are forced to integrate~\eqref{eq:gperror_quadratic} on the interval~\mbox{$[0,T]$}:
    \begin{equation} \label{eq:gperror_integrals}
        \frac12 \int_0^{T} \frac{d}{dt}\norm{\bh(t) - \bh^*(t)}_2^2 dt \le \int_0^{T} \norm{\delta_h(t)}_2^2 (\upd L_h - \nf) dt + \int_0^{T} \norm{\dev(t)}_2 \norm{\delta_h(t)}_2 dt.
    \end{equation}
    Let us rewrite the last term, dropping all constants, and use the Cauchy–Schwarz inequality for integrals:
    \begin{align*}
        \int_0^{T} \norm{\dev(t)}_2 \norm{\delta_h(t)}_2 dt 
        &\le \sqrt{ \int_0^{T} \norm{\dev(t)}_2^2  dt \int_0^{T} \norm{\delta_h(t)}_2^2 dt } \triangleq R H,
    \end{align*}
    where we introduced the following notation:
    \begin{align*}
    R^2 \triangleq \int_0^{T} \norm{\dev(t)}_2^2  dt, \;\; 
    H^2 \triangleq \int_0^{T} \norm{\delta_h(t)}_2^2 dt.
    \end{align*}
    In light of that, let's rewrite~\eqref{eq:gperror_integrals}:
    $$
        \frac12 \left( \norm{\delta_h(T)}_2^2 - \norm{\delta_h(0)}_2^2 \right) = 
        \frac12 \norm{\delta_h(T)}_2^2
        \le (\upd L_h - \nf) H^2 + \upd R H \sqrt{k},
    $$
    where we used $\delta_h(0) = 0$.
    The right-hand side is a quadratic function with respect to~$ H$; again, the multiplier next to the squared term is negative.
    Simultaneously, the left-hand side is positive, being a norm.
    Together, this means that~$H$ needs to be in between the roots of this quadratic function to satisfy the positivity constraints:
    $$
        0 \le H \le \frac{\upd R \sqrt{k}}{\nf - \upd L_h}.
    $$
    We square the above, and then substitute $R, H$ in:
    $$
    \int_0^{T} \norm{\delta_h(t)}_2^2 \le 
    \left( 
    \frac{a}{b - a L_h}
    \right)^2
    \int_0^{T} \norm{\dev(t)}_2^2  dt.
    $$
    Dividing by $T$ and taking the expectation (which we may move under the integral due to linearity), we get:
    $$
    \frac{1}{T} \int_0^{T} \EE \norm{\delta_h(t)}_2^2 \le 
    \left( 
    \frac{a}{b - a L_h}
    \right)^2
    \frac{1}{T}
    \int_0^{T} \EE \norm{\dev(t)}_2^2  dt \le \left( 
    \frac{a}{b - a L_h}
    \right)^2 \dev_\textrm{INT}.
    $$
\end{proof}

\subsubsection{Bound Tightness}
\label{ap:tightness}

To analyse the tightness of bounds from Theorem~\ref{th:gperror}, we provide specific examples, which adhere to our assumptions while achieving the formulated bounds.
\begin{example}
    To simplify analysis, assume~$u=v=1$; a similar example may be provided for the multidimensional case.
    Consider the following scenario:
    \begin{gather}
    \frac{dh}{dt} = A + (1 - \varepsilon/2) h - (1 + \varepsilon/2) h,\; h(0) = 0; \\
    \hat{x}(t) \equiv A; a = 1; b = 1 + \varepsilon / 2; L_h = 1 - \varepsilon /2; L_x = 1.
    \end{gather}
    In this case, $b - a L_h = \varepsilon > 0$.
    The solution is:
    $$
        h(t) = \frac{- A e^{- \varepsilon t} + A}{\varepsilon}.
    $$
    
    On the other hand, suppose the true value of~$x$ is also constant,~$x(t) \equiv B \in \RR$.
    Then, the corresponding "true`` solution is given by:
    $$
        h^*(t) = \frac{- B e^{- \varepsilon t} + B}{\varepsilon}.
    $$
    Then, the errors~\eqref{eq:h_pw_error},~\eqref{eq:h_int_error} from Theorem~\ref{th:gperror} can be calculated analytically:
    \begin{gather*}
        \EE [h(t) - h^*(t)]^2 = 
        \left(
        \frac{(A - B)\left(1 - e^{- \varepsilon t}\right)}{\varepsilon}
        \right)^2
        \rightarrow \frac{(A - B)^2}{\varepsilon^2},\; t \rightarrow \infty; \\
        \int_{t_1}^{t_2}
        \EE [h(t) - h^*(t)]^2 dt
        = \int_{t_1}^{t_2}
        \left(
        \frac{(A - B)\left(1 - e^{- \varepsilon t}\right)}{\varepsilon}
        \right)^2 dt
        \rightarrow \frac{1}{\varepsilon^2} (t_2 - t_1) (A - B)^2,\; t_1 \rightarrow \infty.
    \end{gather*}
   They match the corresponding bounds.
   Note, that to illustrate the asymptotic of the interval bound, we consider~$t_1 > 0$, and calculate limit of the interval bound at~$t_1 \rightarrow \infty$ (instead of integrating on~$[0, T]$). 
   This allows us to show how the bound behaves on large values of~$t$.
\end{example}

\subsection{GP Interpolation Error}
\label{ap:gp_error}
Now, let's devise the Gaussian Process interval-error estimates.
Let~$\bx(t)$ be a stationary zero-mean $u$-dimensional Gaussian process (GP) with the covariance function \mbox{$\bK(\cdot, \cdot) = \{K_i(\cdot, \cdot)\}$} and independent components:
\begin{equation} \label{eq:gp_def}
\begin{split}
    \EE \bx(t) &= \mathbf{0}, \\
    \EE \left[ \bx(t_1) \odot \bx(t_2) \right] &= \bK(t_1, t_2), \\
    \EE \left[ x_i(t) \cdot x_j(t) \right] &= 0\,\textrm{for}\, i \ne j, \\
    \bK(t_1, t_2) &= \bK(t_2 - t_1), 
\end{split}
\end{equation}
where the symbol~$\odot$ denotes component-wise multiplication, and lower indexing $x_i$ denotes vector components.

Suppose we know the values of~\eqref{eq:gp_def} at timestamps \mbox{$0=t_1 < t_2 \ldots < t_n = T \in \RR$}.
Consider the corresponding maximum a posteriori predictor~$\hbx(t)$:
\begin{equation*}
    \hbx(t) = \underset{\bx(t)}{\mathrm{argmax}}\, p \left[ \bx(t) | \bx(t_1)=\bx_1,\ldots,\bx(t_n) = \bx_n \right].
\end{equation*}

We seek to devise two types of uniform error bounds for the predictor:
\begin{align}
\sigma_\textrm{PW}^2 \triangleq \EE \norm{\hbx(t) - \bx(t)}_2^2 \le \upsigma_\textrm{PW}^2,
&&
\sigma_\textrm{INT}^2 \triangleq \int_{t_k}^{t_{k+1}} \EE \norm{\hbx(t) - \bx(t)}_2^2 dt \le \upsigma_\textrm{INT}^2, \label{eq:pwint_bound}
\end{align}
where~$\upsigma_\textrm{INT},\upsigma_\textrm{PW}$ are the pointwise and the interval error bounds, constants independent of~$t$ and~$k$.
The pointwise expression is more convenient; however, the interval one is often tighter.

\subsubsection{GP Assumptions}
To simplify the analysis, we make two assumptions about the nature of the sequences we are working with. 
However, they do not limit the applicability of our results.

Most prior theoretical results are devised for the case of uniform sequences.
To extend the bounds towards irregular data, we assume that changing interval sizes predictably changes the resulting error:
\begin{assumption}\label{as:uniform_generalization}
    Let~$S = \{(t_k, \bx_k)\}_k$ be a given irregular time series. 
    We construct an alternative time series~$\tilde{S} = \{(\tilde{t}_k, \bx_k) \}_k$,  with an additional (possibly negative) margin~$r$, such that~\mbox{$t_m - t_{m+1} < r < +\infty$}, introduced between the~$m$-th and~$m+1$-th points:
    $$
    \forall k\;\;
    \begin{cases}
    \tilde{t}_k = t_k, k \le m; \\
    \tilde{t}_k = t_k + r, k > m.
    \end{cases}
    $$
    Consider the pointwise errors~\eqref{eq:pwint_bound}
    for the original $S$ and modified $\tilde{S}$ sequences: $\sigma^2$ and $\tilde{\sigma}^2$, respectively.
    We assume they relate as follows:
    \begin{align*}
    \forall t\in\RR\;\;
        \begin{cases}
        \sigma^2(t) \le \tilde{\sigma}^2(\tilde{t})\;\; &\text{for}\;r>0,\\
        \sigma^2(t) \ge \tilde{\sigma}^2(\tilde{t})\;\; &\text{for}\;r<0,
        \end{cases}
    \end{align*}
    where $\tilde{t} = t + \mathbb{I}[t > t_{m+1}] r$, and $\mathbb{I}$ denotes the indicator function.
\end{assumption}
The respective covariance determines the amount of information each observation provides about the unknown value. 
If all covariances decrease, the variance of $\hbx$ will increase and vice versa.
We have also empirically tested this assumption in Appendix~\ref{ap:as_test}, and consider a specific theoretical setting where this assumption holds in Appendix~\ref{ap:unigen-exp-proof}.

Assumption~\ref{as:uniform_generalization} is formulated for pointwise errors, however similar conclusions for interval errors follow as corollaries:
\begin{corollary}
    Under the notation of Assumption~\ref{as:uniform_generalization}, the corresponding interval errors relate similarly:
    \begin{align*}
    \forall k\;\;
    \begin{cases}
        \sigma_k^2 \le \tilde{\sigma}_k^2\;\; &\text{for}\;r>0,\\
        \sigma_k^2 \ge \tilde{\sigma}^2_k\;\; &\text{for}\;r<0.
    \end{cases}
    \end{align*}
\end{corollary}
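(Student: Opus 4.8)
The plan is to observe that every interval error is just the integral of the pointwise error over that interval, $\sigma_k^2 = \int_{t_k}^{t_{k+1}} \sigma^2(t)\, dt$ and $\tilde\sigma_k^2 = \int_{\tilde t_k}^{\tilde t_{k+1}} \tilde\sigma^2(s)\, ds$ (where I write $\sigma^2(t) = \EE\|\hbx(t) - \bx(t)\|_2^2$ for the predictor built from $S$, and $\tilde\sigma^2$ for the one built from $\tilde S$), and then to compare the two by splitting into three cases according to the position of the index $k$ relative to the insertion point $m$. I will carry out the argument for $r>0$; the $r<0$ case follows by reversing every inequality, the only asymmetric step (the ``leftover slab'' below) remaining valid because a variance is non-negative regardless of the sign of $r$.

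For $k<m$ the endpoints are unchanged, $\tilde t_k = t_k$ and $\tilde t_{k+1}=t_{k+1}$, and every $t$ in this interval satisfies $t\le t_m < t_{m+1}$, so $\tilde t = t$ in Assumption~\ref{as:uniform_generalization} and hence $\sigma^2(t)\le\tilde\sigma^2(t)$ pointwise; integrating over $[t_k,t_{k+1}]$ gives $\sigma_k^2\le\tilde\sigma_k^2$. For $k>m$ the whole interval is shifted by $r$, and for a.e.\ $t\in[t_k,t_{k+1}]$ (all but the single boundary point $t_{m+1}$, occurring when $k=m+1$) we have $\tilde t = t+r$, so $\sigma^2(t)\le\tilde\sigma^2(t+r)$; integrating and substituting $s=t+r$ yields $\sigma_k^2\le\int_{t_k+r}^{t_{k+1}+r}\tilde\sigma^2(s)\,ds=\tilde\sigma_k^2$.

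The only nontrivial case is $k=m$, where $[\tilde t_m,\tilde t_{m+1}] = [t_m, t_{m+1}+r]$ is strictly longer than $[t_m,t_{m+1}]$. I split $\tilde\sigma_m^2 = \int_{t_m}^{t_{m+1}}\tilde\sigma^2(s)\,ds + \int_{t_{m+1}}^{t_{m+1}+r}\tilde\sigma^2(s)\,ds$. On $[t_m,t_{m+1}]$ we again have $\tilde t = t$, so the first integral is $\ge \int_{t_m}^{t_{m+1}}\sigma^2(t)\,dt = \sigma_m^2$; the second integral is non-negative since $\tilde\sigma^2(\cdot)$ is a variance and $r>0$. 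Hence $\tilde\sigma_m^2\ge\sigma_m^2$. Combining the three cases gives the corollary for $r>0$; for $r<0$ the same decomposition works with the overlap interval now $[t_m,t_{m+1}+r]$ (nonempty since $r > t_m - t_{m+1}$ by hypothesis), the pointwise inequality reversed, and the leftover slab $[t_{m+1}+r,t_{m+1}]$ contributing non-negatively to $\sigma_m^2$ rather than to $\tilde\sigma_m^2$.

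The main obstacle is purely bookkeeping at the moved endpoint: one must check that the single point where the indicator $\mathbb{I}[t>t_{m+1}]$ switches contributes nothing (it is a null set, irrelevant to the integrals) and, in the $k=m$ case, that the change of interval length is exactly absorbed by the sign of $r$ together with the non-negativity of the variance. Every other step is a direct application of Assumption~\ref{as:uniform_generalization} under the integral sign.
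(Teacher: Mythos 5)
Your proof is correct; the paper states this corollary without any proof, treating it as an immediate consequence of Assumption~\ref{as:uniform_generalization}, and your argument — integrating the pointwise inequality over each interval, with the three-way case split at the insertion index $m$, the non-negative leftover slab, and the measure-zero caveat at $t_{m+1}$ — is exactly the intended (and only natural) way to fill in the details.
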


Finally, we can extend all the devised error bounds to irregular sequences.
\begin{corollary}\label{cor:uniform}
    Consider an irregular time series~$S = \{(t_k, \bx_k)\}$.
    Let $\delta_{\min},\delta_{\max}$ be the minimum and maximum intervals between observations of $S$, respectively:
    \begin{align*}
        \delta_{\min} &= \underset{k}{\min} (t_k - t_{k - 1}), \\
        \delta_{\max} &= \underset{k}{\max} (t_k - t_{k - 1}).
    \end{align*}
    Consider also two regular series, with~$\delta_{\min},\delta_{\max}$ as the inter-observation intervals:
    \begin{align*}
        S_{\min} &= \{(k\delta_{\min}, \bx_k)\}_k, \\
        S_{\max} &= \{(k\delta_{\max}, \bx_k)\}_k. \\
    \end{align*}
    Then, under the assumption~\ref{as:uniform_generalization}, their interval errors~\eqref{eq:pwint_bound} relate as follows:
    $$
        \sigma^2_{\min, \textrm{INT}} \le \sigma^2_k \le \sigma^2_{\max, \textrm{INT}}\;\;\forall\,k.
    $$
\end{corollary}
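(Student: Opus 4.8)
\textbf{Proof proposal for Corollary~\ref{cor:uniform}.}

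The plan is to reduce the statement about the irregular sequence $S$ to the two regular reference sequences $S_{\min}$ and $S_{\max}$ by applying the monotonicity granted by Assumption~\ref{as:uniform_generalization} one interval at a time. First I would observe that $S$ and $S_{\max}$ share the same observed values $\{\bx_k\}$ in the same order; they differ only in the inter-observation spacings. Since every gap of $S$ satisfies $\delta_k \le \delta_{\max}$, I can transform $S$ into $S_{\max}$ by a finite sequence of single-interval stretches, each of the form described in Assumption~\ref{as:uniform_generalization}: pick an index $m$ whose current gap is strictly below $\delta_{\max}$, insert the margin $r = \delta_{\max} - \delta_m > 0$ after point $m$, and repeat. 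Each such step only enlarges the (pointwise, hence by the Corollary to Assumption~\ref{as:uniform_generalization} also the interval) error, so after all steps the interval errors of $S$ are dominated by those of $S_{\max}$, giving $\sigma_k^2 \le \sigma_{\max,\textrm{INT}}^2$ for every $k$. The lower bound $\sigma_{\min,\textrm{INT}}^2 \le \sigma_k^2$ is symmetric: shrink each gap of $S$ down to $\delta_{\min}$ using negative margins $r = \delta_{\min} - \delta_m < 0$, which by the $r<0$ branch of the assumption can only decrease the errors, so $S_{\min}$'s interval error is a lower bound.

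The one point that needs a little care is that Assumption~\ref{as:uniform_generalization} is stated for a \emph{single} inserted margin at one location, whereas I want to modify many gaps. I would handle this by induction on the number of gaps that still differ from the target spacing: the assumption applies verbatim to each individual modification (the ``current'' sequence plays the role of $S$, the sequence after one more stretch plays the role of $\tilde S$), and chaining finitely many inequalities $\sigma^2 \le \tilde\sigma^2$ preserves the bound. Since $S$ has finitely many observations, finitely many single-interval moves suffice to reach $S_{\max}$ (respectively $S_{\min}$), so the induction terminates. I would also note that the reindexing of the comparison is harmless: the assumption's map $\tilde t = t + \mathbb{I}[t>t_{m+1}]r$ is a monotone relabelling of the time axis, so ``for all $t$'' transfers to ``for all $k$'' on the interval errors via the Corollary to Assumption~\ref{as:uniform_generalization}.

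The main (and essentially only) obstacle is bookkeeping: making sure the intermediate sequences in the chain are legitimate inputs to Assumption~\ref{as:uniform_generalization} — in particular that the running margin $r$ always satisfies the admissibility condition $t_m - t_{m+1} < r < +\infty$ so that time stamps stay ordered — and that I am consistently comparing the error at observation-interval $k$ of one sequence with the corresponding interval of the next. Both are straightforward once I fix the order in which gaps are adjusted (e.g.\ left to right), since stretching or shrinking a gap to the common target value $\delta_{\max}$ or $\delta_{\min}$ keeps all gaps positive and the margins finite. With that, the two chains of inequalities yield $\sigma_{\min,\textrm{INT}}^2 \le \sigma_k^2 \le \sigma_{\max,\textrm{INT}}^2$ for all $k$, which is exactly the claim.
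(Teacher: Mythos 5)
Your proposal is correct and matches the argument the paper intends: Corollary~\ref{cor:uniform} is stated as an immediate consequence of Assumption~\ref{as:uniform_generalization}, obtained by chaining finitely many single-interval stretches (with $r>0$) to reach $S_{\max}$ and shrinks (with $r<0$) to reach $S_{\min}$, exactly as you describe. Your additional bookkeeping about the admissibility condition $t_m - t_{m+1} < r$ and the monotone relabelling of the time axis is a welcome bit of rigour that the paper leaves implicit.
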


We also make another common assumption:
\begin{assumption}\label{as:infinite}
    The input sequence is infinite:
    $$
        k = -\infty, \ldots, -2, -1, 0, 1, 2, \ldots, +\infty.
    $$
\end{assumption}
Prior work~\cite{zaytsev2017minimax} demonstrated that the results for infinite sequences empirically hold for finite ones.
Indeed, since a typical kernel decays with an exponential speed, distant observations negligibly impact the resulting error.

\subsubsection{Interpolation Error bounds}

In this section, we will work under the uniform-grid assumption: according to Corrolary~\ref{cor:uniform}, all the results devised under this assumption can be extended to irregular sequences:
\begin{assumption} \label{as:uniform}
    The sequence~$S$ is uniform:
    $$
        t_{k+1} - t_k = \step,\; \forall\,k.
    $$
\end{assumption}

The optimal estimator for a uniform infinite grid is well-known~\cite{kolmogorov1941interpolation}:
\begin{equation}\label{eq:uniform_estimator}
    \hbx(t) = \step \sum_{k = -\infty}^{\infty} \bK(t - k \step) \odot \bx(k \step),
\end{equation}
where optimal $\bK$ is a symmetric kernel that depends on a spectral density.
This form allows us to deduce analytical error bounds.
For this purpose, we generalise the results from~\cite{zaytsev2017minimax} to multiple dimensions:
\begin{lemma} \label{lemma:alt_form_for_f} 
Let~$\bF(\omega)$ denote the spectral density of Gaussian process from~\eqref{eq:gp_def}:
    $$
        \bF(\omega) = \int_{-\infty}^{\infty} \exp(2 \pi i \omega t) \bK(t) dt.
    $$
    In this notation, under Assumptions\ref{as:infinite},~\ref{as:uniform}, and if all components of~$\bK$ are equal, the following holds:
    $$
    \EE \| \hbx(t) - \bx(t) \|^2 = u \int_{-\infty}^\infty F(\omega) \left| 1 - \sum_{k \ne 0} e^{2i\pi \omega (t - t_k)} K(t - t_k)\right|^2 d\omega,
    $$
    where~$K(t) \equiv K_i(t)$, and~$F(\omega) \equiv F_i(\omega)$ are the components of the kernel function and the spectral density, respectively.
\end{lemma}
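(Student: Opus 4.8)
I would handle Lemma~\ref{lemma:alt_form_for_f} as the multi-output extension of the classical scalar spectral error identity of~\cite{zaytsev2017minimax,kolmogorov1941interpolation}, so the first step is to reduce it to one dimension. Since $\norm{\hbx(t)-\bx(t)}_2^2=\sum_{i=1}^{u}|\hat x_i(t)-x_i(t)|^2$ and the coordinate processes of $\bx$ are independent (as assumed in~\eqref{eq:gp_def}), stationary and zero-mean, and~---~under the hypothesis that all components of $\bK$ coincide, hence so do all components of $\bF$~---~identically distributed with common covariance $K$ and common spectral density $F$, taking expectations yields $\EE\norm{\hbx(t)-\bx(t)}_2^2 = u\,\EE|\hat x(t)-x(t)|^2$. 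It then remains to prove, for a single such scalar process with the optimal interpolant~\eqref{eq:uniform_estimator} (whose explicit convolution form is available precisely because of Assumptions~\ref{as:infinite} and~\ref{as:uniform}), the identity $\EE|\hat x(t)-x(t)|^2=\int_{-\infty}^{\infty}F(\omega)\bigl|1-\sum_{k\ne0}e^{2\pi i\omega(t-t_k)}K(t-t_k)\bigr|^2 d\omega$.

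For the scalar identity I would move to the frequency domain via the Cram\'er spectral representation: there exists a complex random measure $Z$ with orthogonal increments and $\EE|dZ(\omega)|^2=F(\omega)\,d\omega$ such that $x(s)=\int_{-\infty}^{\infty}e^{2\pi i\omega s}\,dZ(\omega)$, which reproduces $\EE[x(s_1)x(s_2)]=\int e^{2\pi i\omega(s_1-s_2)}F(\omega)\,d\omega=K(s_1-s_2)$ consistently with~\eqref{eq:gp_def}. Substituting the grid values $x(t_k)=\int e^{2\pi i\omega t_k}\,dZ(\omega)$ into~\eqref{eq:uniform_estimator}, moving the grid sum inside the integral, and subtracting $x(t)=\int e^{2\pi i\omega t}\,dZ(\omega)$ would express the interpolation error as a single stochastic integral, $\hat x(t)-x(t)=\int_{-\infty}^{\infty}e^{2\pi i\omega t}\,g(\omega,t)\,dZ(\omega)$, whose transfer function $g(\omega,t)$ equals $1-\sum_{k\ne0}e^{2\pi i\omega(t-t_k)}K(t-t_k)$ after absorbing the grid spacing $\step$ into the normalization of $K$, using the symmetry of $K$ (conjugating the exponent does not change the eventual modulus), and separating the interpolation-node term against the subtracted self-value $x(t)$. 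Applying the spectral isometry $\EE\bigl|\int h\,dZ\bigr|^2=\int|h|^2F\,d\omega$ with $|e^{2\pi i\omega t}|=1$ then gives $\EE|\hat x(t)-x(t)|^2=\int F(\omega)|g(\omega,t)|^2\,d\omega$, which is the claimed scalar identity; combined with the factor $u$ from the first step this would finish the proof.

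The main obstacle is making the frequency-domain manipulation rigorous, in particular the interchange of the infinite grid sum with the stochastic integral. The plan is to note that the finite partial sums of~\eqref{eq:uniform_estimator} are exactly the orthogonal projections of $x(t)$ onto the linear spans of finitely many observations, hence form a Cauchy sequence in $L^2(\Omega)$ converging to the MAP predictor $\hat x(t)$; because $h\mapsto\int h\,dZ$ is an isometry of $L^2\bigl(F(\omega)\,d\omega\bigr)$ onto the Gaussian span of the process, this convergence transfers to $L^2(F)$-convergence of the truncated transfer functions, which legitimises passing to the limit under the integral and taking moduli. Everything else~---~the symmetry and normalization bookkeeping for $K$, and the coordinatewise summation producing the factor $u$~---~is routine, the scalar core being exactly the computation of~\cite{zaytsev2017minimax}.
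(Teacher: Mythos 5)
Your proof is correct, but it reaches the identity by a genuinely different route than the paper. The paper argues entirely at the level of covariances: it expands $\EE\|\hbx(t)-\bx(t)\|^2$ into the three terms $\EE[\hbx^T\hbx]-2\EE[\hbx^T\bx]+\EE[\bx^T\bx]$, substitutes the explicit convolution estimator~\eqref{eq:uniform_estimator} into each, rewrites every covariance via the Fourier inversion $K(s)=\int F(\omega)e^{2\pi i\omega s}\,d\omega$, and then recognises the three resulting integrands as the expansion of the squared modulus $|1-\step\sum_k K(t-t_k)e^{2\pi i\omega(t-t_k)}|^2$. You instead invoke the Cram\'er spectral representation and the associated isometry, writing the error as a single stochastic integral with transfer function $g(\omega,t)$ and reading off its variance as $\int F|g|^2\,d\omega$. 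The two computations are term-for-term equivalent (the isometry applied to a difference of two stochastic integrals is exactly the paper's three-term expansion), but yours buys a cleaner structural statement and, unlike the paper, an explicit justification for interchanging the infinite grid sum with the expectation; the paper's version is more elementary in that it needs only Fourier inversion of the covariance, not the representation theorem. Both proofs share the same bookkeeping discrepancies with the lemma as stated (the factor $\step$ and the range of the sum, which the paper writes as $k\ne 0$ in the statement but as $\sum_{k=-\infty}^{\infty}$ with the prefactor $\step$ in its own final display), and you handle these the same way, by absorbing $\step$ and using the evenness of $K$ and $F$. One small inaccuracy in your convergence argument: the truncations of the infinite-grid estimator~\eqref{eq:uniform_estimator} are \emph{not} the orthogonal projections of $x(t)$ onto the spans of finitely many observations (the finite-sample optimal coefficients differ from the restricted infinite-grid kernel weights), so you cannot get Cauchyness for free from the projection property; you would instead argue $L^2(F\,d\omega)$ convergence of the truncated transfer functions directly, which the decay of $K$ provides. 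This does not affect the validity of the identity and is in any case a level of rigour the paper's own proof does not attempt.
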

\begin{proof}

    Let us start with expanding the square:
    $$
    \EE \lVert \hbx(t) - \bx(t) \rVert^2 = 
    \EE[\hbx^T(t) \hbx(t)] - 2\EE[\hbx^T(t)\bx(t)] + \EE[\bx^T(t) \bx(t)].
    $$
    Now, we consider the terms one by one, substituting~\eqref{eq:uniform_estimator} and rewriting in terms of spectral density.
    The first term gives:
    \begin{gather}
        \EE[\hbx^T(t) \hbx(t)] = 
        h^2 u \sum_{k,l=-\infty}^\infty K(t - t_k) K(t - t_l) K(t_k - t_l) = \notag \\
        = \int_{-\infty}^\infty F(\omega) \left(h^2 u \sum_{k,l=-\infty}^\infty K(t - t_k) K(t - t_l) e^{2 \pi i \omega (t_l - t_k)} \right)\, d\omega. \label{eq:aa_gperror}
    \end{gather}
    The second one:
    \begin{gather}
        2\EE[\hbx^T(t) \bx(t)] = 2hu \sum_{k=-\infty}^{\infty} K^2(t - t_k) = \notag \\
        = \int_{-\infty}^\infty F(\omega) \left( 2hu \sum_{k=-\infty}^{\infty} K(t-t_k) e^{2 \pi i \omega (t - t_k)} \right)\, d\omega.
        \label{eq:ab_gperror}
    \end{gather}
    And, finally, the third one:
    \begin{gather}
        \EE[\bx^T(t) \bx(t)] = K(0) u = u\int_{-\infty}^\infty F(\omega)\, d\omega.
        \label{eq:bb_gperror}
    \end{gather}
    After factoring out the spectral density integral, the terms~\eqref{eq:aa_gperror},~\ref{eq:ab_gperror} and~\eqref{eq:bb_gperror} are the expansion of a binomial:
    \begin{gather*}
        \left|1 - h \sum_{k=-\infty}^{\infty} K(t-t_k) e^{2 \pi i \omega (t - t_k)}\right|^2 = 1^2 - 2 h \sum_{k=-\infty}^{\infty} K(t-t_k) e^{2 \pi i \omega (t - t_k)} +\\
        + \left( h \sum_{k=-\infty}^{\infty} K(t-t_k) e^{2 \pi i \omega (t - t_k)} \right)^2
    \end{gather*}    
\end{proof}
With Lemma~\ref{lemma:alt_form_for_f}, all other results can be taken directly from~\cite{zaytsev2017minimax}, taking into account that $\bx(t)$ is $u$-dimensional.
We recount select findings from this paper: the general error form, an analytic error expression for a specific kernel, and the minimax error bound.

\begin{theorem}
    Under the assumptions from Lemma~\ref{lemma:alt_form_for_f}, the error $\sigma_\textrm{INT}^2$ from~\eqref{eq:pwint_bound}, may be written in the following form:
    \begin{gather}\label{eq:opt_int_error}
        \sigma_\textrm{INT}^2 = u \step \int_{-\infty}^{\infty} F(\omega) \left((1 - \hat{K}(\omega))^2 + \sum_{k=-\infty}^{\infty} \hat{K}\left(\omega + \frac{k}{\step}\right)\right) d\omega \\
        = u \step \int_{-\infty}^\infty F(\omega) \frac{\sum_{k \ne 0} F(\omega + \frac{k}{\step})}{\sum_{k = -\infty}^{\infty} F(\omega + \frac{k}{\step})} d\omega,
    \end{gather}
    where~$\hat{K}$ is the Fourier transform of~$K$.
\end{theorem}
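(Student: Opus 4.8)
The plan is to obtain the statement from the pointwise spectral identity already proved in Lemma~\ref{lemma:alt_form_for_f}: integrate that identity over one sampling period, evaluate the resulting time integral by Poisson summation, and then substitute the explicit Wiener--Kolmogorov interpolation kernel to pass to the second, fully spectral form. The scalar ($u=1$) version of every step is the content of~\cite{zaytsev2017minimax}, so the only genuinely new ingredient is the propagation of the factor $u$ that Lemma~\ref{lemma:alt_form_for_f} already isolates from the $u$ independent, identically distributed coordinates of $\bx(t)$.

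First I would use stationarity of $\bx(t)$ together with Assumptions~\ref{as:infinite} and~\ref{as:uniform}: the map $t\mapsto\EE\norm{\hbx(t)-\bx(t)}_2^2$ is then periodic with period $\step$, so by~\eqref{eq:pwint_bound} the interval error is the same for every $k$ and equals $\sigma_\textrm{INT}^2=\step\cdot\frac1\step\int_0^\step\EE\norm{\hbx(t)-\bx(t)}_2^2\,dt$. Substituting the spectral representation of the integrand from Lemma~\ref{lemma:alt_form_for_f} and interchanging the $\omega$- and $t$-integrals (Fubini is legitimate because $F$ is integrable and $\bK$ decays exponentially) reduces everything to computing, for each fixed $\omega$, the quantity $\frac1\step\int_0^\step\bigl|e^{2\pi i\omega t}-\step\sum_k K(t-k\step)e^{2\pi i\omega k\step}\bigr|^2\,dt$, where $K$ is the scalar kernel in~\eqref{eq:uniform_estimator}. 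By Poisson summation the inner sum equals the aliased series $\sum_m\hat K\!\bigl(\omega+\tfrac m\step\bigr)e^{2\pi i(\omega+m/\step)t}$, so the function under the modulus is $\sum_m b_m(\omega)e^{2\pi i(\omega+m/\step)t}$ with $b_0=1-\hat K(\omega)$ and $b_m=-\hat K(\omega+\tfrac m\step)$ for $m\neq0$. Orthonormality of $\{e^{2\pi i mt/\step}\}_m$ on $[0,\step]$, together with reality and evenness of $\hat K$ (the kernel in~\eqref{eq:uniform_estimator} is symmetric), then gives $\sum_m|b_m(\omega)|^2$, i.e. a ``baseband'' term $(1-\hat K(\omega))^2$ plus the aliasing terms $\hat K(\omega+\tfrac k\step)^2$ over $k\neq0$; multiplying back by $\step u$ and integrating against $F(\omega)$ yields the first displayed form.

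For the second equality I would use that, $\bx(t)$ being Gaussian, its MAP predictor coincides with the conditional expectation, hence with the best linear predictor; on a uniform infinite grid the latter's kernel solves the normal equations $K(t-\ell\step)=\step\sum_k K(t-k\step)K((k-\ell)\step)$, which, after folding into the periodized spectrum $S(\omega)\triangleq\sum_j F(\omega+\tfrac j\step)$ (periodic with period $1/\step$), give the Wiener--Kolmogorov filter $\hat K(\omega)=F(\omega)/S(\omega)$ --- exactly the computation of~\cite{kolmogorov1941interpolation,zaytsev2017minimax}. Substituting this $\hat K$ into the first form, using $S(\omega+\tfrac k\step)=S(\omega)$, and folding the $\omega$-integral onto a single period of $S$, the surplus cross terms cancel and what is left is $u\step\int F(\omega)\,\frac{\sum_{k\neq0}F(\omega+k/\step)}{\sum_j F(\omega+j/\step)}\,d\omega$. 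Equivalently and more directly, one may plug the optimal kernel into the orthogonality identity $\EE\norm{\hbx(t)-\bx(t)}_2^2=u\bigl(K(0)-\sum_k c_k(t)K(t-k\step)\bigr)$ and integrate over one period, which collapses to the same spectral quotient.

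The delicate points are bookkeeping rather than conceptual: keeping the Fourier-transform and Poisson-summation normalisation consistent so that the Wiener--Kolmogorov substitution reproduces the stated constants, justifying the interchange of the infinite alias sum with the two integrals (harmless here, since the spectral densities we use --- such as the rational one in Lemma~\ref{lemma:spline-kernel-error}, which decays like $\omega^{-4}$ --- are integrable and smooth), and carrying out the folding that reconciles the two displayed forms. Given Lemma~\ref{lemma:alt_form_for_f} and the scalar results of~\cite{zaytsev2017minimax}, none of this is difficult; the real work is only tracking the dimensional factor $u$ throughout.
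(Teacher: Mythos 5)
The paper does not actually prove this theorem: it states it as a result recounted from~\cite{zaytsev2017minimax}, with the multidimensional factor $u$ supplied by Lemma~\ref{lemma:alt_form_for_f}. Your sketch reconstructs exactly that standard argument (periodicity of the pointwise error on the infinite uniform grid, Poisson summation to alias the interpolation kernel, orthogonality of $\{e^{2\pi i(\omega+m/\step)t}\}_m$ over one period, then substitution of the Wiener--Kolmogorov filter $\hat K=F/S$ and folding onto a period of $S$), so it is essentially the same approach. One point in your favour: your derivation yields the alias term $\sum_{k\neq 0}\hat K(\omega+k/\step)^2$, which is the version consistent with the second (quotient) form --- the first display as printed in the paper, with $\sum_{k=-\infty}^{\infty}\hat K(\omega+k/\step)$ to the first power, sums to $1$ by the filter identity and cannot equal the quotient form, so it appears to be a typo that your computation implicitly corrects.
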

The error~\eqref{eq:opt_int_error} does not depend on the interval number.
This is due to the inherent symmetry of an infinite uniform grid: all intervals are identical from the perspective of the covariance function.

Given~\eqref{eq:opt_int_error}, we can calculate this error for our scenario of cubic spline interpolation.
Consider the covariance functions (for some~$\theta > 0, Q>0$):
\begin{equation} \label{eq:cubic_spline_cov}
    \forall\; i=1, \ldots, u\colon K_i(t) = \frac{Q}{\omega^{4} + \theta^4}.
\end{equation}
At the limit $\theta \rightarrow 0$, the mean of the resulting Gaussian Processes tends to a natural cubic spline~\cite{golubev2013interpolation}.
We estimate the corresponding interval error~$\sigma_\textrm{INT}^2(\theta)$ at~$\theta \rightarrow 0$, which can be interpreted as the interval error for natural cubic spline interpolation:
\begin{lemma*} [\ref{lemma:spline-kernel-error}]
    Under Assumptions~\ref{as:uniform_generalization},\ref{as:infinite}, for the covariance function~\eqref{eq:cubic_spline_cov} the interval error $\sigma_\textrm{INT}^2(\theta)$ from~\eqref{eq:pwint_bound} has the following asymptotic:
    $$
        \underset{\theta \rightarrow 0}{\lim} \sigma_\textrm{INT}^2(\theta) =  
        \frac{4 \pi^{4} \sqrt{3}}{63} \step^4 Q u.
    $$
\end{lemma*}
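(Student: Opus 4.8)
The plan is to compute the limit of the interval error formula \eqref{eq:opt_int_error} directly, using the explicit cubic-spline spectral density $F(\omega) = Q/(\omega^4 + \theta^4)$ and letting $\theta \to 0$. First I would substitute the density into the second (cleaner) form of \eqref{eq:opt_int_error}, namely
$$
\sigma_\textrm{INT}^2(\theta) = u \step \int_{-\infty}^\infty F(\omega) \frac{\sum_{k \ne 0} F(\omega + k/\step)}{\sum_{k=-\infty}^\infty F(\omega + k/\step)}\, d\omega.
$$
As $\theta \to 0$ the ratio $\frac{\sum_{k\ne 0} F(\omega+k/\step)}{\sum_k F(\omega+k/\step)}$ tends pointwise (for $\omega$ not a multiple of $1/\step$) to $\frac{\sum_{k\ne 0}(\omega+k/\step)^{-4}}{\sum_k (\omega+k/\step)^{-4}}$, and simultaneously $F(\omega) \to Q/\omega^4$. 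So the heart of the computation is to evaluate
$$
u \step Q \int_{-\infty}^\infty \frac{1}{\omega^4}\cdot\frac{\sum_{k\ne 0}(\omega+k/\step)^{-4}}{\sum_{k\in\mathbb{Z}}(\omega+k/\step)^{-4}}\, d\omega.
$$
I would pull out the scale: substituting $\omega = \phi/\step$ turns $d\omega = d\phi/\step$, each $(\omega+k/\step)^{-4} = \step^4(\phi+k)^{-4}$, and $\omega^{-4} = \step^4\phi^{-4}$; the ratio is scale-invariant, so the $\step^4$ from $1/\omega^4$ survives while the rest cancels, giving the promised $\step^4$ prefactor times a pure number $u Q \int_{-\infty}^\infty \phi^{-4}\,\frac{\sum_{k\ne 0}(\phi+k)^{-4}}{\sum_k(\phi+k)^{-4}}\,d\phi$.

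The remaining task is to identify the two lattice sums in closed form and evaluate the resulting elementary integral. The key classical identity is $\sum_{k\in\mathbb{Z}}(\phi+k)^{-2} = \pi^2/\sin^2(\pi\phi)$; differentiating twice (or squaring appropriately) yields $\sum_{k\in\mathbb{Z}}(\phi+k)^{-4} = \frac{\pi^4}{3}\cdot\frac{2+\cos(2\pi\phi)}{\sin^4(\pi\phi)}$, which can also be written $\frac{\pi^4(\cos^2(\pi\phi)+2)}{3\sin^4(\pi\phi)}$ or in terms of $\sin^{-4}$ and $\sin^{-2}$. Then $\sum_{k\ne 0}(\phi+k)^{-4} = \sum_{k\in\mathbb{Z}}(\phi+k)^{-4} - \phi^{-4}$, so the ratio becomes $1 - \frac{\phi^{-4}}{\sum_k(\phi+k)^{-4}}$, and the integrand simplifies to $\phi^{-4} - \phi^{-8}/\sum_k(\phi+k)^{-4}$. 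The second piece is $\frac{3\sin^4(\pi\phi)}{\pi^4\phi^8(2+\cos 2\pi\phi)}$, which decays fast enough at infinity and is integrable at $\phi=0$ after cancellation with the $\phi^{-4}$ term (near $0$, $\sin^4(\pi\phi)/\phi^4 \to \pi^4$ and $2+\cos 2\pi\phi \to 3$, so the integrand is $O(1)$ there). The integral is then a concrete, if slightly tedious, real integral; one expects to split it as $\int_{|\phi|\le 1/2}$ plus periodic tails or to use a Fourier/residue argument, arriving at the numerical constant $\frac{4\pi^4\sqrt{3}}{63}$.

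The main obstacle, as in \cite{zaytsev2017minimax}, is the rigorous justification of exchanging the limit $\theta \to 0$ with the integral (dominated convergence near $\omega = 0$ and near the lattice points $\omega = k/\step$ where the denominator $\sum_k F$ stays bounded below for $\theta>0$ but the limiting integrand has integrable singularities) — and then actually evaluating the final integral to get the specific constant $\frac{4\pi^4\sqrt{3}}{63}$, where the $\sqrt{3}$ betrays that a substitution rationalizing $2+\cos 2\pi\phi = 3 - 2\sin^2(\pi\phi)$ (or a $\tan(\pi\phi)$ substitution) is needed, with the $63 = 9\cdot 7$ in the denominator coming from the series/partial-fraction expansion of the resulting rational integrand. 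I would cite \cite{zaytsev2017minimax,golubev2013interpolation} for the structural parts (the estimator form \eqref{eq:uniform_estimator}, Lemma~\ref{lemma:alt_form_for_f}, and the dominated-convergence bookkeeping) and devote the detailed work to the one-dimensional integral evaluation, noting that the $u$ factor is a trivial consequence of the independent-components assumption in \eqref{eq:gp_def} as already recorded in Lemma~\ref{lemma:alt_form_for_f}.
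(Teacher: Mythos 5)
Your setup is correct and matches the paper's route almost step for step: you substitute the density into the same form of \eqref{eq:opt_int_error}, use the lattice identity $\sum_{k}(\phi+k)^{-4}=\frac{\pi^4(2+\cos 2\pi\phi)}{3\sin^4(\pi\phi)}$ (the paper obtains the same thing by a residue computation on $\pi\cot(\pi z)/((z+a)^4+b^4)$ and then lets the regulariser go to zero), and your rescaling $\omega=\phi/\delta$ to extract the $\delta^4$ prefactor is a clean equivalent of the paper's substitution $z=\pi\delta\omega$. Writing $z=\pi\phi$, your reduced integrand $\phi^{-4}-\frac{3\sin^4(\pi\phi)}{\pi^4\phi^8(2+\cos 2\pi\phi)}$ is exactly the paper's $\pi^4\,\frac{3z^4-2z^4\sin^2 z-3\sin^4 z}{z^8(2+\cos 2z)}$, so up to this point the two arguments coincide.

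The genuine gap is that you stop at precisely the point where the lemma's content lives: the evaluation of $\int_{-\infty}^{\infty}\frac{3z^4-2z^4\sin^2 z-3\sin^4 z}{z^8(\cos 2z+2)}\,dz$. Your guesses about how it would go are off. A $\tan(\pi\phi)$ substitution or a ``split into $|\phi|\le 1/2$ plus periodic tails'' does not rationalise anything here, because the integrand is not periodic (the algebraic factor $z^{-8}$ is entangled with the trigonometric part), and the $63$ does not come from a partial-fraction expansion. The paper closes a semicircle contour in the upper half-plane: the only poles are the simple zeros of $\cos 2z+2$ at $z_n=\pi n+\frac{\pi}{2}-\frac{i}{2}\log(2-\sqrt{3})$ (this is where the $\sqrt{3}$ enters, via $\cos 2z=-2$), the residues there are $-\frac{9\sqrt{3}\,i}{8(\pi n+c)^8}$ with $c=\frac{\pi}{2}-\frac{i}{2}\log(2-\sqrt{3})$, and the resulting series $\sum_n(\pi n+c)^{-8}$ is itself summed by a second residue computation on $\cot(z)/(z+c)^8$, whose eighth-order pole yields $\frac{16}{567}$ and hence the $63$ in the denominator. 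Without this (or an equivalent) computation the constant $\frac{4\pi^4\sqrt{3}}{63}$ is unproved. A smaller point: your justification that the integrand is $O(1)$ at $\phi=0$ only shows cancellation of the leading $\phi^{-4}$ terms, which a priori leaves an $O(\phi^{-2})$ remainder; the singularity is in fact removable, but seeing this requires expanding the numerator to order $z^8$ (the $z^6$ terms also cancel, leaving $\frac{z^8}{15}$ over $3z^8$), which you would need to spell out both for integrability and for the vanishing of the small-semicircle contribution in the contour argument.
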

\begin{proof}
    We will analytically calculate~\eqref{eq:opt_int_error} using techniques from calculus and complex analysis:
\begin{equation} \label{eq:integral}
I(\xi, \step) \triangleq \int_{-\infty}^\infty F(\omega) \frac{\sum_{k \ne 0} F(\omega + \frac{k}{\step})}{\sum_{k = -\infty}^{\infty} F(\omega + \frac{k}{\step})} d\omega \triangleq \int_{-\infty}^\infty g(\omega, \xi, \step) d\omega,
\end{equation}
where the spectral density~$F$ is given by~\ref{eq:cubic_spline_cov}.
Specifically, for splines, we are interested in the following limit:
\begin{equation} \label{eq:limit}
\sigma^2_S \triangleq u \delta \underset{\xi \rightarrow 0}{\lim} I(\xi, \step) = ?
\end{equation}

First, we will calculate the infinite sums. 
After substituting~\eqref{eq:cubic_spline_cov} into~\eqref{eq:integral}, the integrand will contain:
\begin{equation} \label{eq:sum_4r}
    \sum_k \frac{Q}{(\omega + \frac{k}{\step}) ^ 4 + \xi ^ 4} = \ldots
\end{equation}
To calculate this, we devise a general expression for the sum of 4th-degree shifted reciprocals:
$$
\sum_k \frac{1}{(k + a) ^4 + b ^ 4} = \ldots,
$$
for some $a,b$, using complex analysis. 
Consider the following function:
$$
    f_{R4}(z) = \frac{\pi \cot(\pi z)}{(z + a) ^ 4 + b ^ 4}.
$$
It's poles are $z_d = - a + \frac{b}{\sqrt{2}} (\pm 1 \pm i)$ from the denominator, and~$z \in \mathbb{Z}$ from the cotangent; all poles are simple ones, and can be calculated using the l'Hopital rule:
$$
\textrm{Res}(f_{R4}, z_0) = \frac{A_{R4}(z_0)}{B_{R4}'(z_0)},
$$
where~$f_{R4} = A_{R4} / B_{R4}$; $F_{R4},G_{R4}'$ are analytic at~$z_0$.
Applying this formula:
\begin{align*}
    \textrm{Res}(f_{R4}, z_d) &= \frac{\pi \cot(\pi z_d)}{4 (z_d + a)^3}, \\
    \textrm{Res}(f_{R4}, k) &= \frac{1}{(k + a)^4 + b^4}
\end{align*}
According to the Residuals Theorem, the sum of all poles is zero, so:
$$
\sum_k \frac{1}{(k + a) ^4 + b ^ 4} = -\sum_{z_d}\frac{\pi \cot(\pi z_d)}{4 (z_d + a)^3} \triangleq S_4(a, b).
$$
The specific value of~$S_4$ is rather unsightly, so we leave it out.
Applying this to our case~\eqref{eq:sum_4r}:
$$
\sum_k \frac{Q}{(\omega + \frac{k}{\step}) ^ 4 + \xi ^ 4} = 
Q \step^4 \sum_k \frac{1}{(\omega \step + k) ^ 4 + (\xi \step)^4} = Q \step^4 S_4 (\omega \step, \xi \step).
$$

The integrand from~\eqref{eq:integral} can be re-written as:
$$
g = F(\omega) \left(1 - \frac{F(\omega)}{S_4(\omega \step, \xi \step) Q \step^4}) \right).
$$
Its limit at~$\xi \rightarrow 0$ is bounded for all values of~$\omega$:
$$
\underset{\xi \rightarrow 0}{\lim} g(\xi, \omega, \step) = 
Q \step^4 \pi^4 \frac{- 2 z^{4} \sin^{2}{\left(z \right)} + 3 z^{4} - 3 \sin^{4}{\left(z \right)}}{z^{8} \left(\cos{\left(2 z \right)} + 2\right)},
$$
where we denote~$z = \pi \step \omega$.
This value has a finite limit at~$\omega=0$ (equal to $\frac{Q \step^4 \pi^4}{45}$); the numerator is fourth degree, the denominator is 8th, so it is integrable on~$\mathbb{R}$.
This implies that, according to the Lebesgue dominance theorem, we can move the limit from~\eqref{eq:limit} under the integral sign:
\begin{multline*}
\sigma_S^2 = \underset{\xi \rightarrow 0}{\lim} I(\xi, \step) =
\int_{-\infty}^{\infty} \underset{\xi \rightarrow 0}{\lim} g(\omega, \xi, \step) = \\
= Q \step^3 \pi^3 \int_{-\infty}^{\infty} \frac{- 2 z^{4} \sin^{2}{\left(z \right)} + 3 z^{4} - 3 \sin^{4}{\left(z \right)}}{z^{8} \left(\cos{\left(2 z \right)} + 2\right)} dz.
\end{multline*}

Now all that's left is to compute the integral:
$$
 \int_{-\infty}^{\infty} \frac{- 2 z^{4} \sin^{2}{\left(z \right)} + 3 z^{4} - 3 \sin^{4}{\left(z \right)}}{z^{8} \left(\cos{\left(2 z \right)} + 2\right)} dz \triangleq \int_{-\infty}^{\infty} \tilde{g}(z) dz = \ldots
$$
Again, we use tools from complex analysis.
To calculate it, we consider the contour, illustrated by Figure~\ref{fig:contour}: from $-R$ to $-\varepsilon$ on the real axis, along a small semicircle~$\gamma_\varepsilon$ around~$0$, then again along the real axis from~$\varepsilon$ to~$R$, and finally along a large semicircle~$\Gamma_R$.
\begin{figure}
    \centering
    \includegraphics[width=0.5\linewidth]{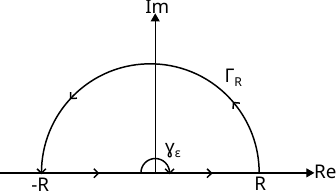}
    \caption{Semicircle contour}
    \label{fig:contour}
\end{figure}

The function is analytic on the contour, so we can apply the residue theorem:
$$
\left( \int_{-R}^\varepsilon + \int_{\gamma_\varepsilon} + \int_{\varepsilon}^R + \int_{\Gamma_R} \right) \tilde{g}(z) dz = 2 \pi i \sum_{\textrm{poles}} \textrm{Res}(\tilde{g}, \textrm{pole}).
$$
The integrand has a removable singularity at~$z=0$, so it is bounded in the vicinity of~$0$. 
Thus, the integral along~$\gamma_\varepsilon$ vanishes as~$\varepsilon \rightarrow 0$.
The integral along~$\Gamma_R$ vanishes, since the power of~$z$ in the denominator of~$\tilde{g}$ is larger than that of the numerator by four ($8$ vs $4$).
We are left with:
\begin{equation} \label{eq:residue_theorem}
\textrm{p.v.} \int_{-\infty}^\infty \tilde{g}(z) dz = 2 \pi i \sum_{\textrm{poles}} \textrm{Res}(\tilde{g}, \textrm{pole}).
\end{equation}

The integrand has simple poles at $\cos(2z) = -2$:
$$
    \mbox{$\cos(2z) = -2$} \iff z_n^{\pm} \triangleq \frac{\pm i \log(2 - \sqrt{3}) + \pi}{2} + \pi n\; n \in \mathbb{Z}.
$$
We are interested in the poles in the upper-half space, so since~$\log(2 - \sqrt{3}) < 0$, we choose $z_n \triangleq z_n^{-}$.
Again, using the l'Hopital rule, we can evaluate the residues at these poles; using symbolic calculations to simplify the resulting expressions, we achieve:
$$
\textrm{Res}(\tilde{g}, z_n) =
- \frac{9 \sqrt{3} i}{8 \left(\pi n + \frac{\pi}{2} - \frac{i \log{\left(2 - \sqrt{3} \right)}}{2}\right)^{8}}.
$$
According to~\eqref{eq:residue_theorem}, we now need to sum these residues:
\begin{equation} \label{eq:expanded_residue_sum}        
\sigma_S^2 = Q \step^4 \pi^3 2 \pi i \sum_{n=-\infty}^\infty \left(
- \frac{9 \sqrt{3} i}{8 \left(\pi n + \frac{\pi}{2} - \frac{i \log{\left(2 - \sqrt{3} \right)}}{2}\right)^{8}}
\right)
\end{equation}

The sum from~\eqref{eq:expanded_residue_sum} can be calculated analytically, using complex analysis. We introduce the notation~\mbox{$c\triangleq\frac{\pi}{2} - \frac{i \log{\left(2 - \sqrt{3} \right)}}{2}$} for convenience:
$$
\sum_n \frac{1}{(\pi n + c)^8} = \ldots,
$$
The sum appears as the sum of residues of the function:
$$
f_{R8}(z) \triangleq \frac{\cot(z)}{(z + c)^8}.
$$
Indeed, the residues at poles of~$\cot$, for~$z = \pi n\; n \in \mathbb{Z}$, are:
$$
\textrm{Res}(f_{R8}, n) = (z - \pi n) \frac{\frac{1}{z - \pi n} + \mathcal{O}(z - \pi n) }{(\pi n + c)^8} = \frac{1}{(\pi n + c)^8}.
$$
According to the residue theorem, their sum is minus the residue of the remaining pole at $-\frac{c}{\pi}$:
\begin{equation} \label{eq:sum_r8}
\sum_n \frac{1}{(\pi n + c)^8} = -\textrm{Res}\left(f_{R8}, -\frac{c}{\pi}\right).
\end{equation}

It is an 8th-order pole, so its residue is given by:
$$
\textrm{Res}(f_{R8}, 0) = \frac{1}{7!} \underset{z \rightarrow -c/\pi}{\lim}\frac{d^7}{dz^7} (z^8 f_{R8}(z)) = -\frac{16}{567}.
$$

Substituting it first into~\eqref{eq:sum_r8}, and then into~\eqref{eq:expanded_residue_sum}, we get the final result:
$$
\sigma_S^2 = Q \step^4 \pi^3 2 \pi \frac{9 \sqrt{3}}{8} \frac{16}{567} = \frac{4 \sqrt{3} \pi^{4} Q \step^{4}}{63}.
$$

\end{proof}

We devised a tight error bound for the interval error: indeed, the expression from Lemma~\ref{lemma:spline-kernel-error} turns to equality for uniform sequences.
It is also important to highlight, that pointwise bounds exist for the non-GP, deterministic setting (and can be plugged into~\eqref{eq:h_pw_error}, supposing that~$\bx$ is degenerate)~\cite{hall1976optimal,de1978practical}, the optimal one being:
$$
    \| \hbx(t) - \bx(t) \| \le \frac{5}{384} |\step|^4 \| \bx^{(4)} \|.
$$
However, they are not tight: the actual error evidently depends on the distance to the closest observation, which is impossible for~$\upsigma_\textrm{PW}$ since it does not depend on time.

\subsubsection{Proof of special case of Assumption~\ref{as:uniform_generalization}.}
\label{ap:unigen-exp-proof}
Consider an exponential covariance function $k(x, y) = \exp \left(-|x - y|\right)$.
If $y > x$ and we consider $y' = y + s$ for $s > 0$, then $k(x, y') = k(x, y) \exp(-s)$.
Below we denote $\delta = \exp(-s)$.
By construction $\delta < 1$.

We construct optimal Gaussian process prediction at a point $x$.
Our sample of observations consists of two sets of points: $X_1 =\{x_1, \ldots, x_n\}$, $X_2 = \{x_{n + 1}, \ldots, x_m\}$.
For our design of experiments, it holds that for $i > n$ $x_i > x_k, k =\overline{1, n}$, $x_i > x$.
We shift $X_2$ by $s > 0$ to get $X_2^{s} = \{x_{n + 1} + s, \ldots, x_m + s\}$. 

Let us define the covariance matrices involved in risk evaluation. Self-covariances at $X_1$ is $A$, self-covariances defined at $X_2$ is $D$, and cross-covariances between $X_1$ and $X_2$ is $B$.
Then, after changing $X_2$ to $X_2^{s}$, $A$ and $D$ don't change, and $B$ is multiplied by $\delta$.

Let us denote the vector of covariances between $x$ and $X_1$ and $X_2$ as $\vecA$ and $\vecD$ correspondingly.
After changing $X_2$ to $X_2^{s}$, $\vecA$ doesn't change, and $\vecD$ is multiplied by $\delta$.

The squared risk for the prediction at point $x$ has the form:
\[
\sigma^2(x) = k(x, x) - \mathbf{k}^T K^{-1} \mathbf{k}.
\]
Here $\mathbf{k}$ is a concatenation of $\vecA$ and $\vecD$, and 
\[
K = 
\begin{pmatrix}
    A   & B \\
    B^T & D \\
\end{pmatrix}.
\]

After shifting of $X_2$, $\mathbf{k}_{\delta}$ is a concatenation of $\vecA$ and $\delta \vecD$, and  
\[
K_{\delta} = 
\begin{pmatrix}
    A   & \delta B \\
    \delta B^T & D \\
\end{pmatrix}.
\]
and the corresponding risk:
\[
\sigma_{\delta}^2(x) = k(x, x) - \mathbf{k}_{\delta}^T K_{\delta}^{-1} \mathbf{k}_{\delta}.
\]

Using the block-inversion formula, we get:
\[
K_{\delta}^{-1} = 
\begin{pmatrix}
    A^{-1} + \delta^2 A^{-1} B S^{-1} B^T A^{-1} & -\delta A^{-1} B S^{-1} \\
    -\delta S^{-1} B^T A^{-1} & S^{-1}
\end{pmatrix},
\]
here $S = D - \delta^2 B^T A^{-1} B$.

Then, 
\begin{align*}
\mathbf{k}^T K^{-1} \mathbf{k} &= 
\vecA^T A^{-1} \vecA +
(B^T A^{-1} \vecA - \vecD)^T S^{-1} (B^T A^{-1} \vecA - \vecD), \\
\mathbf{k}_{\delta}^T K_{\delta}^{-1} \mathbf{k}_{\delta} &= 
\vecA^T A^{-1} \vecA +
(B^T A^{-1} \vecA - \vecD)^T \delta^2 S^{-1} (B^T A^{-1} \vecA - \vecD). 
\end{align*}
It is clear that
$\delta^2 S^{-1} = \left( \frac{D}{\delta^2} - B^T A^{-1} B \right)^{-1}$.

Let us prove the following Lemma:
\begin{lemma} \label{lem:quadform}
Consider a positive definite matrix $D$ and $0 < \delta < 1$. Then for an arbitrary vector $\vecX$ and a symmetric matrix $U$ such that $(D - U)^{-1}$ and $(D / {\delta^2} - U)^{-1}$ exist, it holds that: 
\[
\vecX^T (D - U)^{-1} \vecX \geq 
\vecX^T \left(\frac{D}{\delta^2} - U \right)^{-1} \vecX. 
\]
\end{lemma}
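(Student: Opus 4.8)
The plan is to reduce the claim to the operator antitonicity of matrix inversion on the positive-definite cone, and to supply a short self-contained proof of that fact. First I would put the two matrices being inverted into comparable form: since $\delta\in(0,1)$ we have $\delta^{-2}-1>0$, so
\[
\frac{D}{\delta^2}-U \;=\; (D-U) + (\delta^{-2}-1)\,D ,
\]
and, $D$ being positive definite, the added term is positive definite. Hence, writing $N\triangleq D-U$ and $M\triangleq D/\delta^2-U$, we obtain $M\succeq N$ in the Loewner order (indeed $M\succ N$). Thus the whole statement reduces to: if $0\prec N\preceq M$ then $\vecX^{T}M^{-1}\vecX\le\vecX^{T}N^{-1}\vecX$.

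The second ingredient I need is that $N$ — and therefore $M=N+(\delta^{-2}-1)D$ — is positive definite, not merely invertible. In the situation where the lemma is used (with $U=B^{T}A^{-1}B$), this is automatic: $N=D-U$ is the Schur complement of $A$ in the covariance matrix $K=\bigl(\begin{smallmatrix}A&B\\ B^{T}&D\end{smallmatrix}\bigr)$, which is positive definite, so $N\succ0$; likewise $M=S/\delta^{2}$ with $S$ the Schur complement of $A$ in $K_{\delta}$, so $M\succ0$. This positivity is what makes the inequality true: for scalars $D=1$, $\delta=\tfrac12$, $U=2$ one has $(D-U)^{-1}=-1<\tfrac12=(D/\delta^{2}-U)^{-1}$, so with $D-U$ indefinite the conclusion can fail — hence the proof must invoke the Schur-complement positivity rather than just the stated invertibility.

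Granting $0\prec N\preceq M$, I would finish via the elementary variational (Legendre-type) identity: for any positive definite $N$ and any vector $z$,
\[
z^{T}N^{-1}z \;=\; \sup_{w}\bigl(2\,w^{T}z-w^{T}Nw\bigr),
\]
the supremum being attained at $w=N^{-1}z$ (the objective is concave since its Hessian is $-2N$). Applying this to both $N$ and $M$ with $z=\vecX$, and using $w^{T}Mw\ge w^{T}Nw$ for every $w$, the objective for $M$ is pointwise below that for $N$, so the two suprema satisfy $\vecX^{T}M^{-1}\vecX\le\vecX^{T}N^{-1}\vecX$, which is exactly $\vecX^{T}(D/\delta^{2}-U)^{-1}\vecX\le\vecX^{T}(D-U)^{-1}\vecX$. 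An equivalent route is to diagonalize $N^{-1/2}MN^{-1/2}=I+P$ with $P\succeq0$, note $(I+P)^{-1}\preceq I$, and conjugate back to get $M^{-1}\preceq N^{-1}$.

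The only genuine subtlety here is the one flagged above — that the result rests on $D-U$ (equivalently $D/\delta^{2}-U$) being positive definite, which must be justified from the Schur-complement structure of the Gram/covariance matrices rather than from the bare hypothesis. Once that is recorded, the matrix-analytic core is routine and the two displayed lines complete the argument.
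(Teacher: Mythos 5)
Your proof is correct but takes a genuinely different route from the paper's. The paper fixes $\vecX$ and differentiates $F(\delta)=\vecX^{T}(D/\delta^{2}-U)^{-1}\vecX$ in $\delta$, showing $F'(\delta)=2\tilde{\mathbf{x}}^{T}(D/\delta^{3})\tilde{\mathbf{x}}>0$ with $\tilde{\mathbf{x}}=(D/\delta^{2}-U)^{-1}\vecX$, and concludes by monotonicity; you instead write $D/\delta^{2}-U=(D-U)+(\delta^{-2}-1)D\succeq D-U$ and invoke the antitonicity of matrix inversion on the positive-definite cone, proved via the variational identity $z^{T}N^{-1}z=\sup_{w}(2w^{T}z-w^{T}Nw)$. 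Both arguments are sound in the setting where the lemma is applied, but your write-up is the more careful of the two: you correctly observe that the lemma as literally stated (assuming only that the two inverses exist) is \emph{false} — your scalar example $D=1$, $U=2$, $\delta=\tfrac12$ refutes it — and that positive definiteness of $D-U$ must be imported from the Schur-complement structure of the covariance matrix. The paper's derivative argument has exactly the same unstated requirement: to integrate $F'>0$ from $\delta$ to $1$ one needs $D/s^{2}-U$ to stay invertible for all intermediate $s$, which fails in your counterexample (a singularity at $s=1/\sqrt{2}$) and is guaranteed precisely when $D-U\succ 0$. In short, the paper's route is a one-line calculus computation that silently leans on this positivity; yours is purely algebraic, makes the hypothesis explicit, and is therefore the more robust statement of why the inequality holds.
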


\begin{proof}
For an arbitrary vector $\vecX$ let us define the function:
\[
F(\delta) = \vecX^T \left(\frac{D}{\delta^2} - U \right)^{-1} \vecX.
\]

Using the derivative of the inverse matrix formula: 
\[
\frac{\partial F(\delta)}{\partial \delta} = 2 \vecX^T \left(\frac{D}{\delta^2} - U \right)^{-T} \frac{D}{\delta^3} \left(\frac{D}{\delta^2} - U \right)^{-1} \vecX. 
\]
Let $\tilde{\mathbf{x}} = \left(\frac{D}{\delta^2} - U \right)^{-1} \vecX$.
Then, 
\[
\frac{\partial F(\delta)}{\partial \delta} = 2 \tilde{\mathbf{x}}^T \frac{D}{\delta^3} \tilde{\mathbf{x}}. 
\]
Given that $D$ is a positive definite matrix, it holds that: 
\[
\frac{\partial F(\delta)}{\partial \delta} > 0.
\] 
So, the proposition of Lemma holds, as with decreasing $\delta$, we decrease the functional value given that the derivative is positive.
\end{proof}

Substituting $U = B^T A^{-1} B$ and applying Lemma~\ref{lem:quadform}, we get: 
\begin{align*}
\mathbf{k}^T K^{-1} \mathbf{k} &= 
\vecA^T A^{-1} \vecA +
(B^T A^{-1} \vecA - \vecD)^T S^{-1} (B^T A^{-1} \vecA - \vecD) \geq \\
\mathbf{k}_{\delta}^T K_{\delta}^{-1} \mathbf{k}_{\delta} &= 
\vecA^T A^{-1} \vecA +
(B^T A^{-1} \vecA - \vecD)^T \delta^2 S^{-1} (B^T A^{-1} \vecA - \vecD). 
\end{align*}

So, $\sigma_{\delta}^2(x) > \sigma^2(x)$, meaning that the risk increases after the shift.

\subsubsection{Validation of  Assumption~\ref{as:uniform_generalization}}
\label{ap:as_test}
The assumption was tested for the common quadratic kernel:
$$
    K(\delta t) = e^{-(\delta t)^2 / 2}.
$$
The procedure is outlined in Algorithm~\ref{alg:assumption_test} and is also available as a Python script in our paper's repository.
It randomly chooses sequence length, sequence times, displacement amount, and location, and calculates the variance of a Gaussian Process fitted to the generated sequence at a random time.
Finally, it asserts that the sequence with increased intervals has an error bigger than the initial one.
The loop successfully runs for 1000 iterations, suggesting that the hypothesis typically holds under natural settings.

\begin{algorithm}
\caption{Monte Carlo hypothesis testing.}
\label{alg:assumption_test}
\begin{algorithmic}
\LOOP
    \STATE $n \gets \text{randint}(5, 300)$
    \STATE $r \gets \text{rand}()$
    \STATE{\textbf{mark} \textit{iteration-start}}
    \STATE $\{t_k\}_{k=1}^n \gets \text{sort}(\text{rand}(n))$
    \STATE $t_1 \gets 0$
    \STATE $t_n \gets 1$
    \STATE $i \gets \text{randint}(1, N-1)$
    \STATE $\{\tilde{t}_k\}_k \gets \{t_1,\ldots,t_{i-1}, t_i+r, \ldots, t_n+r\}$
    \STATE $K \gets \text{cov}(\{t_i - t_j\}_{i,j})$
    \STATE $\tilde{K} \gets \text{cov}(\{\tilde{t}_i - \tilde{t}_j\}_{i,j})$
    \STATE $t \gets \text{rand}()$
    \STATE $\tilde{t} \gets t$
    \IF{$t_i < t$}
        \STATE $\tilde{t} \gets t + r$
    \ENDIF
    \IF{Singular($K$) \OR Singular($\tilde{K}$) }
    \STATE{\textbf{goto} \textit{iteration-start}}
    \ENDIF
    \STATE $k \gets \text{cov}(\{t_k - t\}_k)$
    \STATE $\tilde{k} \gets \text{cov}(\{\tilde{t}_k - \tilde{t}\}_k)$
    \STATE $D \gets \text{cov}(0) - \text{quad}(K, k)$
    \STATE $\tilde{D} \gets \text{cov}(0) - \text{quad}(\tilde{K}, \tilde{k})$
    \STATE{\textbf{assert} $D < \tilde{D}$}
\ENDLOOP
\end{algorithmic}
\end{algorithm}

\subsection{Long-term Memory}
\begin{theorem*}[\ref{th:forgetfulness}]
    For the ODE~\eqref{eq:theory_ode_model} under Assumptions~\ref{as:ode-function},~\ref{as:nf-strength}, the following holds:
    $$
        \frac{d}{d\tau} \left\| \frac{d \mathbf{h}(t+\tau)}{d h_i(t)} \right\|_2 < 0.
    $$
\end{theorem*}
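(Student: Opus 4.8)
The plan is to recognize $\dfrac{d\bh(t+\tau)}{dh_i(t)}$ as the $i$-th column of the state-transition matrix (the Jacobian of the flow) of~\eqref{eq:theory_ode_model} and to bound its norm by a Gr\"onwall-type differential inequality, in the same spirit as the proofs of Theorems~\ref{th:time_constraints} and~\ref{th:stability}.

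\textbf{Step 1 (variational equation).} Fix $t$ and regard $\bh(t+\tau)$ as a function of the initial condition $\bh(t)$ via the flow of~\eqref{eq:theory_ode_model} with $\hbx$ held fixed. Set $\mathbf{w}_i(\tau) \triangleq \dfrac{d\bh(t+\tau)}{dh_i(t)} \in \RR^v$. Standard sensitivity analysis yields the linear ODE
\begin{equation*}
    \frac{d\mathbf{w}_i}{d\tau} = \bigl( \upd\, \partial_{\bh}\mbf_\theta(\hbx(t+\tau), \bh(t+\tau)) - \nf\, I \bigr)\, \mathbf{w}_i(\tau), \qquad \mathbf{w}_i(0) = \mathbf{e}_i ,
\end{equation*}
where $\mathbf{e}_i$ is the $i$-th standard basis vector and $\partial_{\bh}\mbf_\theta$ is the Jacobian of $\mbf_\theta$ in its second argument. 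By the $L_h$-Lipschitzness of $\mbf_\theta$ in $\bh$ (Assumption~\ref{as:ode-function}), this Jacobian has operator norm at most $L_h$ wherever it exists.

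\textbf{Step 2 (differentiate the norm).} As in the proof of Theorem~\ref{th:time_constraints}, wherever $\mathbf{w}_i(\tau)\neq\mathbf{0}$ we have $\dfrac{d}{d\tau}\|\mathbf{w}_i\|_2 = \dfrac{1}{\|\mathbf{w}_i\|_2}\bigl(\mathbf{w}_i, \tfrac{d\mathbf{w}_i}{d\tau}\bigr)$. Substituting the variational equation and applying Cauchy--Schwarz together with $\|\partial_{\bh}\mbf_\theta\|_2 \le L_h$,
\begin{equation*}
    \frac{d}{d\tau}\|\mathbf{w}_i\|_2 = \frac{\upd\,(\mathbf{w}_i, \partial_{\bh}\mbf_\theta\, \mathbf{w}_i) - \nf\|\mathbf{w}_i\|_2^2}{\|\mathbf{w}_i\|_2} \le \frac{\upd L_h \|\mathbf{w}_i\|_2^2 - \nf\|\mathbf{w}_i\|_2^2}{\|\mathbf{w}_i\|_2} = (\upd L_h - \nf)\,\|\mathbf{w}_i\|_2 .
\end{equation*}

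\textbf{Step 3 (strictness).} The matrix whose columns are the $\mathbf{w}_i$ is the fundamental solution of a linear ODE, hence invertible for every $\tau$ (Liouville's formula, with positive determinant); in particular $\mathbf{w}_i(\tau)\neq\mathbf{0}$ for all $\tau$, so $\|\mathbf{w}_i\|_2>0$ everywhere and the norm is differentiable there. Assumption~\ref{as:nf-strength} gives $\upd L_h - \nf < 0$, so the right-hand side above is strictly negative, which is exactly the claim.

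The main obstacle is a regularity one rather than a computational one: $\mbf_\theta$ is only assumed Lipschitz, so the flow need not be $C^1$ and $\partial_{\bh}\mbf_\theta$ need not exist in the classical sense. For the smooth activations used in our GRU vector field this is a non-issue; in general one either invokes differentiability of the flow under the (standard, and here satisfied) hypothesis $\mbf_\theta\in C^1$, or replaces $\partial_{\bh}\mbf_\theta$ by a measurable selection from the generalized Jacobian and runs the same argument, using that it is bounded by $L_h$ almost everywhere along the trajectory. The only other point to spell out carefully is the identity $\tfrac{d}{d\tau}\|\mathbf{w}_i\|_2 = (\mathbf{w}_i, \dot{\mathbf{w}}_i)/\|\mathbf{w}_i\|_2$, which is precisely the Cauchy--Schwarz manipulation already used earlier in the paper and is legitimate exactly because $\mathbf{w}_i$ never vanishes.
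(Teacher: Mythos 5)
Your proof is correct and follows essentially the same route as the paper's: derive the variational equation $\dot{\mathbf{w}}_i = (\upd\,\partial_{\bh}\mbf_\theta - \nf I)\mathbf{w}_i$ by exchanging the order of differentiation, then bound the resulting quadratic form by $(\upd L_h - \nf)\|\mathbf{w}_i\|_2^2 < 0$ using the spectral-norm bound $L_h$ on the Jacobian. The only differences are cosmetic (you differentiate the norm rather than the squared norm) plus your added care about $\mathbf{w}_i$ never vanishing (via invertibility of the fundamental matrix) and about differentiability of $\mbf_\theta$ --- two points the paper's proof implicitly assumes, so your version is if anything slightly more complete.
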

\begin{proof}
    First, we denote $\chi \triangleq  \frac{d \mathbf{h}(t+\tau)}{d h_i(t)}$.
    Instead of considering $\frac{d}{d\tau} \| \chi \|_2$, we consider the derivative of the squared norm $\frac12 \frac{d}{d\tau} \| \chi \|_2^2$. 
    These two expressions will have the same sign, and the derivative of the squared norm can be conveniently rewritten:
    $$
    \frac12 \frac{d}{d\tau} \| \chi \|_2^2 =
    \left( \frac{d}{d\tau} \frac{d \mathbf{h}(t+\tau)}{d h_i(t)}, \chi \right).
    $$
    Since the functions we are working with are continuous, we may re-order the derivative, achieving:
    \begin{equation} \label{eq:ap-forg-inner}
    \left( \frac{d}{d\tau} \frac{d \mathbf{h}(t+\tau)}{d h_i(t)}, \chi \right) = 
    \left( \frac{d}{dh_i(t)} \frac{d \mathbf{h}(t+\tau)}{d \tau}, \chi \right).
    \end{equation}
    Now we can expand~$\frac{d \mathbf{h}(t+\tau)}{d \tau}$, using~\eqref{eq:theory_ode_model}, and differentiate it w.r.t. $h_i(t)$:
    $$
    \frac{d}{dh_i(t)} \frac{d \mathbf{h}(t+\tau)}{d \tau} = 
    \frac{d}{dh_i(t)} \left(
    \upd \mbf_\theta (\hbx(t + \tau), \bh(t + \tau)) - \nf \bh(t + \tau)
    \right) = 
    a J \chi - b \chi,
    $$
    where $J$ is the Jacobain of~$\mbf_\theta$ w.r.t. $\bh$.
    Plugging this into~\eqref{eq:ap-forg-inner}, we get:
    \begin{equation} \label{eq:ap-forg-expr-to-bound}
    \frac12 \frac{d}{d\tau} \| \chi \|_2^2 =
    \left( 
    a J \chi - b \chi, \chi
    \right) =
    a \chi^T J^T \chi - b \| \chi \|_2^2.
    \end{equation}

    According to Assumption~\ref{as:ode-function}, the spectral norm of $J$ is no greater than~$L_h$.
    Consequently, the maximum eigenvalue of~$J$ is also $\le L_h$.
    We know from linear algebra that a quadratic form cannot stretch a vector's norm more than its max eigenvalue, so:
    \begin{equation} \label{eq:ap-forg-Jchi-linalg}
    \chi^T J^T \chi \le L_h \| \chi \|_2^2.
    \end{equation}
    Finally, putting~\eqref{eq:ap-forg-expr-to-bound} together with~\eqref{eq:ap-forg-Jchi-linalg}, we achieve:
    $$
        a \chi^T J^T \chi - b \| \chi \|_2^2 \le L_h a \| \chi \|_2^2 - b \| \chi \|_2^2 = \| \chi \|_2^2 (L_h a - b).
    $$
    This expression is negative, according to~\eqref{eq:strong_nf}.
\end{proof}

\clearpage

\section{Experiments}
\label{ap:experiments}

\subsection{Datasets}
\label{ap:data}
Below is a detailed description of each dataset:
\begin{itemize}
    \item The PhysioNet Sepsis~\footnote{\url{https://physionet.org/content/challenge-2019/1.0.0/}} dataset~\cite{reyna2020early}, released under the CC-BY 4.0 license.
    Most of the values are missing, with only 10\% present. Sequences are relatively short, ranging from 8 to 70 elements, with a median length of 38. 
    The target variable indicates whether the patient developed sepsis, meaning it's \textbf{binary} and highly unbalanced, so we measure AUROC.
    This dataset is also used by the original NCDE paper~\cite{kidger2020neural}, making it a valuable addition to our benchmark.
    Following the NCDE paper, we only consider the first 72 hours of a patient's stay.
    \item Two datasets from the Time Series Classification archive~\footnote{\url{https://www.timeseriesclassification.com/}}~\cite{middlehurst2024bake} (with no explicit license specified): the U-Wave Gesture Library~\cite{liu2009uwave} and the Insect Sound dataset\footnote{\url{https://www.timeseriesclassification.com/description.php?Dataset=InsectSound}}.
    These datasets were processed identically: the timestamps were taken to be evenly spaced between $0$ and $1$, and 30\% of each sequence was replaced with NaNs.
    They both have a \textbf{multiclass} label, so we use the Accuracy metric.
    \item A synthetic Pendulum dataset, with the dampening coefficient as the target~\cite{osin2024ebes} (with no explicit license specified)~---~\textbf{regression} task. Sequences are irregular, containing~10\% missing values, with lengths varying from approximately 200 to 400 elements, the median length being 315.
    \item The Pendulum  (regular and irregular versions) and Sine-2 datasets with forecasting as the target, see Appendix~\ref{ap:forecasting}.
    \item The Bump dataset for Section~\ref{sec:time}.
    The task is \textbf{binary classification}: between bump functions~\eqref{eq:bump} with~$\zeta=20$ and constant-zero functions.
    Figure~\ref{fig:bump-sample} demonstrates a positive sample from this dataset.
    \item The SineMix dataset. The task is to predict the frequency of the first of two sine waves, joined at mid-point as in Figure~\ref{fig:sinemix-sample}.
    Consequently, this is a \textbf{regression} label, we use~$R^2$ score.
\end{itemize} 

\begin{equation} \label{eq:bump}
    \psi_\zeta(x) = 
    \begin{cases}
        \exp \left( \frac{1}{(\zeta x)^2 - 1} \right)\, &\textrm{if}\, |(\zeta x)| < 1; \\
        0\, &\textrm{if}\, |(\zeta x)| \ge 1        
    \end{cases}
\end{equation}

\begin{figure}
\centering
\begin{floatrow}[2]
    \ffigbox{
    \includegraphics[width=\linewidth]{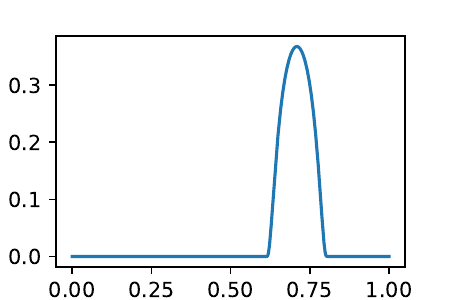}}
    {\caption{A sample from the bump dataset}
    \label{fig:bump-sample}}
    \ffigbox{
    \includegraphics[width=\linewidth]{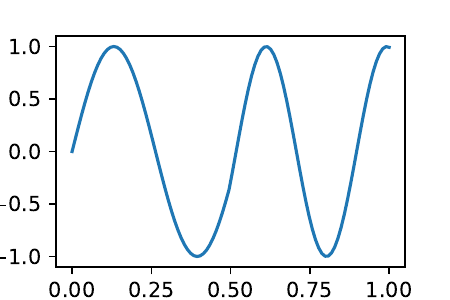}
    }{
    \caption{Sine-Mix dataset sample.}
    \label{fig:sinemix-sample}
    }
\end{floatrow}
\end{figure}

The choice of metrics and activation functions for each dataset is dictated by the nature of the task.
The correspondence between tasks, activation functions, and metrics is given by Table~\ref{tab:task-metrics}.

\begin{table}[tbh]
    \centering
    \begin{tabular}{llll}
        \toprule
         Task       & Binary Classification & Multiclass Classification & Regression  \\\midrule
         Activation & Sigmoid               & SoftMax                   & None        \\
         Metric     & AUROC                 & Accuracy                  & $R^2$ Score \\
         \bottomrule
    \end{tabular}
    \caption{Table with Metrics and Activations, corresponding to various tasks.}
    \label{tab:task-metrics}
\end{table}

This choice of datasets allows us to test the considered models in multiple challenging areas: long sequences, irregular sequences, and sequences with many missing values and scarce data.
The code for generating each dataset is provided in our repository.

\subsection{Backbones}
\label{ap:backbones}
\paragraph{RNN.}
The recurrent networks are all single-layer and one-directional, with 32 hidden channels. 
For GRU~\cite{chung2014empirical} we use the standard PyTorch implementation:
\begin{align*}
    \mathbf{r} &= \sigma (
    W_{ir} \mathbf{x} + b_{ir} + W_{hr} \mathbf{h} + b_{hr}
    ); \\
    \mathbf{z} &= \sigma (
    W_{iz} \mathbf{x} + b_{iz} + W_{hz} \mathbf{h} + b_{hz}); \\
    \mathbf{n} &= \tanh (
    W_{in} \mathbf{x} + b_{in} + \mathbf{r} \odot (W_{hn} \mathbf{h} + b_{hn} )
    ); \\
    \tilde{\mathbf{h}} &= (1 - \mathbf{z}) \odot \mathbf{n} + \mathbf{z} \odot \mathbf{h}.
\end{align*}
We use the standard GRU notation: $\mathbf{h}$ for the previous hidden state,~$\mathbf{x}$ for the current input,~$\mathbf{h}'$ for the new hidden state; $\odot$ denotes the Hadamard product.
For the vanilla RNN, we use a simple one-layer tanh-activated network:
$$
    \tilde{\mathbf{h}} = \tanh (
    W_{ih} \mathbf{x} + b_{ih} + W_{hh} \mathbf{h} + b_{hh}
    ).
$$

\paragraph{Transformers.}
We use the Pytorch implementation of the Transformer Encoder layer.
The two versions differ in their positional embeddings: RoFormer uses rotary embeddings~\cite{su2021roformer} (provided by TorchTune), and TempFormer uses temporal sine-based embeddings, as proposed by the Transformer Hawkes paper~\cite{zuo2020transformer}.

\paragraph{Mamba.}
The Mamba model we use is Mamba2~\cite{gu2023mamba} from the Mamba SSM library~\footnote{\url{https://github.com/state-spaces/mamba}}, which we employ without significant modifications.

\paragraph{Neural CDE, RDE.}
To reproduce the results as closely as possible, we use TorchCDE~\footnote{\url{https://github.com/patrick-kidger/torchcde}} for the Neural CDE method~\cite{kidger2020neural}.
Specifically, we implemented the example for irregular sequence classification from their repository.
For Neural RDE~\cite{morrill2021neural}, we pre-process the data using the original Signatory package, provided by the authors~\footnote{\url{https://github.com/patrick-kidger/signatory}}, with depth set to 2.

\paragraph{GRU-ODE.}
As the model for GRU-ODE~\cite{de2019gru}, we simply take a Neural CDE with the Sync-NF vector field, without time scaling.

\paragraph{DeNOTS.}
DeNOTS is slower than other non-ODE methods; however, we argue this is an implementation issue.
Neural ODEs received much less attention than State-Space Models, Transformers, or RNNs.
We use the TorchODE library~\cite{lienen2022torchode}.
Although it is faster than the original implementation~\cite{chen2018neural}, it still requires work, being built almost single-handedly by its main contributor.

Backpropagation is done via the AutoDiff method, which is faster than Adjoint backpropagation. 
However, it is also more memory-consuming.
We fix the tolerance to~$ 10^ {-3}$ and use the adaptive DOPRI5 solver.
The normalizing constant~$ M$ we use for time scaling ($t_k \gets \frac{D}{M} t_k$) is set to the median size of the timeframe across the dataset.
Our version of cubic spline interpolation skips NaN values, interpolating only between present ones.
All-zero channels are set to a constant zero.
For the DeNOTS versions that use the GRU Cell as their dynamics function:
\begin{itemize}
    \item For no negative feedback, we use the GRU as-is;
    \item For Anti-NF, we pass~$-\bh$ instead of~$\bh$ to a standard PyTorch GRU.
    \item For Sync-NF, we subtract the current hidden from the derivative.
\end{itemize}
GRU turns out to be a surprisingly convenient framework for incorporating negative feedback.

For the non-GRU-based vector fields, specifically Tanh and ReLU, we used two consecutive linear layers separated by the corresponding activations.
The tanh model also includes an activation at the end.
The ReLU version does not have a final activation; otherwise, it could not decay any hidden-state components.

\paragraph{Model size.}
All the considered models are small, with hidden sizes fixed to $32$.
This is done deliberately to facilitate reproducibility and reduce experimentation time.
The remaining hyperparameters are chosen so that the total number of learnable parameters is comparable to or greater than that of DeNOTS: everywhere except Sepsis that means setting the number of layers to $1$.
On Sepsis, we use~$4$ layers for Transformers and~$32$ layers for Mamba.
The specific numbers of parameters for each model on each of our main datasets are provided in Table~\ref{tab:nparams}.

\begin{table}[tbh]
    \centering
    \begin{tabular}{lrrrrrrrr}\toprule
&DeNOTS &GRU &NCDE &NRDE &Mamba &RoFormer &TempFormer \\\midrule
Pendulum &3552 &3552 &4352 &7616 &8120 &137664 &137664 \\
UWGL &3648 &3648 &5440 &11968 &8152 &137504 &137504 \\
InsectSound &3456 &3456 &3264 &4352 &8088 &137632 &137632 \\
Sepsis &339264 &339264 &3809088 & \xmark &367776 &662080 &662080 \\
\bottomrule
\end{tabular}
    \caption{The number of parameters for each model on all our main datasets.}
    \label{tab:nparams}
\end{table}

\subsection{Pipeline}
\label{ap:pipeline}

We provide our repository with automated scripts to download and preprocess all datasets, as well as to train and evaluate all the considered models, using popular frameworks such as Pytorch~\cite{Ansel_PyTorch_2_Faster_2024} (Caffe2 license), Pytorch Lightning~\cite{Falcon_PyTorch_Lightning_2019} (Apache License 2.0) and Hydra~\cite{Yadan2019Hydra} (MIT License) to make the process familiar to most AI researchers.
The YAML configs, containing all the hyperparameters, are also provided in the repository to facilitate reproducibility.
The documentation clearly explains the steps necessary to reproduce all considered experiments.

\subsubsection{Training}
All training uses the Adam method~\cite{kingma2014adam}, with the learning rate fixed to~$10^{-3}$ and other parameters left default.
The whole model, consisting of the head and the backbone, is trained end-to-end.
The head consists of a linear layer and an activation function.
We do not set an upper bound for epochs, stopping only when the validation metric stops improving.
The choice of head's activation, loss, and specific metrics depends on the considered task, as outlined in Table~\ref{tab:ap_task_pipelines}.

\begin{table}[!ht]
    \centering
    \begin{tabular}{llll}
    \hline
        Task & Activation & Loss & Metric \\ \hline
        Regression & - & Mean squared error & $R^2$ \\ 
        Binary & Sigmoid & Binary cross-entropy & AUROC \\ 
        Multiclass & Softmax & Cross-entropy & Accuracy \\ \hline
    \end{tabular}
    \caption{Head activation, loss, and metric choice for each downstream task.}
    \label{tab:ap_task_pipelines}
\end{table}

\subsubsection{Embeddings}
Before passing $\{\bx_k\}_k$ to the backbone, we apply Batch  Normalization~\cite{ioffe2015batch}.
For a fairer baseline comparison, time intervals $t_k - t_{k - 1}$ are also included in the embeddings in the same way as other numerical features.

For the Sepsis dataset, the considered numerical features represent medical variables and are thus presumably more complex than the physical coordinate/acceleration data from the other datasets.
To account for this, we inflate each of them to a dimension of 100 using a trainable linear layer before the backbone.
Besides, this dataset has static features, which we use as the starting points where applicable.
Additionally, since Sepsis is closer to event sequences than time series, we fill the NaN values with zeros prior to passing it to the baselines, indicating "no-event`` (this includes the SNCDE models).

On all the other datasets, NaN values for the models that do not support missing values were replaced via forward/backward filling.

\subsection{Full Results}
\label{ap:results}
In this section, we present the full results of all our models, which were used to build the ranks from Table~\ref{tab:ranks}.
The results are given in Table~\ref{tab:main_results}.

\begin{table}[!t]
\caption{
Main results on the four considered datasets. Best models are highlighted in \textbf{bold}, second-best are \uline{underlined}. Values are highlighted identically, if their difference is less than half of their joint variance ($\frac12 \sqrt{\sigma_1^2 + \sigma_2^2}$). We present an average over three runs.
For the two SNCDEs (Scaled Neural CDEs), specifically Sync-NF and DeNOTS (Anti-NF), $D$ is selected based on the validation set. 
} 

\label{tab:main_results}
\centering

\begin{tabular}{lcccc}
\toprule
Backbone         & UWGL                       & InsectSound                        & Pendulum        & Sepsis \\ 
\midrule\addlinespace[2.5pt]
GRU              & $0.5 \pm 0.1$              & $0.3 \pm 0.2$             & $0.73 \pm 0.03$ & $0.838 \pm 0.004$ \\
TempFormer       & $0.78 \pm 0.03$            & $\mathbf{0.43 \pm 0.02}$  & $0.59 \pm 0.08$ & $0.89 \pm 0.02$ \\
RoFormer         & $0.74 \pm 0.04$            & $0.29 \pm 0.02$           & $0.61 \pm 0.02$ & $0.924 \pm 0.004$ \\
Mamba            & $0.71 \pm 0.07$            & $0.41 \pm 0.03$           & $0.61 \pm 0.03$ & $0.829 \pm 0.004$ \\
GRU-ODE &  $0.4 \pm 0.3$  & $0.18 \pm 0.03$ & $0.6 \pm 0.05$ & $0.925 \pm 0.003$ \\
Neural CDE       & $\mathbf{0.82 \pm 0.03}$   & $0.145 \pm 0.001$         & $0.76 \pm 0.02$ & $0.880 \pm 0.006$ \\
Neural RDE       & $0.79 \pm 0.03$            & $0.212 \pm 0.004$         & $\mathbf{0.78 \pm 0.03}$ & \xmark\footnote{Diverges: Neural RDEs cannot handle large numbers of features, as admitted by its authors.} \\
Sync-NF SNCDE   & $\mathbf{0.811 \pm 0.002}$ & \uline{$0.39 \pm 0.09$}   & \uline{$0.77 \pm 0.01$} & \uline{$0.932 \pm 0.003$} \\
DeNOTS (ours)\footnote{DeNOTS corresponds to the SNCDE with the Anti-NF vector field.}  & $\mathbf{0.82 \pm 0.03}$   & $\mathbf{0.44 \pm 0.02}$  & $\mathbf{0.79 \pm 0.02}$ & $\mathbf{0.937 \pm 0.005}$ \\
\bottomrule
\end{tabular}
\end{table}

\subsection{Expressivity discussion}
\label{ap:expressivity}
Here, we explain why measuring expressivity with the downstream metric is a valid approach.
Theoretically, the total error can be decomposed into three terms~\cite{guhring2020expressivity}:
\begin{enumerate}
    \item The approximation error. We are forced to consider only a limited parametric family of estimator functions. 
    This directly measures the expressivity of the chosen parametric model, which is the focus of our work.
    \item The estimation error. We can only calculate the empirical risk, on a finite and imperfect training set, instead of the true one. This measures the generalization quality you speak of.
    \item The training error. The optimization problems in deep learning are usually complex and non-convex. We can only solve such problems approximately, via iterative techniques with a finite number of steps.
\end{enumerate}
To minimise the influence of the training error, we do not limit the number of epochs, stopping optimization only when the validation metric stops increasing. 

\paragraph{Overfitting.} 
Next, it is the consensus that the more expressive models suffer more from overfitting~\cite{hawkins2004problem}, i.e. generalize worse. 
We verify that our models adhere to this by performing the following experiment.
We construct a smaller version of the Pendulum dataset; its training set is 1/32 of the original one, and compare the DeNOTS model with various values of $D$ on this benchmark.
The results are provided in Table~\ref{tab:pendsmall}.
The overfitting effect is observed very clearly:
the "shallow``~$D=1$ model achieves higher metrics on the test set than~$D=10$, and the ranking is reversed on the train set.

\begin{table}[tbh]
    \centering
    \begin{tabular}{lcccc}\toprule
Scale  & Test R2                    &Test MSE                    & Train R2                   &Train MSE \\\midrule
$D=1$  & $\mathbf{0.479 \pm 0.006}$ & $\mathbf{0.178 \pm 0.002}$ & $0.62 \pm 0.096$           &$0.124 \pm 0.029$ \\
$D=10$ & $0.437 \pm 0.049$          &$0.192 \pm 0.017$           & $\mathbf{0.738 \pm 0.115}$ &$\mathbf{0.089 \pm 0.043}$ \\
\bottomrule
\end{tabular}
    \caption{Results on the Pendulum small dataset for two versions of the DeNOTS model with various time scales ($D=1,10$).}
    \label{tab:pendsmall}
\end{table}


To sum up, we have ruled out the training and estimation errors in our NFE-Metric correlation experiments, so we conclude that high correlation implies that time scaling benefits expressivity.

\subsection{NFE-Correlation experiments}
\label{ap:nfe-cor}
This section focuses on the relationship between the Number of Function Evaluations (NFE) logarithm and the respective metrics.
In addition to Pearson correlation from Table~\ref{tab:p-corr}, measuring linear dependence, we also present Spearman correlation in Table~\ref{tab:s-corr}, measuring monotonicity.
Finally, the log-NFE-metric graphs in Figures~\ref{fig:nfe-graphs-sepsis},~\ref{fig:nfe-graphs-sine2} illustrate these relationships. 
The best results of each model are displayed in Table~\ref{tab:ablation-quality}.

\begin{table}[tbh]
    \centering
    \begin{tabular}{lrrrr}\toprule
&Pendulum &Sepsis &Sine-2 \\\midrule
Tanh &0.1/-0.2 &0.2/\textbf{0.7} &0.6/0.6 \\
ReLU &0.4/-0.7 &0.4/0.03 &0.1/0.5 \\
No NF &0.4/-0.7 &0.4/-0.6 &0.3/-0.3 \\
Sync-NF &0.4/\textbf{0.7} &0.4/-0.4 &0.4/\textbf{0.9} \\
Anti-NF &0.5/\textbf{0.9} &0.5/\textbf{0.7} &0.3/\textbf{1.0} \\
\bottomrule
\end{tabular}
    \caption{Spearman correlation test between log-NFE and metric, for various vector fields. The first number indicates the statistic for reducing tolerances, the second one~---~the statistic for increasing time scale. Values $\ge 0.7$ are highlighted in bold.}
    \label{tab:s-corr}
\end{table}

Our model shows excellent performance on all the presented benchmarks.
As for the other approaches:
\begin{itemize}
    \item Increasing tolerance to increase NFE does not reliably improve the models' performance; the NFE-metric correlation is mostly weak.
    \item Increasing depth for non-stable vector fields does not consistently increase expressiveness.
    \item The Sync-NF seems to suffer from forgetfulness on larger depths for Sepsis (especially evident from Figure~\ref{fig:nfe-graphs-sepsis}).
\end{itemize}
Notably, the ReLU-T model performs surprisingly well on Sepsis, beating even Anti-NF-S.
However, Relu-T's performance on Pendulum and Sine-2 is inferior, and its NFE-metric correlation is low even on Sepsis, so this does not invalidate our conclusions.

\begin{figure}
    \centering
    \includegraphics[width=0.8\linewidth]{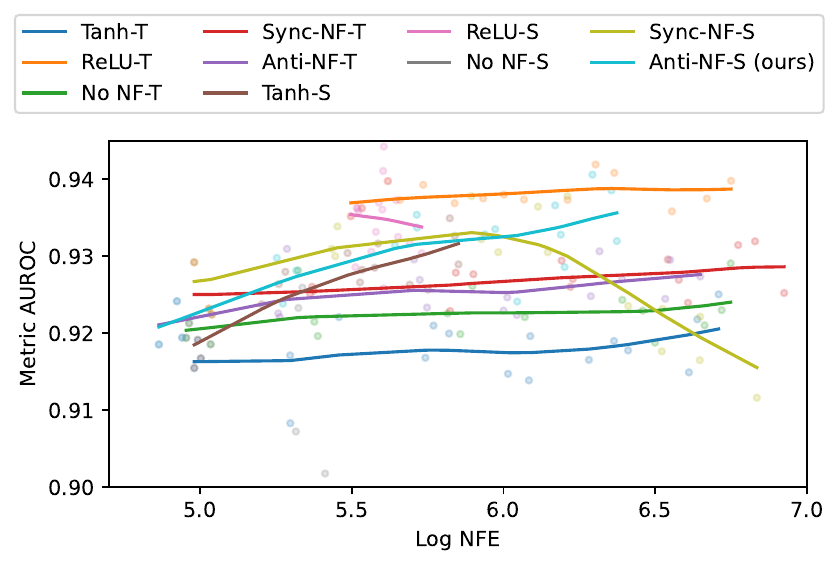}
    \caption{AUROC vs Log-NFEs on the \textbf{Sepsis} dataset for various methods of increasing NFEs (-T for lowering tolerance, -S for increasing time scale); for various vector fields (Tanh, ReLU~---~MLP with Tanh and ReLU activations respectively; No NF~---~vanilla GRU vector field, Sync NF~---~GRU-ODE vector field, Anti NF~---~our version). The curves were drawn via Radial Basis Function interpolation.}
    \label{fig:nfe-graphs-sepsis}
\end{figure}

\begin{figure}
    \centering
    \includegraphics[width=0.8\linewidth]{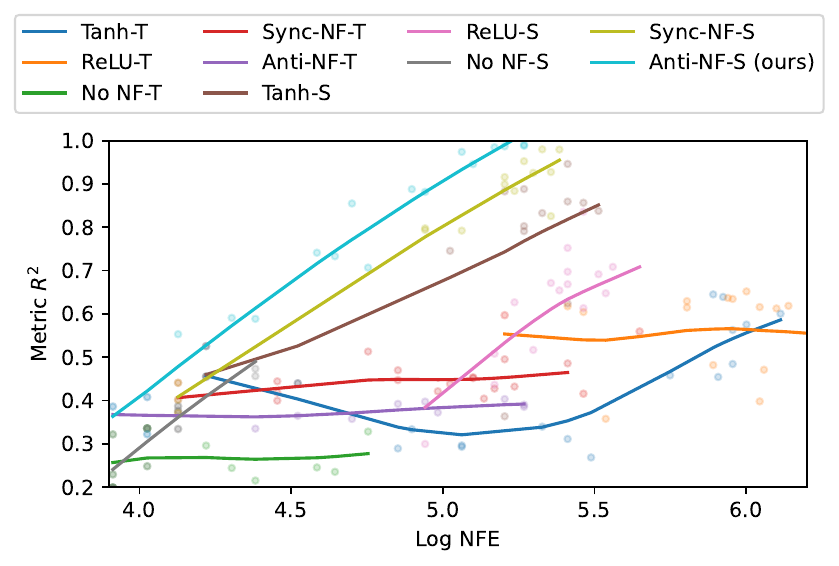}
    \caption{$R^2$ vs Log-NFEs on the \textbf{Sine-2} dataset for various methods of increasing NFEs (-T for lowering tolerance, -S for increasing time scale); for various vector fields (Tanh, ReLU~---~MLP with Tanh and ReLU activations respectively; No NF~---~vanilla GRU vector field, Sync NF~---~GRU-ODE vector field, Anti NF~---~our version). The curves were drawn via Radial Basis Function interpolation.}
    \label{fig:nfe-graphs-sine2}
\end{figure}

\begin{table}[tbh]
    \centering
    \begin{tabular}{lrrrr}\toprule
&Pendulum &Sepsis &Sine-2 \\\midrule
Tanh-T &0.697 $\pm$ 0.007 &0.919 $\pm$ 0.002 &0.6 $\pm$ 0.04 \\
ReLU-T &-20.0 $\pm$ 20.0 &\textbf{0.94 $\pm$ 0.002} &0.58 $\pm$ 0.08 \\
No NF-T &0.68 $\pm$ 0 &0.923 $\pm$ 0.003 &0.27 $\pm$ 0.05 \\
Sync-NF-T &0.68 $\pm$ 0.04 &0.93 $\pm$ 0.004 &0.47 $\pm$ 0.05 \\
Anti-NF-T &0.69 $\pm$ 0.03 &0.927 $\pm$ 0.003 &0.39 $\pm$ 0.02 \\\midrule
Tanh-S &0.7 $\pm$ 0.09 &0.934 $\pm$ 0.002 &0.88 $\pm$ 0.06 \\
ReLU-S &-50.0 $\pm$ 60.0 &\ul{0.9364 $\pm$ 0.0007} &0.7 $\pm$ 0.1 \\
No NF-S &0.65 $\pm$ 0.02 &0.92 $\pm$ 0.01 &0.46 $\pm$ 0.01 \\
Sync-NF-S &\ul{0.77 $\pm$ 0.01} &\ul{0.934 $\pm$ 0.004} &\ul{0.96 $\pm$ 0.03} \\
Anti-NF-S &\textbf{0.79 $\pm$ 0.03} &\ul{0.937 $\pm$ 0.005} &\textbf{0.988 $\pm$ 0.002} \\
\bottomrule
\end{tabular}
    \caption{Best results of each vector field in the ablation study. The winner is highlighted in bold, the second-best is underlined; if the distance to (second-)best is less than half the joint variance ($\sqrt{\sigma_1^2 + \sigma_2^2}$), the result is highlighted in the same fashion.}
    \label{tab:ablation-quality}
\end{table}

\subsection{Negative Feedback Strength}
\label{ap:nf-strength}

In this section, we analyse whether the equation~\eqref{eq:strong_nf} holds for Anti-NF or Sync-NF in practice.
Specifically, we will use the following lemma:
\begin{lemma}
    If the Jacobian $J \triangleq \left( \frac{\partial \mbf_i}{\partial \bh_j}\right)_{i,j}$ satisfies:
    $$
        \frac{\| J \bh \|}{\| \bh \|} \le L_h,\; \forall \bh \in \cH,
    $$
    and the set~$\cH$ is connected, then the function~$\mbf$ is $L_h$-Lipschitz on~$\cH$.
\end{lemma}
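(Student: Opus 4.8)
The plan is to reduce the multivariate Lipschitz estimate to a one‑dimensional statement via the fundamental theorem of calculus along a path joining the two points. First I would reformulate the hypothesis in the form actually needed: the bound $\|J\bh\|/\|\bh\|\le L_h$ says exactly that the operator (spectral) norm of the Jacobian is at most $L_h$ at every point of $\cH$, i.e. $\|J(\bh)\bv\|_2\le L_h\|\bv\|_2$ for every direction $\bv$ and every $\bh\in\cH$. I would also record the standing regularity we rely on — that $\mbf$ is $C^1$ on a neighbourhood of $\cH$, which holds for the GRU‑derived vector fields we consider — so that the chain rule and the fundamental theorem of calculus apply.

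Next, for $\bh_1,\bh_2\in\cH$ whose connecting segment lies in $\cH$ (the convex case), I would set $\phi(s)\triangleq\mbf\big(\bh_2+s(\bh_1-\bh_2)\big)$ for $s\in[0,1]$. The chain rule gives $\phi'(s)=J\big(\bh_2+s(\bh_1-\bh_2)\big)(\bh_1-\bh_2)$, so the operator‑norm bound yields $\|\phi'(s)\|_2\le L_h\|\bh_1-\bh_2\|_2$ for every $s$. Writing $\mbf(\bh_1)-\mbf(\bh_2)=\int_0^1\phi'(s)\,ds$ and using the triangle inequality for vector‑valued integrals gives $\|\mbf(\bh_1)-\mbf(\bh_2)\|_2\le\int_0^1\|\phi'(s)\|_2\,ds\le L_h\|\bh_1-\bh_2\|_2$, which is the claim.

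For a general connected (and, as in our applications, open) $\cH$, I would invoke the standard fact that a connected open subset of $\RR^v$ is polygonally path‑connected: join $\bh_1$ and $\bh_2$ by a polygonal path $\gamma$ inside $\cH$, apply the segment estimate above on each straight piece, and sum to obtain $\|\mbf(\bh_1)-\mbf(\bh_2)\|_2\le L_h\,\mathrm{length}(\gamma)$, hence $\|\mbf(\bh_1)-\mbf(\bh_2)\|_2\le L_h\,d_\cH(\bh_1,\bh_2)$ where $d_\cH$ is the intrinsic geodesic distance in $\cH$. When $\cH$ is convex, $d_\cH$ coincides with the Euclidean distance and we recover genuine $L_h$‑Lipschitzness.

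The main obstacle is precisely this last point: connectedness alone controls only the intrinsic distance, not the ambient Euclidean one, so the clean $L_h$‑Lipschitz conclusion needs $\cH$ to be convex (or at least to have a uniformly bounded ratio of geodesic to Euclidean distances). In every use we make of the lemma the relevant domain — a ball containing the hidden trajectory, or all of $\RR^v$ — is convex, so this gap is harmless; I would therefore state the lemma for convex $\cH$ (or flag this restriction explicitly) to keep the conclusion exactly in the form applied.
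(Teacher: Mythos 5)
Your argument is the standard one and, as far as the paper goes, the same one: the paper's entire proof is a one-line citation of ``the generalized mean-value theorem for vector-valued functions with vector arguments'', and your segment/polygonal-path integration of $\phi'(s)=J(\cdot)(\bh_1-\bh_2)$ is exactly what that citation unpacks to. Your further observation that connectedness alone only yields a Lipschitz bound with respect to the intrinsic (geodesic) distance of $\cH$, so that convexity (or a bounded geodesic-to-Euclidean distortion) is what is really needed for the Euclidean conclusion, is a correct refinement that the paper glosses over.

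There is, however, a gap at your very first step. You assert that the hypothesis $\|J\bh\|/\|\bh\|\le L_h$ for all $\bh\in\cH$ ``says exactly'' that the spectral norm of $J(\bh)$ is at most $L_h$ at every point. It does not: as written --- and as the quantity is actually measured in Appendix~\ref{ap:nf-strength}, where $J$ is evaluated at the current state $\bh$ and applied to that same vector $\bh$ --- the hypothesis controls $J(\bh)$ only in the single direction $\bh$, a Rayleigh-quotient-type bound at one vector per point. Your chain-rule step needs $\|J(\cdot)(\bh_1-\bh_2)\|\le L_h\|\bh_1-\bh_2\|$, i.e.\ control of $J$ in the chord direction, which the literal hypothesis does not supply. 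Indeed the literal statement is false: take $\mbf(\bh)=\sin(N\arg\bh)\,\mathbf{e}$ on an annulus, for which the radial derivative vanishes, so $\|J(\bh)\bh\|=0$ everywhere, yet the Lipschitz constant is of order $N$. So either the hypothesis must be upgraded to a genuine operator-norm bound $\sup_{\bv\ne 0}\|J(\bh)\bv\|/\|\bv\|\le L_h$ --- under which your proof is complete --- or the conclusion does not follow; the paper's one-line proof implicitly makes the same upgrade.
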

It follows from the generalized mean-value theorem (for vector-valued functions with vector arguments).

Consequently, we need to test:
\begin{align}
    \text{For Sync-NF}:&\;\frac{\| J \bh \|}{\| \bh \|} \le 1; \label{eq:sync-nf-strength} \\
    \text{For Anti-NF}:&\;\frac{\| J \bh \|}{\| \bh \|} a \le b.\label{eq:anti-nf-strength}\;
\end{align}

For Anti-NF, the expression contains the values~$a,b$, which we model with the component-wise mean of~$1 - \mathbf{z}, \mathbf{z}$, respectively.
We refer to the left-hand sides of~\eqref{eq:sync-nf-strength},~\eqref{eq:anti-nf-strength} as the update strengths, and the right-hand side as the NF strengths.
In these terms, we need to test whether the update strength is less than the NF strength.

Figures~\ref{fig:scaling-sync},~\ref{fig:scaling-anti} present our results.
The Sync-NF model satisfies our assumptions throughout the trajectory: the update strength is constantly less than 1.
The Anti-NF model mostly adheres to~\eqref{eq:anti-nf-strength}. 
However, due to significant variances, the vector field must not always be subject to~\eqref{eq:anti-nf-strength}, allowing it to disable the NF effect at will.

\begin{figure}
\begin{floatrow}[2]
    \ffigbox
        {\includegraphics[width=\linewidth]{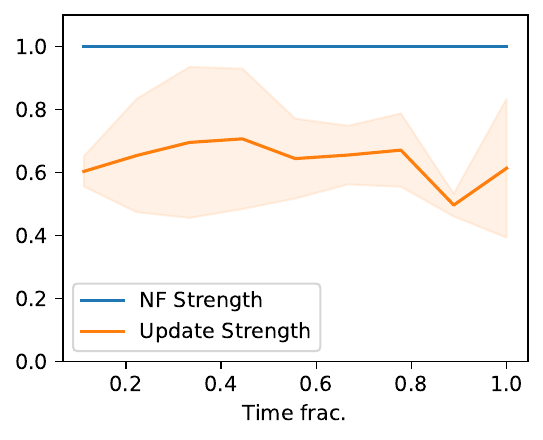}}
        {\caption{NF and update strength vs fraction of total time passed for the Sync-NF vector field. 
        A constant-1 line represents the NF strength, while for the update strength, we average the results over five runs, illustrating variance via tinting.}
        \label{fig:scaling-sync}}
    \ffigbox
        {\includegraphics[width=\linewidth]{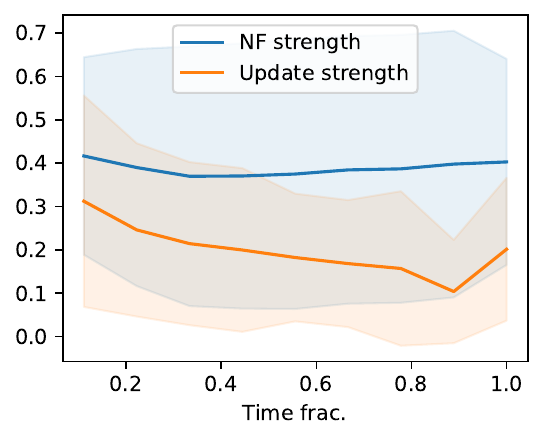}}
        {\caption{NF and update strength vs fraction of total time passed for the Anti-NF vector field.
        The NF and the update strengths are calculated by averaging over the corresponding values, so we provide variance via tinting for both of them.
        }
        \label{fig:scaling-anti}}
\end{floatrow}
\end{figure}

\subsection{Forecasting}
\label{ap:forecasting}

Our forecasting task consists of both interpolation and extrapolation, in a simplified setting:
\begin{itemize}
    \item \textbf{Extrapolation.} The second half of the input sequence is discarded;
    \item \textbf{Interpolation} In the first half, every second element is also discarded. 
\end{itemize}

The task is to reconstruct the discarded elements.
Each element is a real number, so forecasting is a regression task in our case.
We measure the quality of the solutions using the $R^2$ metric.
Specifically, we consider two forecasting datasets: Pendulum-Angles and Sine-2.

\subsubsection{Pendulum Angles}
\label{ap:pendangles}

The Pendulum-Angles dataset is the task of forecasting the angle of the pendulum, generated similarly to our synthetic Pendulum dataset.
The specifics of the original Pendulum dataset were described in Appendix~\ref{ap:data}.
To compare our models' ability to handle irregular data, we consider two versions of observation time sampling: with observations on a regular grid or appearing according to a Poisson random process.

\begin{table}[tbh]
    \centering
    \begin{tabular}{lrrrrrr}\toprule
Dataset &DeNOTS &Neural CDE &Latent ODE &RoFormer &TempFormer \\\midrule
Irregular &\textbf{0.994 $\pm$ 0.001} &\ul{0.985 $\pm$ 0.003} &0.981 $\pm$ 0.002 &0.979 $\pm$ 0.001 &0.961 $\pm$ 0.005 \\
Regular &\ul{0.996 $\pm$ 0.001} &\textbf{0.998 $\pm$ 0.001} &\ul{0.996 $\pm$ 0.001} &0.99 $\pm$ 0.001 &0.971 $\pm$ 0.003 \\
\bottomrule
\end{tabular}
    \caption{Results on the Pendulum-Angles dataset.}
    \label{tab:pendangles}
\end{table}

The results are presented in Table~\ref{tab:pendangles}.
DeNOTS is best at handling irregular sampling intervals.
However, when times are sampled regularly, Neural CDE slightly outperforms our method.

\subsubsection{Sine-2 Dataset}
\label{ap:sine2}
The Sine-2 dataset is a synthetic dataset for forecasting, with each sequence the sum of two sine waves with different frequencies, as illustrated by Figure~\ref{fig:sine2-sample}.

\begin{figure}
    \centering
    \includegraphics[width=0.8\linewidth]{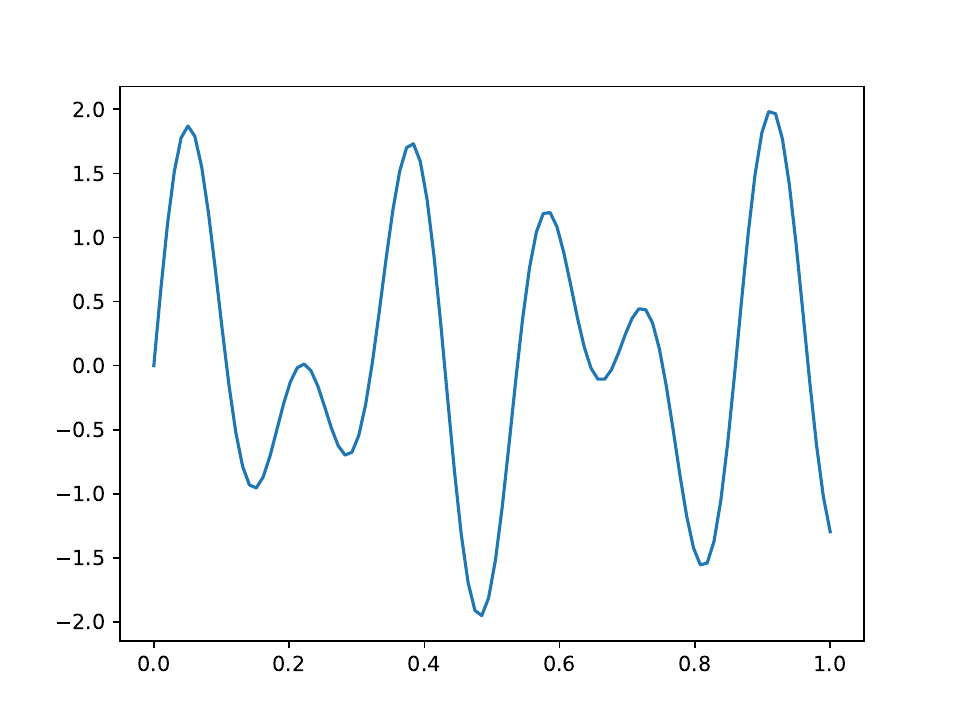}
    \caption{A sample from the Sine-2 dataset.}
    \label{fig:sine2-sample}
\end{figure}

This dataset is convenient because it allows us to single out expressivity: we use it in our NFE-Metric correlation experiments.
Although it is evidently rather difficult, it is perfectly solvable since it does not contain any noise.
Consequently, any increase in expressivity must directly cause an increase in metrics.
This is precisely what we observe in Table~\ref{tab:p-corr} from the main text, and in Table~\ref{tab:s-corr} and Figure~\ref{fig:nfe-graphs-sine2} from Appendix~\ref{ap:nfe-cor}.

\subsection{Full attack results}
\label{ap:attack}
Here, we provide the complete graphs for both the Change and the Drop attack from Table~\ref{tab:attack}.
They are presented by Figures~\ref{fig:drop-attack},~\ref{fig:change-attack}.
The conclusions are the same as those we provide in the main text, made only more evident by the dynamic.
The attacks were each repeated for five different seeds, for five versions of the weights, totalling 25 runs per point, and then averaged over.
Using this, we also provide the variance via tinting.

\begin{figure}[h]
    \centering
    \includegraphics[width=0.8\linewidth]{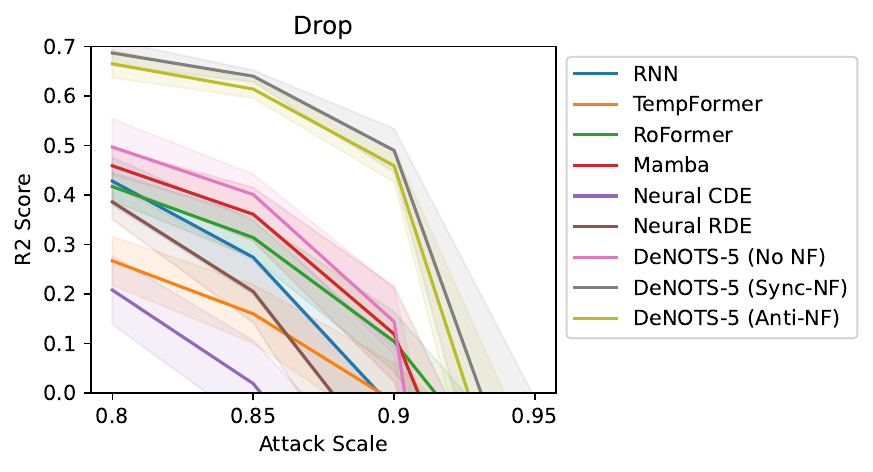}
    \caption{
    $R^2$ vs fraction of dropped tokens on the Pendulum dataset, on the test set. The dropped tokens are replaced with NaNs. The models are very robust to these attacks, probably because the initial sequences contained NaNs.
    }
    \label{fig:drop-attack}
\end{figure}

\begin{figure}[tbh]
    \centering
    \includegraphics[width=0.8\linewidth]{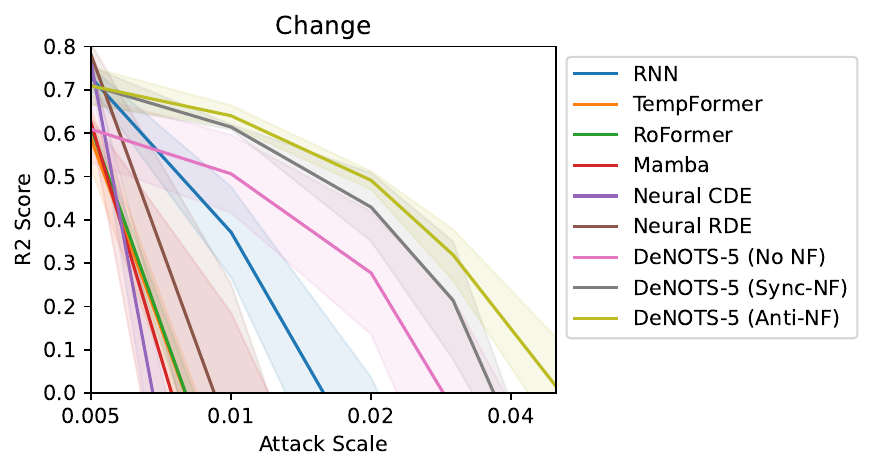}
    \caption{$R^2$ vs fraction of tokens, changed to standard Gaussian noise, on the Pendulum dataset, for the test split. We use the log-scale for the x-axis, because all the considered models are very sensitive to these attacks.}
    \label{fig:change-attack}
\end{figure}

Additionally, we test how the value of~$D$ affects the robustness of our models.
This is presented in Figures~\ref{fig:drop-vard},~\ref{fig:change-vard}.
Increasing depth lowers robustness, which is in line with the consensus.
Prior work indicates that the more accurate models with higher metrics often perform worse under adversarial attacks than those with lower metrics~\cite{su2018robustness}.
Increased accuracy often requires increased expressiveness and sensitivity, which directly impacts robustness.
Consequently, this demonstration supports our claim that larger values of~$D$ correspond to more expressive models.

\begin{figure}[tbh]
\begin{floatrow}[2]
    \ffigbox{
    \includegraphics[width=\linewidth]{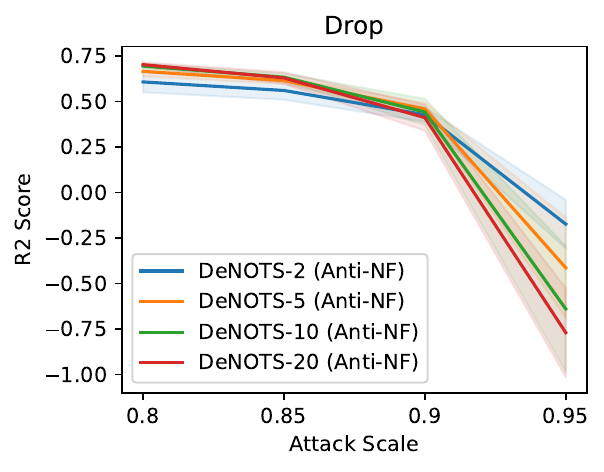}
    }{
    \caption{
    $R^2$ vs fraction of dropped tokens on the Pendulum dataset, on the test set, for various values of~$ D$. The dropped tokens are replaced with NaNs.
    }
    \label{fig:drop-vard}
    }
    \ffigbox{
    \includegraphics[width=\linewidth]{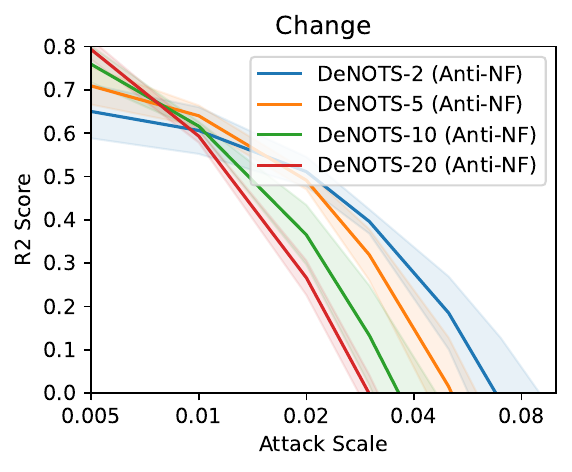}
    }{
    \caption{$R^2$ vs fraction of tokens, changed to standard Gaussian noise, on the Pendulum dataset, for the test split, for various values of~$D$. We use the log scale for the x-axis because all the considered models are susceptible to these attacks.}
    \label{fig:change-vard}
    }
\end{floatrow}
\end{figure}

\subsection{Computational Resources}
Due to the small hidden sizes (32) and limited number of layers (1 in most cases), all the considered models occupy little video memory and fit on an Nvidia GTX 1080 Ti.
All the datasets fit into RAM (occupying no more than several Gb on disk).
We set a limit of 200Gb RAM in our Docker container, however a significantly smaller one would do (we estimate 32Gb should be enough).
We also allocate 16 CPUs for our container, but since most of the training happens on a GPU, these are not necessary, one could make do with 4-8 cores.

Each experiment takes from a few minutes to a few hours of compute time, depending on hyperparameters (specifically,~$D$ and tolerance for DeNOTS) and the dataset (Pendulum being the most expensive).
Overall, we estimate that approximately 100 experiments need to be performed to reproduce our results, with a mean time of 30m, which translates to $\sim 50$ hours of compute on an Nvidia 1080Ti in total.
Our cluster houses 3 such video cards, allowing us to perform the computations in parallel, speeding up the process.

Notably, a substantial amount of experiments were not included in the final version due to hypothesis testing and bugs, so the real compute is closer to~$\sim 500$ hours.


\end{document}